\def\maxwidth{ %
  \ifdim\Gin@nat@width>\linewidth
    \linewidth
  \else
    \Gin@nat@width
  \fi
}
\newcolumntype{.}{D{.}{.}{-1}}
\newcolumntype{d}[1]{D{.}{.}{#1}}
\theoremstyle{definition}
\newtheorem{theorem}{Theorem}[section]
\newtheorem{lemma}[theorem]{Lemma}
\newtheorem{corollary}[theorem]{Corollary}
\newtheorem{assumption}{Assumption}
\newcommand\indep{\protect\mathpalette{\protect\independenT}{\perp}}
\DeclareMathOperator{\argmin}{arg\min}
\def\independenT#1#2{\mathrel{\rlap{$#1#2$}\mkern2mu{#1#2}}}
\newcommand{\R}{\ensuremath{\mathbb{R}}}
\newcommand{\bbone}{\ensuremath{\mathbbm{1}}}
\newcommand{\E}{\ensuremath{\mathbb{E}}}
\newcommand{\symm}{\text{symm}}
\newcommand{\argmax}{\text{argmax}}
\def\b1{\boldsymbol{1}}
\def\spacingset#1{\renewcommand{\baselinestretch}%
{#1}\small\normalsize} \spacingset{1}
\newcommand\edit{}
\begin{document}

\pagestyle{plain}

\newcommand{\blind}{0}

\newcommand{\tit}{\LARGE Policy Learning with Asymmetric
  Counterfactual Utilities}

\if0\blind

{\title{\tit\thanks{
  We acknowledge the partial support from Cisco Systems, Inc. (CG
  \#2370386), National Science Foundation (SES--2051196), Sloan
  Foundation (Economics Program; 2020--13946), National Natural Science of China (Grant No. 12371285, 12292984), and Fundamental Research Funds for the Central Universities, Sun Yat-sen University (Grant No. 23hytd010).  We also thank the
  IQSS's Alexander and Diviya Magaro Peer Pre-Review Program for feedback.}}
  \author{Eli Ben-Michael\thanks{Assistant Professor, Department of Statistics \& Data Science and Heinz College of Information Systems \& Public Policy, Carnegie Mellon University. 4800 Forbes Avenue, Hamburg Hall, Pittsburgh PA 15213.  Email: \href{mailto:ebenmichael@cmu.edu}{ebenmichael@cmu.edu} URL:
  \href{https://ebenmichael.github.io}{ebenmichael.github.io}} 
  \and  Kosuke
      Imai\thanks{Professor, Department of Government and Department of
        Statistics, Harvard University.  1737 Cambridge Street,
        Institute for Quantitative Social Science, Cambridge MA 02138.
        Email: \href{mailto:imai@harvard.edu}{imai@harvard.edu} URL:
        \href{https://imai.fas.harvard.edu}{https://imai.fas.harvard.edu}}
      \and Zhichao Jiang\thanks{Professor, School of mathematics, Sun Yat-sen University,
        Guangzhou Guangdong 510275. Email:
        \href{mailto:jiangzhch7@mail.sysu.edu.cn}{jiangzhch7@mail.sysu.edu.cn} URL:
        \href{https://zhichaoj-git.github.io}{https://zhichaoj-git.github.io}}}

\date{\today}

\maketitle
}\fi

\if1\blind
\title{\bf \tit}
\maketitle
\fi

\begin{abstract}

  Data-driven decision making plays an important role even in high
  stakes settings like medicine and public policy.  Learning optimal
  policies from observed data requires a careful formulation of the
  utility function whose expected value is maximized across a
  population.  Although researchers typically use utilities that
  depend on observed outcomes alone, in many settings the decision
  maker's utility function is more properly characterized by the joint
  set of potential outcomes under all actions.  For example, the
  Hippocratic principle to ``do no harm'' implies that the cost of
  causing death to a patient who would otherwise survive without
  treatment is greater than the cost of forgoing life-saving
  treatment.  We consider optimal policy learning with asymmetric counterfactual
  utility functions of this form that consider the joint set of potential
  outcomes.  We show that asymmetric counterfactual utilities
  lead to an unidentifiable expected utility function, and so we first
  partially identify it.  Drawing on statistical decision theory, we
  then derive minimax decision rules by minimizing the maximum expected utility loss
  relative to different alternative policies.  We show that one can learn
  minimax loss decision rules from observed data by solving intermediate
  classification problems, and establish that the finite sample
  excess expected utility loss of this procedure is bounded by the
  regret of these intermediate classifiers.  We apply this conceptual
  framework and methodology to the decision about whether or not to
  use right heart catheterization for patients with possible pulmonary
  hypertension.
  
  \end{abstract}
  
  \begin{center}
  \noindent Keywords:
  {Hippocratic oath, minimax regret, partial
  identification, policy learning, principal stratification}
  \end{center}
  
\clearpage
\spacingset{1.8} 

\section{Introduction}
\label{sec:introduction}

The well-known Trolley Problem in ethics goes as follows:
\begin{quote}
  \singlespace
  Edward is the driver of a trolley, whose brakes have just failed. On
  the track ahead of him are five people; the banks are so steep that
  they will not be able to get off the track in time. The track has a
  spur leading off to the right, and Edward can turn the trolley onto
  it. Unfortunately there is one person on the right-hand
  track. Edward can turn the trolley, killing the one; or he can
  refrain from turning the trolley, killing the five
  \citep[][p. 206]{jarv:76}.
\end{quote}
Should Edward turn the trolley?  Is killing someone worse than letting
them die?  Such ethical dilemmas frequently confront us in moral and
legal debates concerning various issues that range from abortion to
self-driving cars \citep[e.g.,][]{foot:67,lin:16}.  Similarly, in the
ethics of modern medicine, the Hippocratic principle of ``do no harm''
remains influential \citep[e.g.,][]{jonsen1978,smit:05,wiens2019}.  In
the language of utility theory, a physician may assign a utility loss
of greater magnitude to the case where a new drug harms a patient than
to the case where not providing the new drug leads to the failure to
save a patient \citep[e.g.,][]{bord:09}.

These examples illustrate the potential applications of
\emph{asymmetric counterfactual} utilities
that depend not only on the observed outcome, but also on the counterfactual
outcome that could occur under a different action, and treat actions
differently depending on their corresponding potential outcomes.
Yet, to the best of
our knowledge, the existing literature on data-driven decision making
and algorithmic policy learning assumes that the decision maker's
utility function only depends on the observed outcome.

In this paper, we develop the methodological framework for optimal
policy learning with asymmetric counterfactual utilities, which includes standard utilities based on marginal outcomes as a special case.
We show that in general,
asymmetric counterfactual utilities lead to an unidentifiable expected utility
function.  Therefore, we partially identify the expected utility and propose to
minimize the maximum expected utility loss relative to a
particular comparison policy.  We consider the maximum expected utility loss relative
to constant policies such as always-treat and never-treat policies as
well as the oracle policy that has complete knowledge of the unidentifiable
terms in the expected utility function.
We demonstrate that one can learn minimax decision
rules from observed data by solving intermediate classification
problems.  We also establish that the finite sample regret of this
procedure is bounded by regret of these intermediate classifiers.

We use this framework to re-assess the use of Right Heart
Catheterization (RHC), an invasive diagnostic tool
\citep{Connors1996}.  We learn decision rules based on clinical
variables as we vary the asymmetry in the costs between
failing to prevent a patient's death and causing it via RHC.  These
decision rules differ depending on whether we minimize the worst-case
expected utility loss relative to a constant policy (always or never
using RHC) or the oracle policy that uses RHC optimally.  We inspect
how the choice of utility function and comparator affect the learned
decision rules, finding substantial variability based on these
choices.  Finally, we translate these findings into directly
interpretable patient outcomes, exhibiting a trade-off between limiting
the worst-case proportions of patients that the policy harms or fails to save.

\paragraph{Related Literature.} 
Recent years have seen an increased interest in algorithmic policy
learning from randomized control trials or observational data. Many of
these approaches follow a similar structure. First, quantify the
expected utility of a policy based on the marginal distributions of the
potential outcomes. Then, show how to identify the expected utility or
regret from
observable data and find a policy that optimizes an empirical analog.
These approaches typically use inverse propensity score
weighting or double-robust methods for the identification and
estimation steps \citep[see ][among others]{Zhao2012,Kitagawa2018,
  Athey2021}.  There is also a related literature that focuses on
identifying and estimating optimal policies in settings with
unmeasured confounding via instrumental variables
\citep[see][]{Cui2021_iv,Qiu2021_iv}.

More immediately relevant to our discussion here, recent work builds
off classical ideas in decision theory and treatment choice
\citep[e.g.][]{Manski2004, Manski2005_partial,Manski2011} and
considers scenarios where we cannot point identify the expected utility
function for possible policies.  One strand of work considers choosing
between two treatments or fixed decision rules based on a finite sample of
data, when treatment effects are partially identified
\citep[e.g.][]{Stoye2012,Ishihara2021,Yata2021}.
This work typically involves directly solving an empirical minimax
regret problem, but does not consider optimization over classes of
individualized policies.

In contrast, another line of work
considers learning optimal individualized decision rules in situations where
treatment effects are only partially identified, using an empirical
risk minimization approach.
These include settings with unmeasured confounding
\citep[e.g.,][]{Kallus2021,Pu2021, Han2021_partial,Cui2021_partial} or
limited overlap between different treatment conditions
\citep[e.g.,][]{benmichael2021_safe,Zhang2022_safe}.
\citet{DAdamo2023} considers a general setup where the conditional
expected potential outcomes and treatment effects are partially
identified.
These approaches take a minimax approach at the population level,
deriving the population-level minimum expected utility or maximum regret.
They then treat the population-level maximum regret or
negative minimum expected utility as a risk, and use empirical risk minimization
approaches and propensity score weighting or double-robust methods as above.
Our work is in the vein,
estimating individualized treatment rules via empirical risk
minimization.  However, we consider a different setting where
treatment effects \emph{are} point identified, but the expected utility
function is partially identified because it is a function of the
proportion of units within each principal stratum---an unidentifiable
quantity under standard designs.

Finally, \citet{Babii2021} also consider asymmetric utilities, but only using
observed outcomes.  In contrast, we consider asymmetric {\it
  counterfactual} utilities, which depend on potential outcomes and is
a generalization of \citeauthor{Babii2021}'s approach (see
Appendix~\ref{app:asymmetric} for details).

\paragraph{Paper outline.}
The paper proceeds as follows.
Section \ref{sec:prelim} describes the goal of policy learning with
asymmetric counterfactual utilities and reviews the standard symmetric case.
Section \ref{sec:pop_asymm} discusses partial identification of the expected
utility function and the minimax population policies relative to different 
alternatives. Section \ref{sec:emp_asymm} then shows how to estimate such
policies from data.
Finally,
Section \ref{sec:app} applies this framework to the use of RHC, and
Section \ref{sec:discussion} concludes.

\section{Preliminaries}
\label{sec:prelim}

In this section, we introduce the notation and assumptions used
throughout this paper.  We also discuss the nature of asymmetric counterfactual
utilities before providing a brief review of policy learning with
symmetric utilities, which is a special case of our proposed
framework.

\subsection{Notation and assumptions}

Suppose that we have a simple random sample of $n$ units from a super
population $\mathcal{P}$ where each unit $i=1,\ldots,n$ has a set of
characteristics $X_i \in \mathcal{X}$.  We consider a binary treatment
assignment decision $D_i \in \{0,1\}$, which can be made by either
individual $i$ or a policy maker.  We assume that the outcome $Y_i$ is
binary with $Y_i = 1$ indicating a desirable outcome (e.g., survival)
and $Y_i = 0$ representing an undesirable outcome (e.g., death).
Under the assumption that there is only one version of treatment and
no interference across units, we have two binary potential outcomes
for each unit $i$ where $Y_i(d) \in \{0, 1\}$ represents the potential
outcome under the scenarios where the unit receives the decisions
$D_i = d$ for $d=0,1$.

The setup implies that the observed outcome for unit $i$ can be
written as $Y_i = D_i Y_i(1) + (1-D_i)Y_i(0)$ and the tuple of random
variables $\{X_i, D_i, Y_i(1), Y_i(0)\}$ is assumed to be
independently and identically distributed.  Importantly, under this
setting, each unit belongs to one of the four \textit{principal
  strata} defined by the values of the two potential outcomes, i.e.,
$(Y_i(1), Y_i(0)) = (y_1, y_0) \in \{(0, 0), (0, 1), (1, 0), (1,1)\}$
\citep{fran:rubi:02}.  For example, the principal stratum
$(y_1, y_0) = (1, 0)$ represents a group of units who would yield the
desirable outcome only when they are treated, i.e., $D_i = 1$, whereas
the principal stratum $(y_1, y_0) = (1, 1)$ indicates a group of units
whose outcome is desirable regardless of the treatment decision.
Since we never observe the two potential outcomes at the same time for
any given unit, it is impossible to know which principal stratum each
unit belongs to without additional assumptions.  

Throughout this paper, for notational simplicity, we will drop the
individual $i$ subscript in expressions involving expectations over
the distribution.  We will also assume the strong ignorability and
strict overlap assumptions for observational studies and randomized
control trials.
\begin{assumption}[Strong ignorablity and strict overlap]
  \label{a:ignore} \spacingset{1}
  $\{Y(1), Y(0)\} \indep D \mid X$ and there exists an $\eta > 0$ such that
  $\eta < d(x) < 1 - \eta$ for all $x \in \mathcal{X}$ where $d(x) \equiv \Pr(D = 1 \mid X = x)$
  represents the propensity score.
\end{assumption}

The assumption allows us to identify the expected potential outcome
under decision $d$ given covariates $x$, denoted as
$m(d,x) \equiv \E[Y(d) \mid X = x]$.  However, it is impossible to
identify the {\it principal score}, or the conditional probability of
belonging to a principal stratum given covariates, 
defined as
$e_{y_1y_0}(x) \equiv \Pr(Y(1) = y_1, Y(0) = y_0 \mid X = x)$, because
we do not observe the two potential outcomes at the same time for a
given unit \citep{ding:lu:17, jiang:20}.

\subsection{Asymmetric counterfactual utilities}
\label{sec:assymetric_utilities}

We focus on deterministic individualized policies
$\pi:\mathcal{X} \to \{0,1\}$ that assign a binary treatment decision
to individual units according to their characteristics
$X \in \mathcal{X}$.  To learn optimal policies from the observed
data, we consider a utility function $u(d; y_1, y_0)$ that encodes the
utility for taking treatment decision $d$ for a unit in principal
stratum $(y_1,y_0)$.  Crucially, this utility function depends on the
values of \emph{both} potential outcomes.  This contrasts with the
standard utility function $u(d; y)$, which only depends on the
realized potential outcome $Y_i(d)=y$ under the decision $d$.
We measure the overall quality of a policy $\pi$
 by
its expected utility (also called the \emph{value} or \emph{social
  welfare}),
\begin{align}
    V(\pi) & = \E\left[\sum_{y_1 = 0}^1\sum_{y_0 = 0}^1 \bbone\{Y(1) = y_1, Y(0) = y_0\} \left\{u(0; y_1, y_0)(1 - \pi(X)) + u(1; y_1, y_0)\pi(X)\right\}\right]\nonumber\\
    & = \E\left[\sum_{y_1 = 0}^1\sum_{y_0 = 0}^1 e_{y_1y_0}(X) \pi(X) 
      \left\{u(1; y_1, y_0) - u(0; y_1, y_0)\right\}\right]  + \E\left[\sum_{y_1 = 0}^1\sum_{y_0 = 0}^1   e_{y_1y_0}(X) u(0; y_1, y_0) \right].   \label{eq:welfare}
\end{align}

This setup lets the utility vary across different counterfactual
outcomes even when the treatment decision and the realized outcome are
the same, allowing for a richer specification of the decision problem.
For example, the disutility from assigning treatment to a patient that
is harmed by it (i.e., $Y(1) = 0$ and $Y(0) = 1$) can be larger than
the disutility from assigning treatment to a patient for whom it is
useless (i.e., $Y(1) = Y(0) = 0$), despite the fact that the realized
outcome $Y(1)$ is identical in both cases.  A standard utility
function does not distinguish between these two cases, assigning each
a value of $u(1, Y(1))$.  This utility function also allows for
asymmetry in the utility gain or loss from treating a unit across
principal strata.  Returning to the Hippocratic oath, we can choose
the utility function such that the absolute magnitude of the utility
loss for harming a patient through treatment
($|u(1; 0, 1) - u(0; 0, 1)|$) is greater than that of the utility gain
when the same treatment benefits another patient
($|u(1; 1, 0) - u(0; 1, 0)|$).

To encode this, and focus on key ideas, we parameterize the utility function
as follows:
\begin{enumerate}
\item[(i)] the utility gain associated with a ``useful treatment,''
  i.e., $(y_1,y_0)=(1,0)$, is $u_g - c_g\equiv u(1;1,0) - u(0; 1,0)$
  (e.g., treating with a drug that would benefit the patient)
\item[(ii)] the utility loss associated with a ``harmful treatment,''
  i.e., $(y_1,y_0)=(0,1)$, is $-u_l -c_l \equiv u(1;0,1) - u(0;0,1)$
  (e.g., treating with a drug that would harm the patient)
\item[(iii)] the utility loss of treating with a ``harmless
  treatment,'' i.e., $(y_1,y_0)=(1,1)$, is
  $-c_1 = u(1;1,1) - u(0; 1,1)$ (e.g., treating with a drug that would
  not harm the patient)
\item[(iv)] the utility loss of treating with a ``useless treatment,''
  i.e., $(y_1,y_0)=(0,0)$, is $-c_0 = u(1;0,0) - u(0; 0,0)$ (e.g.,
  treating with a drug that would not benefit the patient)
\end{enumerate}
The values $c_g,c_l,c_1,c_0$ denote the cost of administering the
treatment $d=1$ relative to not doing so $d=0$ in each of the four
principal strata. The values $u_g$ and $u_l$ represent the magnitude
of the utility gain and loss for administering a useful and harmful
treatment, respectively.  In this setting, these utility values are
known and fixed by the decision maker.  Utility functions of this and
more general forms have been considered in the literature on decision
theory \citep[see, e.g.,][]{Stefansson2015,Bradley2017}.  Our focus
is, however, on the estimation of individualized decision rules under
these asymmetric counterfactual utility functions.

\begin{table}[t!]
  \begin{center}
\begin{tabular}{cccc}
\hline
 & $Y_i(0)=1 $ &  $Y_i(0)=0$ \\ \hline 
\multirow{2}{*}{$Y_i(1)=1$} & Harmless & Useful \\
& $-c$ & $u_g-c$ \\ \cdashline{1-3} 
\multirow{2}{*}{$Y_i(1)=0$} & Harmful & Useless \\
  & $-u_l-c$ & $-c$ \\   \hline 
\end{tabular}
\caption{Asymmetric counterfactual utility gain/loss for treating each of
  the four principal strata, relative to not treating.
  Each cell corresponds to the principal stratum defined by the values
  of the two potential outcomes, $Y_i(1)$ and $Y_i(0)$.  Each entry
  represents the utility gain/loss of treatment assignment, relative
  to no treatment, for a
  unit that belongs to the corresponding principal stratum
  ($u(1;y_1, y_0) - u(0; y_1, y_0)$). 
  $c$ is the cost of treatment assignment. We assume that
  $u_l,u_g >0$ and  $c \geq 0$.  Symmetric utilities are a special
  case with $u_g=u_l$.  } \label{tab:utilities}
\end{center}
\end{table}

Throughout, we will assume that the costs are identical, i.e.,
$c_g = c_b = c_1 = c_0 = c$. In addition, we assume $u_g$ and $u_l$
are positive, and $c$ is non-negative.  Table~\ref{tab:utilities}
summarizes this asymmetric counterfactual utility structure.
Fixing the costs to be
identical amounts to restricting the utility loss from a harmless and
useless treatment to be equal. Without this restriction there will be
an additional asymmetry due to the different costs, which would not
affect our development, except to make the notation more cumbersome
and results less interpretable.

Note that our asymmetric counterfactual utilities include symmetric
utilities based on observed outcomes as special cases, thereby
generalizing the standard setting considered in the policy learning
literature.  In Appendix~\ref{app:asymmetric}, we further show that
although it is possible to construct asymmetric utilities without
using principal strata \citep{Babii2021}, doing so still implies some
restrictions on the structure of the resulting counterfactual utilities
and hence they are a special case of our framework.

We will primarily be comparing two policies rather than considering
one in isolation.  We begin by defining the \emph{expected utility
  loss} of policy $\pi$ relative to another policy $\varpi$ as the
difference in values, $V(\varpi) - V(\pi)$.  Using the relations
$m(1, x) = e_{11}(x) + e_{10}(x)$ and
$m(0,x) = e_{11}(x) + e_{01}(x)$, we show in Appendix \ref{sec:proofs}
that we can write the expected utility loss in a simplified form:
\begin{equation}
  \label{eq:regret}
  R_{e_{01}}(\pi, \varpi) \equiv V(\varpi) - V(\pi) = \E\left[(\varpi(X) - \pi(X))\{u_g \tau(X) + (u_g - u_l) e_{01}(X) - c\}\right].
\end{equation}
where $\tau(x) \equiv \E(Y(1)-Y(0) \mid X = x) = m(1,x)-m(0,x)$ is the
conditional average treatment effect (CATE) given the covariates
$X = x$.
Comparing two policies to each other allows us to leave the baseline
utility for not treating a unit in principal stratum $(y_1, y_0)$,
$u(0; y_1, y_0)$ unspecified. In Appendix~\ref{sec:maximin} we directly
consider the expected utility of a single policy, and connect choices of the
baseline utility to our discussion below.

Three components contribute to the expected utility loss in
Eqn~\eqref{eq:regret}. First, the difference in the expected treatment
effects for those treated under policies $\varpi$ and $\pi$, scaled by
the utility gain for a useful treatment $u_g$, represents a symmetric
component of the utility, where we compare the marginal benefits of
the policies.  The second component is an asymmetric adjustment term,
and relates to the probability of belonging to the principal stratum
for whom the treatment is harmful (i.e., $(y_1,y_0)=(0, 1)$). This can
counteract the marginal benefit of treatment and is scaled by the
difference between the utility gain for a useful treatment and the
loss for a harmful treatment, i.e., $u_g - u_l$.  The final term $c$
corresponds to the difference in the overall costs of the two
policies.

The first and third components, the difference in effects and costs,
are point identifiable under Assumption~\ref{a:ignore}.  The second
component, however, is only \emph{partially identifiable} due to the
unidentifiability of the principal score $e_{01}(\cdot)$.  We use the
$e_{01}$ subscript for the expected utility loss in Eqn~\eqref{eq:regret} to
signify this fact.  Therefore, we cannot pinpoint whether any policy
is superior to any other policy in general. The remainder of this
paper focuses on handling this ambiguity.

\subsection{Policy learning with symmetric utilities: A review}
\label{sec:symmetric}

Before discussing policy learning under asymmetric counterfactual  utility
functions, we briefly review policy learning with \emph{symmetric}
utilities --- a special case of our framework --- where the absolute
magnitude of the utility gain when the treatment leads to a desirable
outcome is equal to that of the expected utility loss when it leads to an
undesirable outcome, i.e., $u_g=u_l$.  In this case, a policy can make
up for the loss from harming some units by the gain from benefiting
other units.  This can be seen in the following simplified version of
the expected utility loss in Eqn~\eqref{eq:regret}:
\begin{equation}
  \label{eq:symmetric_regret}
  R_\symm(\pi, \varpi) = \E\left[\{\varpi(X) - \pi(X)\}\{u_g \tau(X) - c\}\right].
\end{equation}

The symmetric utility does not involve the principal score
$e_{10}(\cdot)$, and is identifiable under Assumption~\ref{a:ignore}.
Thus, under this setting, the oracle optimal policy that minimizes the expected utility loss
relative to any other policy is
$\pi^\symm(x) \equiv \bbone\{u_g\tau(x) \geq c\}$.
This oracle policy assigns the treatment to all individuals with
characteristics $x$ if their expected utility gain of assigning the
treatment relative to not assigning it at least makes up for its cost, i.e.,
$u_g\tau(x) \geq c$.  Note that this is equivalent to maximizing the value $V(\pi)$
directly.

Under Assumption~\ref{a:ignore}, therefore, we can write the symmetric
expected utility loss in Eqn~\eqref{eq:symmetric_regret} in terms of
the observed data by using a scoring function $\Gamma_w(x, d, y)$ such
that $\E\left[\Gamma_w(X, D, Y) \mid X = x\right] = m(w, x)$. For
example, the Inverse Probability-of-treatment Weighting (IPW) scoring
function
\edit{uses the IP weighting function $\gamma_w(D, X) \equiv \frac{wD}{d(X)} + \frac{(1-w)(1 - D)}{1 - d(X)}$ to
weight the observed outcome by the inverse probability of
receiving the decision $d$:
$ \Gamma_w^\text{ipw}(X, D, Y) = Y \gamma_w(D, X)$.
  }
   An alternative is the Doubly
Robust (DR) scoring function that combines the observed outcomes and
their conditional expectations: \edit{
$ \Gamma_w^\text{dr}(X, D, Y) = m(w, X) + \{Y - m(w,
X)\}\gamma_w(D, X).  $}
With such a scoring rule, we can then write the symmetric expected
utility loss function as:
$\E\left[\{\varpi(X) - \pi(X)\}\left\{u_g (\Gamma_1(X, D, Y) -
    \Gamma_0(X, D, Y)) - c\right\}\right]$, where the observable
quantity $\Gamma_1(X, D, Y) - \Gamma_0(X, D, Y)$ has replaced the
causal quantity $\tau(X)$. See \citet{Knaus2020_dml} for a recent
review.

In order to empirically find optimal policies from data, recent
approaches estimate the propensity score $\hat{d}(\cdot)$ and/or the
conditional expected potential outcome $\hat{m}(\cdot, \cdot)$ to
create estimated scores $\widehat{\Gamma}(X_i, D_i, Y_i)$.  For
example, we can estimate \edit{the IP weights as $\hat{\gamma}_w(D, X) \equiv \frac{wD}{\hat{d}(X)} + \frac{(1-w)(1 - D)}{1 - \hat{d}(X)},$  the IPW scoring function as
  $
    \widehat{\Gamma}_w^\text{ipw}(X, D, Y) \equiv Y \hat{\gamma}_w(X, D),
  $
  and the DR scoring function as 
  $
  \widehat{\Gamma}_w^\text{dr}(X, D, Y) \equiv \hat{m}(w, X) + \{Y - \hat{m}(w, X)\}\hat{\gamma}_w(X, D).
$}
Then, we solve the sample analog of
Eqn~\eqref{eq:symmetric_regret}.  This leads to finding policy
$\hat{\pi}$ that solves the following optimization problem:
\[
  \min_{\pi \in \Pi} \ -\frac{1}{n}\sum_{i=1}^n \pi(X_i) \left\{u_g\left(\widehat{\Gamma}_1(X_i, D_i, Y_i) - \widehat{\Gamma}_0(X_i, D_i, Y_i)\right) - c\right\},
\]
where $\Pi$ represents the \emph{policy class} and restricts the
functional form of potential policies.  \citet{Athey2021} establish
strong asymptotic guarantees on the regret of the empirical
$\hat{\pi}$ relative to the best-in-class policy when using the DR
approach with appropriately cross-fit models \citep[see
also][]{Zhao2012,Kitagawa2018}.

\section{Policy learning with asymmetric counterfactual utilities}
\label{sec:pop_asymm}

We now turn to the problem of finding optimal policies in the general
asymmetric case where $u_g \neq u_l$.  We will first consider the
identification problems in the population --- i.e., with infinite
data.  We then show how to learn policies empirically from observed
data in Section~\ref{sec:emp_asymm}.
In Appendix~\ref{sec:constrained}, we consider an alternative
formulation as a constrained optimization problem.

\subsection{The oracle policy with an asymmetric counterfactual  utility function}

We begin by considering the oracle policy in the general asymmetric
case.  By direct computation,
the \edit{(unconstrained)} policy that has the maximal possible value 
with an asymmetric counterfactual  utility
function is given by:
\begin{equation}
  \label{eq:asymmetric_oracle}
  \pi^{o} \equiv \underset{\pi}{\argmax} \; V(\pi) =  \bbone\left\{\tau(\cdot) \geq \frac{u_l - u_g}{u_g} e_{01}(\cdot) + \frac{c}{u_g}\right\}.
\end{equation}
\edit{We refer to this as the \emph{oracle} policy, because it has 
access to the unknown (and generally unknowable) principal scores}.
Unlike in the symmetric case,
\edit{this policy} includes the principal score $e_{01}$, which is
unidentifiable under Assumption~\ref{a:ignore}. Since
$0 \leq e_{01}(x) \leq 1$ for all $x$, the asymmetric oracle policy
uses a varying threshold for assigning the treatment where the
threshold depends on the principal score $e_{01}(X)$.

The way in which the oracle policy depends on the principal score is
characterized in part by the nature of the asymmetry in the utility
function. Consider the case where treatment is costless
($c = 0$). When $u_l > u_g$, causing the undesirable outcome by
assigning the treatment is considered worse than failing to prevent
such an outcome by not providing the treatment.
This raises the threshold for assigning the treatment because the
expected effect must be larger in order to compensate for the downside
risk of causing the undesirable outcome.  As a result, this biases the
oracle policy towards inaction. Conversely, when $u_l < u_g$, it is
better to cause the undesirable outcome than fail to prevent it by
inaction. In this case, the threshold for the treatment assignment is
lower, biasing the oracle policy towards action.

\begin{figure}[t!]
  \centering
        \begin{subfigure}[t]{0.45\textwidth}  
    {\centering \includegraphics[width=\maxwidth]{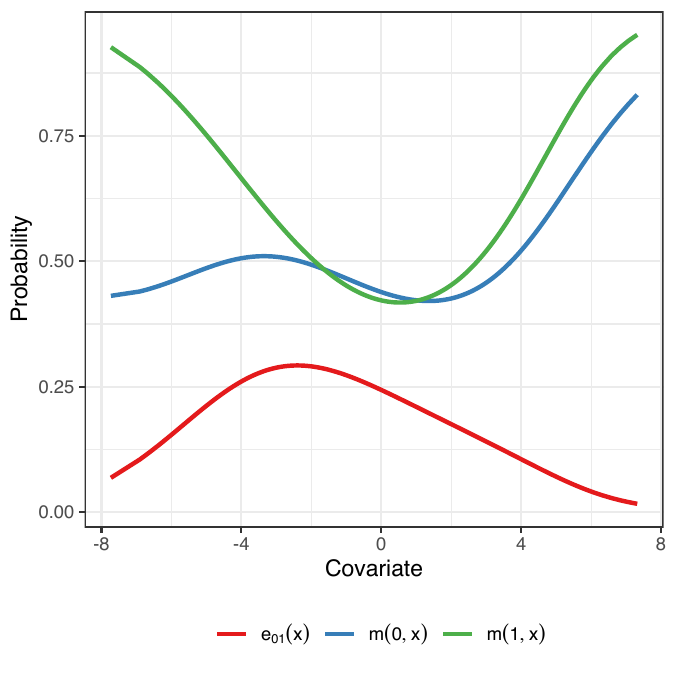} 
    }
    \vspace{-.2in}
    \caption{Setting}
    \label{fig:component_plot}
  \end{subfigure}
    \begin{subfigure}[t]{0.45\textwidth}  
  {\centering \includegraphics[width=\textwidth]{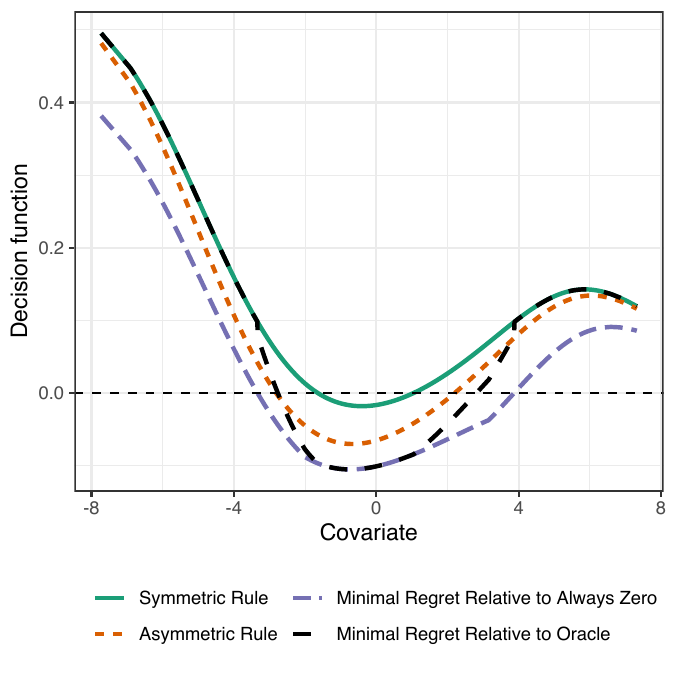} 
  }
  \vspace{-.2in}
    \caption{Decision rules}
    \label{fig:example_rules}
    \end{subfigure}\quad
    \caption{Example decision rules in a hypothetical example with a
      single covariate and cost $c = 0$. The left plot (a) presents
      the setting of the example where the values of the principal
      score $e_{01}(x)=\Pr(Y(1)=0, Y(0)=1 \mid X=x)$ are in red while
      the conditional expectations $m(d,x)=\E(Y(d) \mid X = x)$ are in
      blue for $d=0$ and green for $d=1$, respectively. The right plot
      (b) presents the decision rules corresponding to (i) the oracle
      in the symmetric case where $u_g = u_l$ (in green); (ii) the
      oracle in the asymmetric case where $u_g = 0.8$ and $u_l = 1$
      (in orange); (iii) the minimal expected utility loss solution relative to
      always not assigning the treatment $d=0$ (in purple); and (iv)
      the minimax regret solution relative to the oracle (in
      black). All rules have been transformed so that the policy takes
      decision 1 when the rule is greater than or equal to zero.}
  \label{fig:example}
\end{figure}

Figure~\ref{fig:component_plot} shows a one-dimensional example of
these decision rules where the cost is zero, $c = 0$.  The principal
score $e_{01}(x)$ is shown in red whereas the conditional expectations
$m(d,x)=\E(Y(d) \mid X=x)$ are shown in blue ($d=0$) and green
($d=1$).  Figure~\ref{fig:example_rules} shows the functions that make
up the decision rules in this example, centered so that the
corresponding policies assign $d = 1$ if the function is positive.
The symmetric case is shown in green, while the oracle in the
asymmetric case ($u_g=0.8$ and $u_l=1$) is in orange.  This plot also
shows two other solutions discussed in
Section~\ref{sec:max_regret_different_policies}; the minimal expected
utility loss solution relative to always not assigning the treatment
(purple), and the minimal regret solution relative to the oracle
(black).

In this example, providing the treatment $d=1$ leads to a higher
probability of the desirable outcome except near zero. Therefore, with
a symmetric utility function, the oracle policy would assign the
treatment in most cases (green). However, the asymmetric case is
different (orange). There is a region of the covariate space where the
principal score $e_{01}(x)$ is relatively high, leading to a
sufficiently high probability that the treatment causes the
undesirable outcome. Therefore, the asymmetric oracle rule has a
higher threshold for the treatment assignment, only providing the
treatment when the CATE $\tau(x)$ is large enough and the principal
score $e_{10}(x)$ is small enough.

\subsection{Partial identification and minimizing worst-case expected utility loss}
\label{sec:partial_identification_minimizing_worstcase}

Recall that the unidentifiability of the principal score $e_{01}(x)$
for any $x \in \mathcal{X}$ makes it impossible to identify the
expected utility loss in Eqn~\eqref{eq:regret} in the general
asymmetric case with $u_g \neq u_l$. However, we can \emph{partially
  identify} the principal score by deriving its sharp upper and lower
bounds, $L(x)$ and $U(x)$.  We then take a minimax approach, and find
the policy $\pi^\ast$ in the policy class $\Pi$ that minimizes the
maximal expected utility loss relative to an alternative policy
$\varpi$:
\begin{equation}
  \label{eq:minimax}
  \pi^\ast \in \underset{\pi \in \Pi}{\argmin}\  R_{\text{sup}}(\pi,
  \varpi) \quad \text{where} \quad R_{\text{sup}}(\pi, \varpi)  \ = \max_{e_{01}(x) \in [L(x), U(x)]} R_{e_{01}}(\pi, \varpi).
\end{equation}
Note that the maximum expected utility loss $R_{\text{sup}}(\pi, \varpi)$ is relative
to a \emph{particular} alternative policy $\varpi$, and is maximal
over all possible values for the principal score $e_{01}(x)$.
As we show below, the choice of this alternative policy will lead to
different objectives and optimal solutions \citep[see][for a recent
general discussion]{Cui2021_partial}.

Eqn~\eqref{eq:minimax} is an example of a treatment choice problem
under ambiguity \citep{Manski2005_partial,Manski2011}.  Such minimax
formulations of the problem have been widely considered in settings
where value functions depend on the marginal distribution of the
potential outcomes, but the CATE $\tau(x)$ is not point identified
\citep[e.g.][]{Manski2007,Stoye2012,Kallus2018,
  benmichael2021_safe,Ishihara2021,Yata2021,Zhang2022_safe,DAdamo2023}.
A key distinction between Eqn~\eqref{eq:minimax} and these other
problems, however, is that in our setting the value function depends
on the principal score $e_{10}(x)$, which is not point identified even
in randomized control trials.

To derive the sharp lower and upper bounds of the principal score, we
first define two classification functions:
\begin{equation}
  \label{eq:classification_funcs}
    \delta_+(x)  \ = \ \bbone\{m(0,x) + m(1,x) - 1 \geq 0\} \;\; \text{   and   } \;\; 
    \delta_\tau(x)  \ = \ \bbone\{\tau(x) \geq 0\}.
\end{equation}
Notice that the difference in the probability that both potential
outcomes are one and zero is given by
$e_{11}(x) - e_{00}(x) = m(0,x) + m(1,x) - 1$, which is the decision
function for classifier $\delta_+(x)$.  In other words, we have
$\delta_+(x) = 1$ if and only if $e_{00}(x) \leq e_{11}(x)$.  Thus, we
can view $\delta_+(x)$ as classifying whether there is a higher
probability that both potential outcomes are one rather than zero.
In contrast, noting that $\tau(x) = e_{10}(x) - e_{01}(x)$,
$\delta_\tau(x)$ classifies whether there is a higher probability
that the treatment is useful rather than harmful.
This corresponds to the symmetric oracle rule with cost
$c = 0$.

With these classifiers, we can use the Fr\'{e}chet bounds to find the
sharp lower and upper bounds for the principal score,
$e_{01}(x) \in [L(x), U(x)]$ for all $x$
\citep[e.g.][]{Heckman1997,jiang:16,Kallus2018}:
\begin{align}
    L(x) & \ = \ \max\{0, 1 - m(1, x) + m(0,x) -1 \} \ = \ \max\{0, -\tau(x)\} \ = \ -\tau(x)\{1 - \delta_\tau(x)\}\\
    U(x) & \ = \ \min\{m(0, x), 1 - m(1, x)\} \ = \ m(0, x) + \delta_+(x)\{1 - m(0,x) - m(1,x)\}.
\end{align}
These lower and upper bounds are sharp \citep{Ruschendorf1981} and are
point-identifiable from observable data.
With them we can create a point-identifiable objective.

\subsection{Worst case expected utility loss relative to different alternative policies}
\label{sec:max_regret_different_policies}

We now inspect the worst-case expected utility loss
$R_{\text{sup}}(\pi, \varpi)$ in Eqn~\eqref{eq:minimax} for different
choices of alternative policy $\varpi$. We consider three main
alternatives. First, the ``never-treat'' policy $\pi^\mathbbm{O}$ that
does not treat anyone, i.e., $\pi^\mathbbm{O}(x) = 0$ for all
$x$. Second, the ``always-treat'' policy $\pi^\mathbbm{1}$ that treats
everyone, i.e., $\pi^\mathbbm{1}(x) = 1$ for all $x$. In many cases,
the alternative to algorithmic decision making via a data-driven
policy is to take the same decision for everyone; thus these two
policies are of interest as they represent the standard of care in the
absence of an individualized policy (see Appendix~\ref{sec:maximin}
for connections to maximin policies that maximize the minimum expected utility).
We will denote the policies that minimize these worst-case losses as
$\pi^\ast_{\mathbbm{O}} \equiv \argmin_\pi R_{\sup}(\pi, \pi^\mathbbm{O})$ and
$\pi^\ast_{\mathbbm{1}} \equiv \argmin_\pi R_{\sup}(\pi, \pi^\mathbbm{1})$.

\edit{Finally, we consider the \emph{minimax regret policy} that
  minimizes the worst case expected utility relative to the value of
  the best-possible policy that has access to the principal scores
  $e_{01}(\cdot)$. Formally, the minimax regret policy is defined as
  $\pi^\ast_o \equiv \argmin_\pi \underset{e_{01}(x) \in [L(x),
    U(x)]}{\max} \max_{\pi'}R_{e_{01}}(\pi, \pi')$.  The definition of
  the oracle policy $\pi^o$ above implies that this is equivalent to
  choosing $\pi^o$ as the alternative policy.  That is,
  $\pi^\ast_o = \argmin_\pi R_{\sup}(\pi, \pi^o)$ where
  $R_\text{sup}(\pi, \pi^o) = \underset{e_{01}(x) \in [L(x),
    U(x)]}{\max} \max_{\pi'}R_{e_{01}}(\pi, \pi')$ is the
  \emph{regret} of policy $\pi$.}

Minimax regret policies are often studied in the policy learning
literature because alternatives, such as maximin policies, tend to be
too conservative \cite[see e.g.,][among many
others]{Manski2007,Manski2011,Stoye2012,Yata2021}.  \edit{Note that
  when defining the minimax regret policy across a constrained policy
  class, we compare to the best possible \emph{unconstrained} policy,
  i.e.,
  $\underset{\pi \in \Pi}{\argmin} \underset{e_{01}(x) \in [L(x),
    U(x)]}{\max} \max_{\pi'}R_{e_{01}}(\pi, \pi') = \underset{\pi \in
    \Pi}{\argmin} R_\text{sup}(\pi, \pi^o)$. The resulting policy will
  be different in general from the policy that minimizes the regret
  relative to the best-in class policy, and the unconstrained form of
  the regret will be larger.}

The following theorem shows that the worst-case expected utility loss relative to
each of these three policies takes a common form.

\begin{theorem}[Worst case expected utility loss] \spacingset{1}
  \label{thm:common_form}
  Let $\pi: \mathcal{X} \to \{0, 1\}$ be a deterministic policy.  For
  comparison policy $\varpi \in \{\pi^\mathbbm{O}, \pi^\mathbbm{1}, \pi^o\}$,
  the worst-case expected utility loss of $\pi$ relative to $\varpi$
  is
  \begin{equation}
    \label{eq:common_worstcase_regret}
    \begin{aligned}
      R_{\sup}(\pi, \varpi) & = C -\E\left[\pi(X)\left\{c_1^\varpi(X)m(1,X) + c_0^\varpi(X) m(0, X) + c^\varpi(X)\right\}\right]\\
      & =  C -\E\left[\pi(X)\left\{c^\varpi_1(X)\Gamma_1(X, D, Y) + c_0^\varpi(X) \Gamma_0(X, D, Y) + c^\varpi(X)\right\}\right],  
    \end{aligned}
  \end{equation}  
  where $C$ is a constant that does not depend on $\pi$, and $c_1^\varpi(\cdot), c_0^\varpi(\cdot),c^\varpi(\cdot):\mathcal{X} \to \R$ are functions
  that depend on $\delta_+(\cdot), \delta_\tau(\cdot), \pi^\ast_\mathbbm{O},$ or $\pi^\ast_\mathbbm{1}$.
\end{theorem}

The maximum expected utility loss objective in Theorem~\ref{thm:common_form} is a
weighted average of the expected potential outcomes under treatment
and no treatment plus a proxy for the cost. The choice of alternative
policy $\varpi$ determines these weights
$c^\varpi_0(\cdot), c^\varpi_1(\cdot)$ and cost $c^\varpi(\cdot)$, all
of which potentially vary with the covariates $X$; we give explicit formulas
for these functions in Appendix~\ref{sec:more_results}.
Note that the special case of a symmetric utility (Section~\ref{sec:symmetric})
is also of this form, with $c_1^\varpi(X) = -c_0^\varpi(X) = u_g$ and
$c^\varpi(X) = c$.  Similarly, the two
classifiers in Eqn~\eqref{eq:classification_funcs} have this
form, with $\delta_\tau$ corresponding to
$c_1^\varpi(X) = -c_0^\varpi(X) = 1$ and $c^\varpi(X) = 0$, and
$\delta_+$ corresponding to $c_1^\varpi(X) = c_0^\varpi(X) = 1$ and
$c^\varpi(X) = -1$.  

The second line of Eqn~\eqref{eq:common_worstcase_regret} shows how to
write the worst-case expected utility loss $R_{\sup}(\pi, \varpi)$ in
terms of observable data using the scoring functions $\Gamma_w$
(either IPW or DR) discussed in Section~\ref{sec:symmetric}.  So,
targeting the worst-case expected utility loss yields an objective
function that is identifiable, unlike the true expected utility
loss. As shown below, this allows us to construct decision rules based
on observable data that control the true expected utility loss by
minimizing the worst-case expected utility loss.

Constructing a utility function based on principal strata allows
decision makers to define their goals directly in terms of
individualized notions of useful and harmful treatments.
Nevertheless, Theorem~\ref{thm:common_form} shows that the minimax
expected utility loss problem reduces to a decision problem that only
involves the marginal distribution of the potential outcomes.  The
principal score $e_{01}(x)$ will not be involved in the remaining
estimation strategies and results, having been replaced with
point-identifiable upper and lower bounds.

However, the weighting and cost functions induced by the utility
function and choice of alternative policy correspond to a
covariate-dependent asymmetry in terms of the \emph{marginal}
potential outcomes.  Depending on the values of the nuisance
classifiers, Eqn~\eqref{eq:common_worstcase_regret} places more or
less weight on outcomes under treatment versus outcomes under
control. Thus, Eqn~\eqref{eq:common_worstcase_regret} is related to the
covariate-dependent loss minimization problem considered by
\citet{Babii2021} that depends on marginal outcomes, even though it
was derived from
placing an asymmetric counterfactual  utility on the principal strata. A key
distinction is that because Eqn~\eqref{eq:common_worstcase_regret}
involves the unknown nuisance classifiers, we must estimate the
corresponding loss function. We analyze the consequences of this in
Section~\ref{sec:theory_results}.

Finally, note that here we restrict to
\emph{deterministic} policies to derive the form of
the minimax expected utility loss in
Theorem~\ref{thm:common_form}. As \citet{Cui2021_partial} discusses, unlike
with the expected utility loss relative to the always treat or never
treat policies, allowing for \emph{stochastic} policies that randomize
between actions can lead to lower loss,
though this leads to a more complicated form. We leave further
understanding the implications for stochastic policies to future
work.

Next, we compute and inspect the policy that is the unconstrained
minimizer of the maximum expected utility loss in the population,
relative to each of the three alternative policies in turn. We will
then turn to estimating constrained policies in finite samples in
Section~\ref{sec:emp_asymm} below.

\subsubsection{Expected utility loss relative to a constant decision}
\label{sec:regret_constant}

We begin by considering the worst-case expected utility loss relative
to the never-treat policy.
\begin{corollary}[Minimax expected utility loss relative to the never-treat policy]
  \label{cor:regret_nobody} \spacingset{1}
  If $u_g \geq u_l$, the 
  solution to
  Eqn~\eqref{eq:minimax},
  $\pi^\ast_{\mathbbm{O}} \equiv \argmin_\pi R_{\sup}(\pi,
  \pi^\mathbbm{O})$ is the symmetric policy,
  \[
    \pi^\ast_\mathbbm{O}(x) = \mathbbm{1}\left\{\tau(x) \geq \frac{c}{u_g}\right\} = \pi^\symm(x).
  \]
  Otherwise, if $u_g < u_l$, it is given by,
  \[
    \pi^\ast_{\mathbbm{O}}(x) =
    \left\{
      \begin{array}{l r}
        \bbone\left\{m(1,x) \geq \frac{u_l}{u_g} m(0, x) + \frac{c}{u_g}\right\}, & \delta_+(x) = 0,\\
        \bbone\left\{m(1,x) \geq \frac{u_g}{u_l} m(0, x) + \frac{u_l - u_g + c}{u_l}\right\}, &   \delta_+(x) = 1.
      \end{array}\right.
  \]
\end{corollary}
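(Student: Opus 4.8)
The plan is to reduce the minimization to a pointwise calculation via the representation of $R_{\sup}(\pi,\pi^\mathbbm{O})$ from Theorem~\ref{thm:common_form}, and then simplify the resulting decision function in each regime. Since we seek the \emph{unconstrained} minimizer over all deterministic policies $\pi:\mathcal{X}\to\{0,1\}$, and Theorem~\ref{thm:common_form} gives $R_{\sup}(\pi,\pi^\mathbbm{O}) = C - \E[\pi(X)\,g(X)]$ with $g(x) \equiv c_1^{\pi^\mathbbm{O}}(x)m(1,x) + c_0^{\pi^\mathbbm{O}}(x)m(0,x) + c^{\pi^\mathbbm{O}}(x)$ and $C$ independent of $\pi$, minimizing $R_{\sup}$ is equivalent to maximizing $\E[\pi(X)g(X)]$. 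Because $\pi(x)$ may be chosen independently at each $x$, the maximizer is obtained pointwise: $\pi^\ast_\mathbbm{O}(x) = \mathbbm{1}\{g(x) \geq 0\}$, the inclusion of equality being harmless since $\pi(x)g(x) = 0$ whenever $g(x)=0$.

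Next I would substitute the explicit coefficients from Theorem~\ref{thm:common_form} and simplify. For $u_g \geq u_l$, $c_1^{\pi^\mathbbm{O}}(x) = -c_0^{\pi^\mathbbm{O}}(x) = u_l + (u_g - u_l)\delta_\tau(x)$ and $c^{\pi^\mathbbm{O}}(x) = -c$, so, using $m(1,x) - m(0,x) = \tau(x)$, we get $g(x) = \{u_l + (u_g-u_l)\delta_\tau(x)\}\tau(x) - c$. Splitting on $\delta_\tau(x) = \mathbbm{1}\{\tau(x) \geq 0\}$: when $\tau(x) \geq 0$ the bracket collapses to $u_g$, so $g(x) \geq 0 \iff \tau(x) \geq c/u_g$; when $\tau(x) < 0$ we have $g(x) = u_l\tau(x) - c < 0$ since $u_l > 0$ and $c \geq 0$, hence $\pi^\ast_\mathbbm{O}(x) = 0$, matching $\mathbbm{1}\{\tau(x) \geq c/u_g\} = 0$. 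The two branches therefore merge into the single rule $\pi^\ast_\mathbbm{O}(x) = \mathbbm{1}\{\tau(x) \geq c/u_g\}$, which equals $\pi^\symm(x)$ because $u_g > 0$. For $u_g < u_l$, I would instead use $c_1^{\pi^\mathbbm{O}}(x) = u_g + \delta_+(x)(u_l-u_g)$, $c_0^{\pi^\mathbbm{O}}(x) = -u_l + \delta_+(x)(u_l-u_g)$, and $c^{\pi^\mathbbm{O}}(x) = \delta_+(x)(u_g-u_l) - c$, and split on $\delta_+(x)$. When $\delta_+(x) = 0$, $g(x) = u_g m(1,x) - u_l m(0,x) - c$, so $g(x) \geq 0 \iff m(1,x) \geq \frac{u_l}{u_g}m(0,x) + \frac{c}{u_g}$. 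When $\delta_+(x) = 1$, the coefficients simplify to $u_l$, $-u_g$, and $u_g - u_l - c$, giving $g(x) = u_l m(1,x) - u_g m(0,x) + u_g - u_l - c$, so $g(x) \geq 0 \iff m(1,x) \geq \frac{u_g}{u_l}m(0,x) + \frac{u_l - u_g + c}{u_l}$. These are precisely the two cases in the statement.

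The only point requiring care — rather than a genuine obstacle — is the bookkeeping in the $u_g \geq u_l$ case: one must check that the branch $\tau(x) < 0$ truly yields $\pi^\ast_\mathbbm{O}(x)=0$ (using $u_l > 0$, $c \geq 0$) and handle the boundary value $\tau(x)=0$, so that the $\delta_\tau$-dependent decision function genuinely collapses to a single $\delta_\tau$-free threshold. No analogous merging occurs when $u_g < u_l$ because $\delta_+$ does not reduce to a function of the threshold itself, which is why the answer there retains the case split. Beyond this, everything follows from the identity $m(1,x)-m(0,x) = \tau(x)$ and the definitions of $\delta_\tau$ and $\delta_+$ already in hand.
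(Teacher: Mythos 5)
Your proposal is correct and follows essentially the same route as the paper: the paper's proof of Theorem~\ref{thm:common_form} shows $R_{\sup}(\pi,\pi^\mathbbm{O})=-\E[\pi(X)L_b(X)]$ with unconstrained minimizer $\mathbbm{1}\{L_b(x)\geq 0\}$, and the corollary is obtained exactly as you do, by substituting the stated coefficients for $L_b$ and simplifying case by case on $\delta_\tau$ (noting the collapse to $\pi^\symm$ when $u_g\geq u_l$) and on $\delta_+$ when $u_g<u_l$. Your sign bookkeeping and the boundary/negative-CATE check all match the paper's derivation.
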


Corollary~\ref{cor:regret_nobody} shows that the form of the minimax
expected utility loss policy depends on the direction of the
asymmetry. To build intuition, consider the case where the treatment
is costless ($c = 0$).  If $u_g > u_l$ --- so we would rather cause an
undesirable outcome than to fail to prevent it --- then the minimax
solution relative to the never-treat policy is the same as the optimal
rule under a symmetric utility function: assign the treatment when the
CATE is positive.  In this case, the unit is more likely to be in the
$(y_1,y_0) = (1,0)$ stratum than the $(y_1, y_0) = (0,1)$
stratum, and since $u_g > u_l$, it will be better to treat the unit
than to not.  Conversely, when the CATE is negative it may still be
better to treat the unit, but in the worst case it is not.  To
minimize the worst-case expected utility loss relative to never
treating, the minimax loss policy does not treat.
  
However, the minimax solution is different when $u_g < u_l$ --- i.e.,
when it is worse to cause an undesirable outcome than to fail to
prevent it. In this case, the oracle rule depends on the value of the
classifier $\delta_+(x) = \bbone\{e_{00}(x) \leq e_{11}(x)\}$. If both
potential outcomes are more likely to be zero than one, then the policy
only treats if the
probability that $Y(1)$ equals one is higher than the probability
that $Y(0)$ equals one by a factor of $\frac{u_l}{u_g} > 1$. Comparing
to the decision rule under the symmetric utility, we see that this
raises the threshold for assigning the treatment.

In contrast, if both potential outcomes are more likely to be one than
zero, the threshold is raised by adding a constant cost
$\frac{u_l - u_g}{u_l} > 0$, but the multiplicative factor on the
probability that $Y(0)$ equals one is $\frac{u_g}{u_l} < 1$.  Overall,
this has the effect of creating a more cautious policy that provides
the treatment less often.

Figure~\ref{fig:example_rules} shows the minimax decision rule
relative to $\pi^\mathbbm{O}$ (purple) in the one-dimensional example
where $u_g < u_l$ and $c = 0$ --- i.e., it is worse to cause
an undesirable outcome than fail to prevent it. In this case, we see that the
decision function is well below the symmetric rule shown in green
(i.e. the CATE), leading to a large part of the covariate space being
assigned no treatment even though the CATE is positive. In fact, this
policy is overly cautious: it does not assign the treatment even in
many cases where the oracle rule that knows the principal score would
provide the treatment. This is because the alternative policy is to
never treat anyone.

Appendix~\ref{sec:more_results} shows the result for the minimax
expected utility loss policy relative to the always-treat policy,
which is more aggressive than the symmetric policy. It is the mirror
image of the minimax loss policy relative to the never-treat policy,
with the relation to $u_g$ and $u_l$ reversed.

\subsubsection{\edit{The minimax regret policy}}
\label{sec:regret_oracle}

We next consider the \edit{policy that minimizes the} expected utility loss relative to the
oracle $\pi^o$ in Eqn~\eqref{eq:asymmetric_oracle}\edit{, or, equivalently, that minimizes the regret}.
For simplicity, we assume zero cost, i.e., $c = 0$;
when $c > 0$, there will be further terms (see the proof of
Theorem~\ref{thm:common_form} in Appendix~\ref{sec:proofs}).
\begin{corollary}[Minimax regret policy] \spacingset{1}
  \label{cor:regret_oracle}
  When $c= 0$, the minimax regret policy for
  $u_g \geq u_l$ is given by,

  {
    \singlespacing
  \[
    \pi^\ast_o(x) = \left\{
      \begin{array}{c c}
        1, & \delta_\tau(x) = 1,\\ 
        0, & \pi^\ast_{\mathbbm{1}}(x) = 0,\\
        \bbone\left\{m(1,x) \geq \frac{2u_l}{u_g + u_l} m(0,x)\right\}, & \delta_\tau(x) = 0, \delta_+(x) = 0,\\
        \bbone\left\{m(1,x) \geq \frac{u_g + u_l}{2u_l}m(0,x) + \frac{u_l - u_g}{2u_l}\right\}, & \delta_\tau(x) = 0, \delta_+(x) = 1,
      \end{array}\right.
  \]
  }

and for $u_g < u_l$ the minimax regret policy is given by,
  {
    \singlespacing
  \[
    \pi^\ast_o(x) = \left\{
      \begin{array}{c c}
        1, & \pi^\ast_{\mathbbm{O}}(x) = 1,\\
      0, & \delta_\tau(x) = 0,\\
        \bbone\left\{m(1,x) \geq   \frac{u_g + u_l}{2u_g} m(0,x)\right\}, & \delta_\tau(x) = 1, \delta_+(x) = 0,\\
        \bbone\left\{ m(1,x) \geq \frac{2u_g}{u_g + u_l}m(0,x) + \frac{u_l - u_g}{u_g + u_l}\right\}, & \delta_\tau(x) = 1, \delta_+(x) = 1.
      \end{array}\right.
  \]
  }
\end{corollary}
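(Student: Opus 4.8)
The plan is to read the result off of the closed form for $R_{\sup}(\pi,\pi^o)$ in Theorem~\ref{thm:common_form} together with the closed forms for $\pi^\ast_{\mathbbm{O}}$ and $\pi^\ast_{\mathbbm{1}}$ in Corollaries~\ref{cor:regret_nobody} and~\ref{cor:regret_everybody}. Setting $c=0$, Theorem~\ref{thm:common_form} gives $R_{\sup}(\pi,\pi^o) = C - \E[\pi(X)\,g(X)]$ with $g(x) = c_1^{\pi^o}(x)\,m(1,x) + c_0^{\pi^o}(x)\,m(0,x) + c^{\pi^o}(x)$, and the point is that $g$ does \emph{not} depend on $\pi$: the coefficients $c_1^{\pi^o},c_0^{\pi^o},c^{\pi^o}$ are piecewise constant on the three regions $\{\pi^\ast_{\mathbbm{1}}(x)=0\}$, $\{\pi^\ast_{\mathbbm{O}}(x)=1\}$, $\{\pi^\ast_{\mathbbm{O}}(x)\neq\pi^\ast_{\mathbbm{1}}(x)\}$, and the comparison policies appearing there are fixed rather than optimized over. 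Since $\pi(x)g(x)\le\bbone\{g(x)\ge0\}\,g(x)$ pointwise for every $\pi(x)\in[0,1]$, the unconstrained minimizer of $R_{\sup}(\cdot,\pi^o)$ is the deterministic threshold rule $\pi^\ast_o(x)=\bbone\{g(x)\ge0\}$, with ties resolved towards treatment. It then remains to identify the three regions explicitly and to simplify $\bbone\{g(x)\ge0\}$ on each.

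First I would pin down the partition in the case $u_g\ge u_l$. Here $\pi^\ast_{\mathbbm{O}}=\pi^\symm=\delta_\tau$ (using $c=0$), and a one-line comparison of the thresholds in Corollary~\ref{cor:regret_everybody} with $\delta_\tau$, using $0\le m(d,x)\le1$, shows $\pi^\ast_{\mathbbm{1}}(x)\ge\pi^\ast_{\mathbbm{O}}(x)$ for all $x$. Consequently the three regions of Theorem~\ref{thm:common_form} are disjoint and equal, respectively, $\{\delta_\tau=0,\ \pi^\ast_{\mathbbm{1}}=0\}$ (both comparators say ``no treat''), $\{\delta_\tau=1\}$ (both say ``treat'', since then $\pi^\ast_{\mathbbm{1}}=1$ too), and $\{\delta_\tau=0,\ \pi^\ast_{\mathbbm{1}}=1\}$ (they disagree); this is exactly the case split in the statement, read top to bottom.

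Then I would substitute the oracle weights from Theorem~\ref{thm:common_form} into $g$ and simplify on each region. On $\{\pi^\ast_{\mathbbm{1}}=0\}$ one has $\delta_\tau(x)=0$, the weights collapse to $(u_l,-u_l,0)$, so $g(x)=u_l\tau(x)<0$ and $\pi^\ast_o(x)=0$. On $\{\delta_\tau=1\}$ the weights are those of the ``$\pi^\ast_{\mathbbm{O}}(x)=1$'' row, and $g(x)\ge0$ follows from $m(1,x)\ge m(0,x)$ together with $u_g\ge u_l>0$ and $m(0,x)\le1$ (splitting by $\delta_+(x)$), giving $\pi^\ast_o(x)=1$. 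On the disagreement region the weights are those of the third row; with $\delta_\tau(x)=0$ and splitting by $\delta_+(x)\in\{0,1\}$, collecting terms yields $g(x)=(u_g+u_l)m(1,x)-2u_l m(0,x)$ when $\delta_+(x)=0$ and $g(x)=2u_l m(1,x)-(u_g+u_l)m(0,x)+(u_g-u_l)$ when $\delta_+(x)=1$, and $\bbone\{g(x)\ge0\}$ is precisely the two thresholds in the statement. The case $u_g<u_l$ is handled by the mirror-image computation, using instead the $u_g<u_l$ weights in Theorem~\ref{thm:common_form} and the fact that then $\pi^\ast_{\mathbbm{O}}(x)\le\pi^\ast_{\mathbbm{1}}(x)=\delta_\tau(x)$, so that the three regions become $\{\delta_\tau=1,\ \pi^\ast_{\mathbbm{O}}=1\}$, $\{\delta_\tau=0\}$, and $\{\delta_\tau=1,\ \pi^\ast_{\mathbbm{O}}=0\}$.

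The only step that requires real care — and the one I would write out in full — is showing that the region $\{\delta_\tau(x)=1\}$ (resp.\ $\{\pi^\ast_{\mathbbm{O}}(x)=1\}$ when $u_g<u_l$) collapses to the constant decision $\pi^\ast_o(x)=1$, together with the companion fact that the three regions partition $\mathcal{X}$: neither is immediate from the sign of $\tau$ alone, and both rely on the bounds $0\le m(d,x)\le1$ and the explicit forms of $\pi^\ast_{\mathbbm{O}},\pi^\ast_{\mathbbm{1}}$ from Corollaries~\ref{cor:regret_nobody} and~\ref{cor:regret_everybody} (e.g.\ to see that $m(1,x)\ge m(0,x)$ already forces $m(1,x)\ge\frac{u_g}{u_l}m(0,x)+\frac{u_l-u_g}{u_l}$ when $u_g\ge u_l$ and $\delta_+(x)=1$). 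Everything else is routine arithmetic. Finally, restricting to $c=0$ is only for readability: for $c>0$ the same argument applies verbatim after carrying the cost term $c^{\pi^o}(x)$ from Theorem~\ref{thm:common_form} into each $g(x)$.
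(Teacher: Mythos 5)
Your proposal is correct and takes essentially the same route as the paper: the paper folds this derivation into the proof of Theorem~\ref{thm:common_form}, where the minimizer is read off as $1$ on $\{\pi^\ast_{\mathbbm{O}}=1\}$, $0$ on $\{\pi^\ast_{\mathbbm{1}}=0\}$, and $\bbone\{U_b(x)+L_b(x)\ge 0\}$ on the disagreement region, with the nesting $\pi^\ast_{\mathbbm{O}}\le\pi^\ast_{\mathbbm{1}}$ coming from $L_b\le U_b$ and the thresholds from plugging in $L_b,U_b$. Your region-by-region evaluation of $g$ (including the check that $g\ge 0$ on $\{\delta_\tau=1\}$, which is exactly the verification that $U_b\ge 0$ whenever $L_b\ge 0$) reproduces those same steps and yields the same thresholds.
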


Corollary~\ref{cor:regret_oracle}
shows that we can write the worst-case regret and the minimax regret
policy in terms of observable data, just as for the constant policies
above. But, doing so requires \emph{four} classifiers rather than one:
(i) $\delta_+$, which classifies whether $e_{00}(x) \leq e_{11}(x)$;
(ii) $\delta_\tau$, which classifies whether the CATE is positive;
(iii) the minimax loss solution relative to $\pi^\mathbbm{1}$ and (iv) the
minimax loss solution relative to $\pi^\mathbbm{O}$.  Recall from
Corollary~\ref{cor:regret_nobody} that either $\pi^\ast_\mathbbm{O}$ (when
$u_g \geq u_l$) or $\pi^\ast_\mathbbm{1}$ (when $u_g < u_l$) is the
symmetric policy $\pi^\symm$.  Therefore, if the cost $c = 0$ as in
Corollary~\ref{cor:regret_oracle}, we only need \emph{three}
classifiers to construct the objective, since
$\pi^\symm = \delta_\tau$ in this case.

Inspecting the minimax solution relative to the oracle policy when
$u_g \geq u_l$, we see that it assigns the treatment if the symmetric rule does,
whereas it does not provide the treatment if the minimax
solution relative to the always-treat policy does not. In between
these two extremes, the decision rule lowers the threshold for the
treatment assignment relative to the symmetric rule.  The opposite is
true when $u_g < u_l$.  If the symmetric rule does not assign treatment,
the minimax solution relative to the oracle does not either, but it does provide
the treatment whenever the minimax solution relative to the
never-treat policy does.  In between these two cases, the threshold for
treatment assignment is higher than that under the symmetric rule.

Figure~\ref{fig:example_rules} shows the decision rule (black) in our
running one-dimensional example where $u_g < u_l$. The decision rule
is equivalent to $\pi^\ast_\mathbbm{O}$ (purple) when the CATE is
negative, and is equal to the CATE decision rule $\delta_\tau$ when
$\pi^\ast_\mathbbm{O}(x) = 1$.  When there is disagreement between the
CATE rule $\delta_\tau$ and $\pi^\ast_\mathbbm{O}$, the minimax oracle
rule interpolates between them, leading to a more aggressive policy  that treats more individuals
than $\pi^\ast_\mathbbm{O}$.  Comparing to the oracle rule
(orange), we see that this interpolation causes the decision
thresholds for the minimax oracle rule to be close to the best
possible decision thresholds.

\section{Learning a policy from data}
\label{sec:emp_asymm}

Having established the behavior and form of the minimax loss policy
$\pi^\ast$ in Eqn~\eqref{eq:minimax} in the population for an
unconstrained policy class, we now turn to the problem of learning a
policy $\hat{\pi}$ from observed data within a constrained policy class
$\Pi$.

\subsection{Estimation algorithms}

To begin, note that in finite samples we know neither the true scoring
functions $\Gamma_w$ nor the true weighting and cost functions
$c^\varpi_1(\cdot),c^\varpi_0(\cdot),c^\varpi(\cdot)$---which depend
on the nuisance classifiers---and so they must be estimated from data.
As mentioned in Section~\ref{sec:symmetric}, we can obtain estimates
of the DR score $\widehat{\Gamma}_w^\text{dr}$ by plugging in estimates
of the nuisance components.  Similarly, with estimates of the nuisance
classifiers, we can directly obtain estimates of the weighting and
cost functions $\hat{c}^\varpi_1(\cdot),\hat{c}^\varpi_0(\cdot)$, and
$\hat{c}^\varpi(\cdot)$ by plugging in to the formulas in
Theorem~\ref{thm:common_form}.

This leads to the following procedure.  First, obtain estimates of the
nuisance components $\hat{m}$ and $\hat{d}$ and construct the DR
scores.  Then, estimate the nuisance classifiers and follow
Theorem~\ref{thm:common_form} to construct estimates of the weighting
and cost functions.  To find a policy relative to either the
always-treat (if $u_g \geq u_l$) or never-treat (if $u_g < u_l$)
policies, we estimate a single nuisance classifier, $\hat{\delta}_+$.
Finding a policy relative to the oracle involves estimating
\emph{three} or \emph{four} nuisance classifiers: 
$\hat{\delta}_+$, $\hat{\delta}_\tau$, and the minimax loss policies
relative to never and always treating, $\hat{\pi}_\mathbbm{O}$ and
$\hat{\pi}_\mathbbm{1}$.  With these in hand, we then
find a data-driven policy $\hat{\pi}$ that solves the following
optimization problem (dropping the constant that does not depend on
the policy $\pi$):
\begin{align}
    \label{eq:opt_pol_emp}
  & \hat{\pi} \in \underset{\pi \in \Pi}{\argmin}  \ \hat{R}_{\sup}(\pi,
  \varpi) \\
   \text{where} \ & \hat{R}_{\sup}(\pi,
  \varpi) =
     -\frac{1}{n}\sum_{i=1}^n\pi(X_i)\left\{\hat{c}^\varpi_1(X_i)\widehat{\Gamma}^\text{dr}_1(X_i,
     D_i, Y_i) + \hat{c}^\varpi_0(X_i)
     \widehat{\Gamma}^\text{dr}_0(X_i, D_i, Y_i) +
     \hat{c}^\varpi(X_i)\right\}.\nonumber
\end{align}

There are two ways to estimate the nuisance classifiers.  The first is
an \emph{empirical risk minimzation} approach, where we solve
Eqn~\eqref{eq:opt_pol_emp} with the appropriate weighting and cost
functions.\footnote{For the nuisance classifiers $\delta_+$
  and $\delta_\tau$, the weighting and cost functions are known, and
  so need not be estimated.}  Appendix~\ref{sec:algos} explicitly
details this procedure.  As shown in Section \ref{sec:theory_results}
below, the estimated nuisance classifiers must have low regrets
relative to the true ones in order for our learned policy $\hat{\pi}$
to have low worst-case expected utility loss; therefore, we must
choose a flexible policy class.
This is in contrast to estimating our policy of interest
$\hat{\pi}$, whose performance we measure relative to the best
possible constrained policy.  An alternative is to take a
\emph{plug-in} approach, using our estimates of the conditional
expectation function $\hat{m}$ to directly create estimates of the
classifier; e.g.,
$\hat{\delta}_+(x) = \bbone\{\hat{m}(1,x) + \hat{m}(0,x)
\geq 1\}$ and
$\hat{\delta}_\tau(x) = \bbone\{\hat{m}(1,x) -
\hat{m}(0,x) \geq 0\}$.

\subsection{Excess worst-case expected utility loss}
\label{sec:theory_results}

To understand the statistical properties of our learned minimax policy
$\hat{\pi}$, we will compare it to the policy $\pi^\ast$ that
minimizes the worst-case expected utility loss in the population among
those in the policy class $\Pi$ by solving Eqn~\eqref{eq:minimax}.
For a given alternative policy $\varpi$, we will use the excess
worst-case expected utility loss
$R_\text{sup}(\hat{\pi},\varpi) - R_\text{sup}(\pi^\ast,\varpi)$ to
measure the quality of the learned minimax loss policy $\hat{\pi}$
since $R_\text{sup}(\pi^\ast,\varpi)$ is the best possible expected
utility loss in the worst case.  We assume that the nuisance
components and classifiers have been obtained from a separate sample,
and so can be treated as fixed for our finite sample results.
\edit{However, our results can be extended to solving
  Eqn~\eqref{eq:opt_pol_emp} by cross-fitting nuisance components and
  classifiers to obtain the estimates of
  $\widehat{\Gamma}_w^\text{dr}(X_i, D_i, Y_i)$ and
  $\hat{c}_w^\omega(X_i)$ \citep[see][and Appendix~\ref{sec:crossfit}]{Athey2021}.}

To state our results, we define several new quantities.
First, we measure the quality of the estimated nuisance classifiers,
$\hat{\delta}_+$ and $\hat{\delta}_\tau$ ,
by their regrets,
\begin{align*}
  R_+(\hat{\delta}_+) & \equiv \E\left[\bbone\{\hat{\delta}_+(X)
                        \neq \delta_+(X)\}\ | m(1,X) + m(0,X) - 1|\right]\\
  R_\tau(\hat{\delta}_\tau) & \equiv
                              \E\left[\bbone\{\hat{\delta}_\tau(X)
                              \neq \delta_\tau(X)\}\ | m(1,X) - m(0,X)|\right],
\end{align*}
where  $\hat{\delta}_+$  and $\hat{\delta}_\tau$ are treated as fixed and the
covariate $X$ is random.
Second, we
measure the complexity of the policy class $\Pi$ by its ability to
overfit to noise via the \emph{population Rademacher complexity}
$$
\mathcal{R}_n(\Pi) \ \equiv \ \E_{X, \varepsilon}\left[\sup_{\pi \in \Pi}\left|\frac{1}{n}\sum_{i=1}^n \varepsilon_i \pi(X_i)\right|\right],
$$
where $\varepsilon_i$ are i.i.d. random variables with
$\Pr(\varepsilon_i=1)=\Pr(\varepsilon_i=-1)=1/2$, and the expectation
is taken over both $\varepsilon_i$ and $X_i$ \citep[ \S
4]{wainwright_2019}.

We now present two finite sample bounds on the excess worst case
expected utility loss, one for learning a minimax loss policy relative
to the always or never treat policies
(Theorem~\ref{thm:excess_regret_ones_policy}), and the other for
learning a minimax loss policy relative to the oracle
(Theorem~\ref{thm:excess_regret_oracle}).
\begin{theorem}
  \label{thm:excess_regret_ones_policy} \spacingset{1}
  Let $\hat{\pi}$  solve Eqn~\eqref{eq:opt_pol_emp} with alternative policy 
  $\varpi = \pi^\mathbbm{O}$  (if $u_g < u_l$) or $\varpi = \pi^\mathbbm{1}$ 
  (if $u_g \geq u_l$),
  and with nuisance functions
  $\hat{m}$ and $\hat{d}$ and classifier $\hat{\delta}_+$ fit on a separate sample.
  Let $\pi^\ast$ solve the population problem in Eqn~\eqref{eq:minimax}.
  The excess worst-case expected utility loss of $\hat{\pi}$ relative to $\pi^\ast$ satisfies
  \begin{align*}
    R_\text{sup}(\hat{\pi}, \varpi) - R_\text{sup}(\pi^\ast, \varpi) & \leq 2U \times \left\{\frac{6 + \eta}{\eta} \times \left(2\mathcal{R}_n(\Pi) +  \frac{t}{\sqrt{n}}\right)+ \sum_{w=0}^1\left\|\hat{\gamma}_w - \gamma_w\right\|_2 \left\|\hat{m}(w,\cdot) - m(w,\cdot)\right\|_2\right\}\\
    & \qquad + (u_g - u_l) \times \left\{R_+(\hat{\delta}_+) + \frac{t}{2\sqrt{n}}\right\},
  \end{align*}
  with probability at least $1 - 2\exp\left(-\frac{t^2}{2}\right)$, where \edit{$\eta$ is the overlap parameter in Assumption~\ref{a:ignore},} $U$ is
  a constant depending on the utility values,
   $\left\|\hat{\gamma}_w - \gamma_w\right\|_2^2 \equiv \E\left[\left\{\hat{\gamma}_w(D,X) - \gamma_w(D,X)\right\}^2\right]$ and $\left\|\hat{m}(w,\cdot) - m(w,\cdot)\right\|_2^2 \equiv \E[\left\{\hat{m}(w,X) - m(w,X)\right\}^2]$.
\end{theorem}
\begin{theorem}
  \label{thm:excess_regret_oracle} \spacingset{1}
  Let $\hat{\pi}_o$  solve Eqn~\eqref{eq:opt_pol_emp} with alternative policy
  set to be the oracle, $\varpi = \pi^o$, and with nuisance functions
  $\hat{m}$ and $\hat{d}$ and classifiers $\hat{\delta}_+, \hat{\delta}_\tau,$
  $\hat{\pi}_\mathbbm{O}$, and $\hat{\pi}_\mathbbm{1}$ fit on a separate sample.
  Let $\pi^\ast_o$ solve the population problem in Eqn~\eqref{eq:minimax}.
  The excess worst-case regret of $\hat{\pi}_o$ relative to $\pi^\ast_o$ satisfies
  \begin{align*}
    R_\text{sup}(\hat{\pi}_0, \pi^o) - R_\text{sup}(\pi^\ast_o, \pi^o) & \leq U \times \left(\frac{6 + \eta}{\eta} \times \left(2\mathcal{R}_n(\Pi) +  \frac{t}{\sqrt{n}}\right)+\left\|\hat{\gamma} - \gamma\right\|_2 \sum_{w=0}^1\left\|\hat{m}(w,\cdot) - m(w,\cdot)\right\|_2 \right)\\
    & \qquad + 2 \times (R_\text{sup}(\hat{\pi}_\mathbbm{1}, \pi^\mathbbm{1}) -  R_\text{sup}(\pi^\ast_\mathbbm{1}, \pi^\mathbbm{1})) +  2 \times (R_\text{sup}(\hat{\pi}_\mathbbm{O}, \pi^\mathbbm{O}) -  R_\text{sup}(\pi^\ast_\mathbbm{O}, \pi^\mathbbm{O}))\\
    & \qquad +  (u_g - u_l) \times \left(R_+(\hat{\delta}_+) + R_\tau(\hat{\delta}_\tau) + \frac{t}{2\sqrt{n}}\right),
  \end{align*}
  with probability at least $1 - 2\exp\left(-\frac{t^2}{2}\right)$,
  where $U$ is a constant depending on the utility values.
\end{theorem}

Theorems~\ref{thm:excess_regret_ones_policy} and
\ref{thm:excess_regret_oracle} reveal three reasons why the
data-specific policy $\hat{\pi}$ can differ from the population policy
$\pi^\ast$.  First, as captured via the Rademacher complexity term,
even if the outcome model, propensity score model, and nuisance
classifiers were all known, $\hat{\pi}$ could simply over fit to noisy
data.  Fortunately, we can choose the complexity of $\Pi$ and often
prefer a relatively simple policy class for its interpretability and
transparency. The results above will be relative to the best possible
policy in the selected policy class.  Thus, we could control this by
limiting the complexity of our search space.  For example, if the
policy class $\Pi$ has a finite VC dimension $\nu$, the Rademacher
complexity scales like
$\mathcal{R}_n(\Pi) = O\left(\sqrt{\frac{\nu}{n}}\right)$ \citep[ \S
5]{wainwright_2019}.

Second, there is error in our estimates of the outcome and propensity
score models.  However, following \citet{Athey2021}, using the DR
scores protects against this error; only the product of the errors
enter the bound, which decreases faster than $1/\sqrt{n}$ under
typical assumptions.  These two sources of error occur in symmetric
policy learning problems. In the symmetric case when $u_g = u_l$ (and
so $\pi^\mathbbm{1} = \pi^\mathbbm{O} = \pi^{o}$),
Theorem~\ref{thm:excess_regret_ones_policy} is a special case of the results in
\citet{Athey2021}.

Finally, there is error in the nuisance classifiers, which is
particular to our setting.\footnote{This type of error structure
  appears in other policy learning settings with partial identification
  \citep{DAdamo2023}.}  For the minimax loss policy relative to never
or always treating, this error is measured by the regret for 
$\hat{\delta}_+$ : if it correctly classifies cases that are not very close
to the decision boundary (i.e. $|m(0,x) + m(1,x) - 1|$ is not near zero),
this component will be small.  Similarly for the minimax loss policy relative to
the oracle, there are additional terms from the regret of
$\hat{\delta}_\tau$ and the excess worst case expected utility for the
minimax loss policies relative to always and never treating.

If we estimate the nuisance classifiers via empirical risk
minimization, results from \citet{Kitagawa2018,Athey2021} (and
Theorem~\ref{thm:excess_regret_ones_policy} for
$\hat{\pi}_\mathbbm{1}$ and $\hat{\pi}_\mathbbm{O}$) imply that the
regret will primarily be controlled by the complexity of the policy
classes we optimize over for the nuisance classifiers.  Unlike for the
minimax loss policy class $\Pi$, unless the nuisance classifier class
contains the \emph{true} function, there will be irreducible
approximation error in the misclassification term.  Therefore, we
might choose more complex classes, in which case the regret of the
nuisance classifiers will primarily control the overall excess
expected utility loss.

\edit{To analyze the plug-in approach, we use a different characterization of the complexity of the learning problem:}
the proportion of cases that are close
to the decision boundary.
Focusing on the nuisance classifier $\delta_+$,
we follow \citet{Audibert2007} and characterize this via the following \emph{margin condition}.
\begin{assumption}[Margin condition] \spacingset{1}
  \label{a:margin_posclass}
  There exists an $\alpha > 0$ and a constant $C$ such that for any $t \geq 0$,
    $\Pr(|m(1,X) + m(0,X) - 1| \leq t) \leq Ct^\alpha$.
\end{assumption}
The margin parameter $\alpha$ determines how many cases are allowed to be close
to the boundary, with a larger value leading to a stronger assumption that
fewer cases are close; e.g. if $X$ has a bounded density, then $\alpha \geq 1$.
\edit{Note that the margin condition also leads to faster convergence rates for empirical risk minimization approaches, provided the policy class contains the true classifier.}
See \citet{Audibert2007} for further discussion.
Under this margin condition, we can further bound the regret
of the plug-in nuisance classifier $R_+(\hat{\delta}_+)$, leading
to the following corollary to Theorem~\ref{thm:excess_regret_ones_policy}.

\begin{corollary}
  \label{cor:excess_regret_ones_policy_plugin_posclass} \spacingset{1}
  Under Assumption~\ref{a:margin_posclass} and the conditions of
  Theorem~\ref{thm:excess_regret_ones_policy}, using the plug-in nuisance
  classifier $\hat{\delta}_+(x) = \bbone\{\hat{m}(1,x) + \hat{m}(0,x) \geq 1\}$,
  the excess worst-case regret of $\hat{\pi}$ relative to $\pi^\ast$ satisfies
  \begin{align*}
    R_\text{sup}(\hat{\pi}, \varpi) - R_\text{sup}(\pi^\ast, \varpi) & \leq 2U \times \left(\frac{6 + \eta}{\eta} \times \left(2\mathcal{R}_n(\Pi) +  \frac{t}{\sqrt{n}}\right)+\left\|\hat{\gamma} - \gamma\right\|_2 \sum_{w=0}^1\left\|\hat{m}(w,\cdot) - m(w,\cdot)\right\|_2 \right)\\
    & \qquad + (u_g - u_l) \times \left(2^{1+\alpha}C \|\hat{m} - m\|_\infty^{1+\alpha} + \frac{t}{2\sqrt{n}}\right),
  \end{align*}
  with probability at least $1 - 2\exp\left(-\frac{t^2}{2}\right)$,
  where $\|\hat{m} - m\|_\infty \equiv \sup_{w,x} \left|\hat{m}(w,x) - m(w,x)\right|$,
  and $U$ is a constant depending on the utility values.
\end{corollary}
With the plug-in nuisance classifier, $R(\hat{\delta}_+)$
is controlled by the error in the outcome model; however for
$\alpha > 0$ this error will be raised to a higher power, leading to a
faster rate.  In Appendix~\ref{sec:more_results}, we show an analogous
result for the minimax policy relative to the oracle using plug-ins
for all nuisance classifiers.  See \citet{DAdamo2023} for an
application of these techniques for policy learning in a different
partial identification setting, and \citet{Kallus2022_harm} for an
application to estimate bounds on $\Pr(Y(1) < Y(0))$.

Finally, although the minimax loss policies we consider are designed
to minimize the worst-case expected utility loss, in some cases it may
be possible that the true, unidentifiable expected utility loss may
also be small.  In Appendix~\ref{sec:sims}, we conduct a brief
simulation study to inspect how the misclassification rates and the
true expected utility loss behave in finite samples.

\section{Application to Right Heart Catheterization}
\label{sec:app}

We now apply the proposed methodology to a particular decision
problem: whether or not to use Right Heart Catheterization (RHC) in a
clinical setting. RHC is a diagnostic tool where a catheter is
inserted into the pulmonary artery. In a controversial observational
study, \citet{Connors1996} found that RHC led to an increase in
mortality on average.  RHC, however, can lead to life-saving treatment
for some patients.  In this section, we will use the data from
\citet{Connors1996} to learn policies for using RHC for certain
patients, inspecting how asymmetry in the policy maker's utility
function can lead to different data driven decision-making processes.

\subsection{Data and setup}
\label{sec:data_setup}

The data from \citet{Connors1996} include $n = 5,735$ ICU patients,
2,184 of whom had RHC applied.  We will code the outcome $Y(d) = 1$ as
survival by thirty days.  In this case, the utility value $u_g$
represents the utility gain in saving a patient's life under RHC who
would otherwise die without RHC, and $u_l$ represents the cost of RHC
leading to the death of a patient who would otherwise survive. In this
study, RHC use was not experimentally randomized and so we will be
relying on Assumption \ref{a:ignore}, using the same set of
socioeconomic and health characteristics as those used by
\citet{Hirano2001} in their propensity score-based analysis.

Throughout, we will use the estimated doubly robust score
$\widehat{\Gamma}^\text{dr}_w$. To do so, we need estimates of the
conditional expectation function $m(w, \cdot)$ and the propensity
score $d(\cdot)$.  We use a three-fold cross-fitting procedure to estimate these
conditional expectations
\edit{that we detail in Appendix~\ref{sec:crossfit}}.
With the
combined DR scores, we estimate that RHC leads to an overall increase
in 30-day mortality by 4.5 percentage points, with an estimated
standard error of 1.2 percentage points.  This result is consistent
with the findings of other existing analyses.

To fit each of these models, we use the full set of socioeconomic and
health characteristics in the data. We will consider, however,
decision rules that only use a subset of the covariates
$\mathcal{V} \subset \mathcal{X}$.  As we outline in
Appendix~\ref{sec:fewer_covs}, Theorem~\ref{thm:common_form} implies
that the worst-case utility loss will involve nuisance classifiers
based on the covariates $V$, using
$m_\mathcal{V}(w,v) \equiv \E[Y(w) \mid V = v]$, the expected
potential outcome given only the covariates $V$.  To adjust for
confounding, however, we still require the DR-scores using the full
set of covariates.  To construct plug-in estimates of the nuisance
classifiers, we use a variant of the DR-learner
\citep{Kennedy2022_drlearner}, regressing the DR scores
$\widehat{\Gamma}^\text{dr}_w$ on the covariates $V$ using gradient
boosted decision stumps.

\subsection{Threshold decision rules with two variables}

\begin{figure}[t!]
  \centering
        \begin{subfigure}[t]{0.45\textwidth}  
    {\centering \includegraphics[width=\maxwidth]{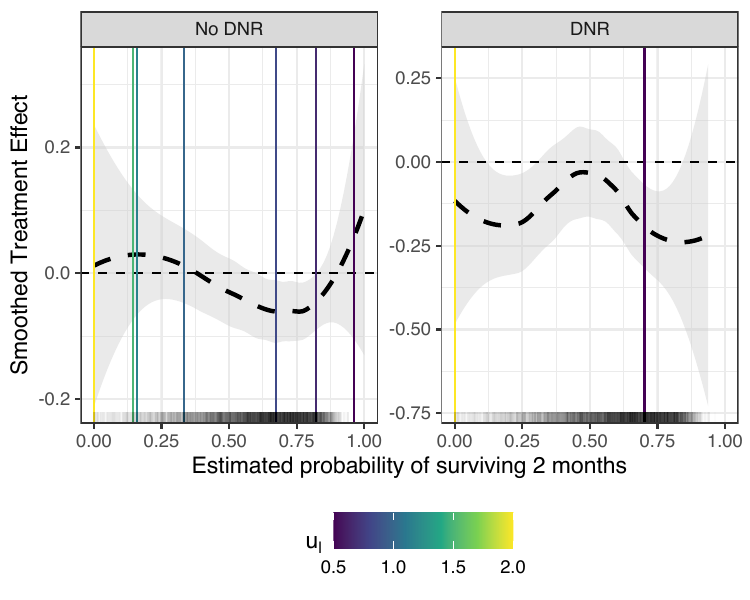}
    }
    \vspace{-.2in}
    \caption{Relative to always or never using RHC}
    \label{fig:two_dim_soc}
  \end{subfigure}
    \begin{subfigure}[t]{0.45\textwidth}  
  {\centering \includegraphics[width=\textwidth]{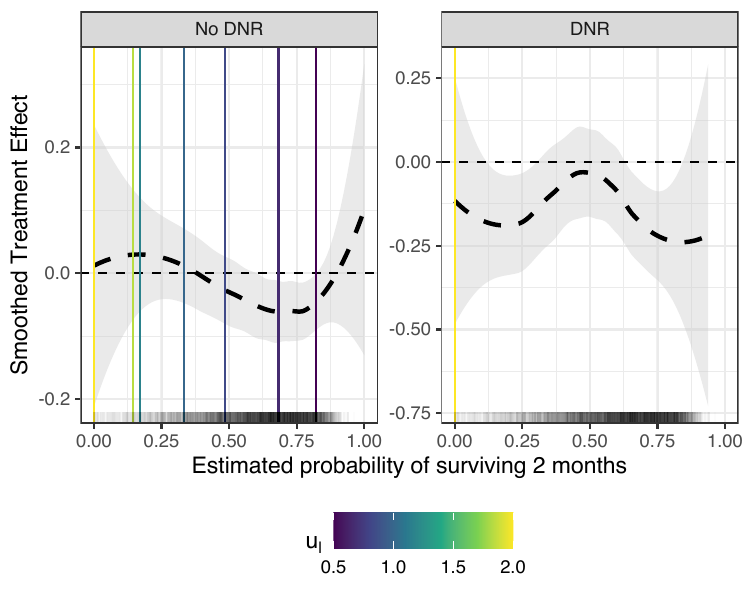}
  }
  \vspace{-.2in}
    \caption{Relative to oracle policy}
    \label{fig:two_dim_oracle}
    \end{subfigure}\quad
    \caption{Minimax threshold decision rules relative to (a) using
      RHC for all patients or no patients and (b) the oracle policy as $u_l$
      varies
      and $u_g = 1$ using two variables: the estimated probability of
      survival and whether a patient has a Do-Not-Resuscitate (DNR)
      order.
      The rules assign RHC if the estimated probability of survival is below
      the threshold.
      The results are shown separately for patients with
      (right) and without (left) an DNR order.  The dashed line and
      shaded area are the smoothed estimate of the conditional average treatment
      effect and 95\% confidence interval, respectively.}
  \label{fig:two_dim}
\end{figure}

We begin by considering decision rules that only use two clinical
variables: the estimated probability of surviving two months and
whether the patient has a Do-Not-Resuscitate (DNR) order.  Throughout
we will use threshold decision rules that assign RHC via a cutoff on
the estimated probability of surviving two months, using separate
thresholds for DNR and non-DNR patients.

First, we consider minimizing the worst-case expected utility loss
relative to using RHC for all patients or never using RHC.  We
estimate the nuisance classifier classifier
$\hat{\delta}_{+}$ via the plug-in approach
(Appendix Figure~\ref{fig:two_dim_pos_class} shows the resulting
 classifier).  We then create estimates of the weighting and cost
functions
$\hat{c}_0^\mathbbm{1}(\cdot), \hat{c}_1^\mathbbm{1}(\cdot),
\hat{c}^\mathbbm{1}(\cdot)$ and solve Eqn~\eqref{eq:opt_pol_emp} to
estimate the minimax policy $\hat{\pi}_\mathbbm{1}$ relative to never
using RHC (when $u_g < u_l$) and always using RHC (when
$u_g \geq u_l$).  We set $u_g = 1$ and vary $u_l \in [0.5, 2]$ so that
the utility loss from a harmful treatment moves between half and twice
as large as the utility loss from failing to give useful treatment.

Figure~\ref{fig:two_dim_soc} shows the resulting decision rules for
patients with (right) and without (left) a DNR order.  We also
estimate the CATE conditioned on the estimated probability of survival
and DNR status with the DR-learner using kernel smoothing
\citep{Kennedy2022_drlearner}. Note that the estimated CATE is
positive for non-DNR patients with less than a 50\% or greater than 80\%
probability of survival. Because we restrict to a single threshold,
in the symmetric case, this leads to a decision rule that
applies a threshold of ~50\% for non-DNR patients, while never
assigning RHC to DNR patients.  As the utility gain in saving a life
becomes greater than the cost of causing death, the estimated
threshold increases, leading to a decision rule that uses RHC for
non-DNR patients with a higher estimated probability of
surviving. Eventually the asymmetry is so large in favor of
prioritizing useful treatment that almost all non-DNR patients and
most DNR patients would be given RHC, even though the CATE is
negative.  Conversely, as avoiding harm becomes more important, the
decision threshold lowers, assigning RHC for fewer and fewer patients
until no patients would receive it.

Next, we consider finding the minimax regret policy relative to the
oracle $\hat{\pi}_o$, using plug-in estimates of the classifiers
$\hat{\delta}_+$, $\hat{\delta}_\tau$, $\hat{\pi}_\mathbbm{O}$, and
$\hat{\pi}_\mathbbm{1}$.\footnote{Note that we use plug-ins for
  $\hat{\pi}_\mathbbm{O}$ and $\hat{\pi}_\mathbbm{1}$ rather than the
  simple threshold decision rules above, to try to capture the best
  possible \emph{unconstrained} classifiers rather than the best
  possible constrained ones.}  Figure~\ref{fig:two_dim_oracle} shows
the decision functions.  Similar to the minimax loss policy relative
to always or never using RHC, as $u_l$ decreases the threshold for
non-DNR patients increases and as $u_l$ increases the threshold
decreases.  Even in the extreme case with $u_l = 0.5$, however, DNR
patients are not assigned RHC.  When $u_l = 2$, DNR patients with a
low probability of survival are still assigned RHC.  Mirroring our
discussion in Section~\ref{sec:max_regret_different_policies},
measuring regret relative to the best possible policy leads to a less
aggressive decision rule than measuring expected utility loss relative
to always using RHC.

\subsection{Decision trees with several clinical variables}
\label{sec:decision_trees}

Next we move to decision rules with several clinical variables.
Recall that Theorem~\ref{thm:common_form} shows how to cast the
minimax problem as a weighted policy learning problem; so we can find
policies from data by solving Eqn~\eqref{eq:opt_pol_emp} using
off-the-shelf policy optimization solvers.  Here, we focus on learning
depth-3 decision trees, using the \texttt{policytree} package
\citep{Sverdrup2020}.

Because finding the optimal decision tree scales super-linearly with
the number of covariates, we first select variables from the set of
clinical covariates \footnote{ Of the 66 covariates, 56 are clinical
  variables while the remaining are socioeconomic variables important
  to controlling for confounding. See \citet{Hirano2001}, Table 1, for
  a full list of covariates.\label{fn:variables} } by fitting a CATE
model given the clinical covariates using the DR-learner with random
forest regression.  We then measure variable importance as the
proportion of times a covariate is split on in the forest, weighted by
the node depth, using the \texttt{grf} package, and select the top 10
most important covariates.  See Appendix Figure~\ref{fig:importance}
for the variable importance measures for all clinical covariates.
As before,
we consider estimating minimax loss policies relative to always or never
using RHC as well as the minimax regret policy relative to the oracle, as the utility asymmetry changes with $u_g = 1$ and
$u_l \in [0.5, 2]$.
We again use plug-in estimates for the nuisance
classifiers with the 10 selected covariates.


\begin{figure}[t!]
  \centering
        \begin{subfigure}[t]{0.45\textwidth}  
    {\centering \includegraphics[width=\maxwidth]{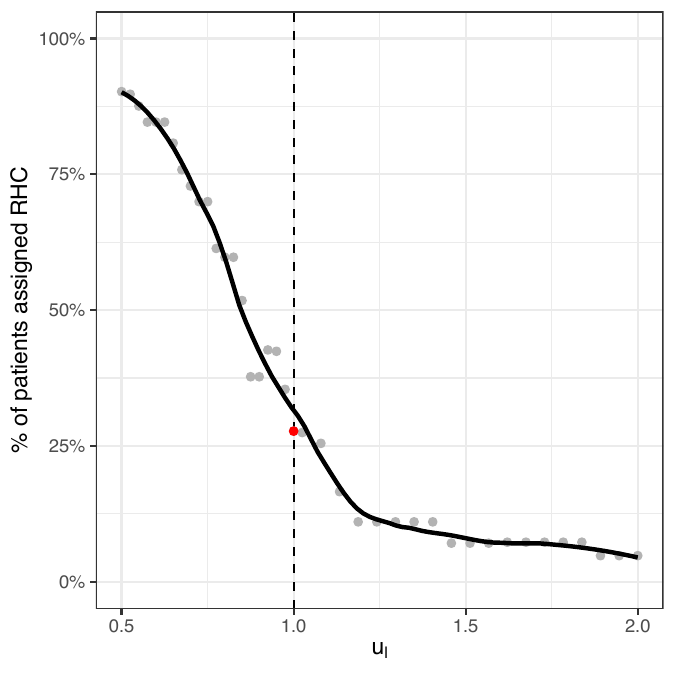} 
    }
    \vspace{-.2in}
    \caption{Relative to always or never using RHC}
    \label{fig:pct_treated_soc}
  \end{subfigure}
    \begin{subfigure}[t]{0.45\textwidth}  
  {\centering \includegraphics[width=\textwidth]{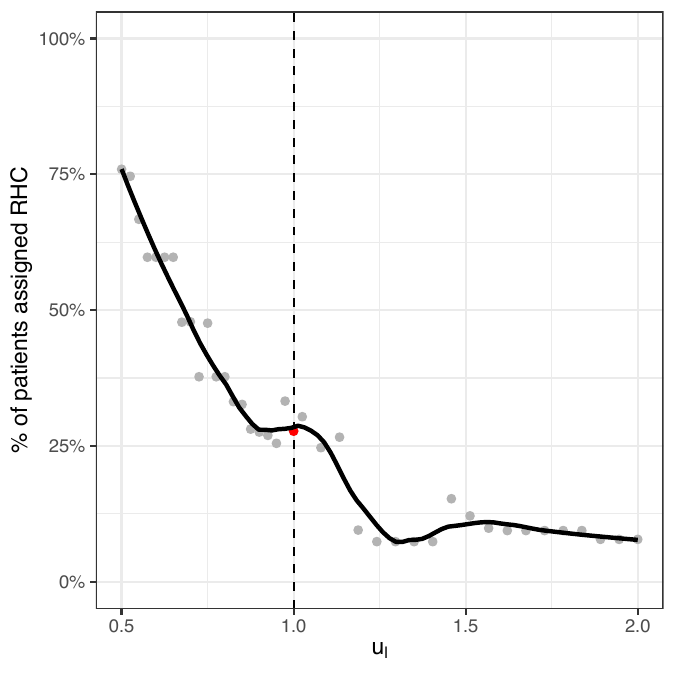} 
  }
  \vspace{-.2in}
    \caption{Relative to oracle policy}
    \label{fig:pct_treated_oracle}
    \end{subfigure}\quad
    \caption{Percent of patients assigned RHC under minimax decision rules
             relative to (a) always or never using RHC and (b) the oracle policy,
             with $u_l \in [0.5, 2]$ and $u_g = 1$. The line is a smoothed
             fit. In both panels, the 
             symmetric policy is highlighted in red.}
  \label{fig:pct_treated}
\end{figure}

Figure~\ref{fig:pct_treated} shows the percent of patients assigned
RHC under the different decision rules.  As we move away from the
symmetric case towards prioritizing using RHC for patients that will
benefit from it, the minimax loss policies relative to always using
RHC and to the oracle assign more patients to RHC.  In the other
direction, as we increase $u_l$ relative to $u_g$ and so seek to
prevent harming patients, the minimax loss policies relative to never
using RHC and the oracle assign fewer patients RHC.  However, the
minimax policy relative to the oracle is less extreme, consistent with
the two-covariate case in Figure~\ref{fig:two_dim} and our discussion
in Section~\ref{sec:regret_oracle}.

\begin{figure}[t!]
  \centering \includegraphics[width=0.65\maxwidth]{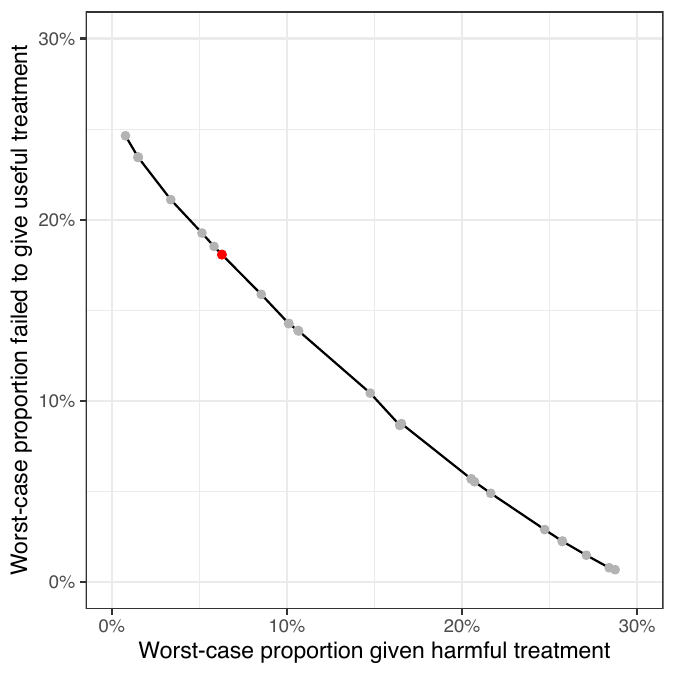}
  \caption{Upper bounds on the proportion of patients that do not
    receive a useful treatment --- the false negative rate --- and the
    proportion of patients given a harmful treatment --- the false
    positive rate --- for minimax depth-3 tree policies relative to
    always or never using RHC, as $u_l$ varies in $[0.5, 2]$ and
    $u_g = 1$. The red point is the
    symmetric depth-3 tree policy.}
\label{fig:pct_harmful_v_helpful}
\end{figure}

We can measure the impact of these policies in terms of
directly interpretable patient outcomes rather than the expected
utility loss,
by inspecting (a) the worst-case proportion of patients that are given a harmful
treatment; (b) the worst-case proportion of patients that are failed to be
given a useful treatment; and (c) the overall expected mortality.
Following the argument in Section~\ref{sec:pop_asymm}, we can find
upper bounds on the first two proportions (presented in Appendix~\ref{sec:more_results}),
and use the DR scores
$\widehat{\Gamma}_w^\text{dr}$ and the plug-in classifier $\hat{\delta}_+$
to get plug-in estimates of them.

Figure~\ref{fig:pct_harmful_v_helpful} shows the worst-case proportion
of patients for whom $\pi$ fails to give a useful treatment ($y$-axis)
versus the worst-case proportion for whom $\pi$ gives a
harmful treatment ($x$-axis)
as we vary $u_l \in [0.5, 2]$.  We observe the
trade-off between these two types of errors, in the worst-case.  If
$\pi$ treats almost no one, in the worst-case almost no patients receive a
harmful treatment, but $\pi$ could be failing to help up to 25\% of
patients.  Conversely, if $\pi$ treats almost everyone, then $\pi$
necessarily treats almost everyone for whom it is helpful, but could be
providing a harmful treatment for up to 30\% of patients. 
Figure~\ref{fig:pct_harmful_v_helpful} shows the intermediate points between
these two extreme scenarios, corresponding to a Pareto frontier between the
errors in the worst case.

We can also see the impact on overall mortality along the
frontier.  Appendix Figure~\ref{fig:mortality_v_pct_harmful} plots the
estimated expected mortality under each policy
against the worst-case proportion of patients  for
whom $\pi$ fails to give a useful treatment or gives a harmful
treatment as we vary $u_l \in [0.5, 2]$.  At one extreme, the policy
that uses RHC for a large number of patients has the highest expected
mortality, because on average RHC is harmful, but the upper bound
guarantees that it fails to give a useful treatment in almost no
cases.  The symmetric policy is at the other end of this tradeoff,
with a lower expected mortality but potentially failing to use RHC
when it is useful for over 18\% of patients.  On the other extreme,
the policy that rarely uses RHC has a lower expected mortality than
always using RHC, but a higher one than the symmetric policy. While
the symmetric policy reduces mortality, up to 6\% of patients will
receive RHC even though it is harmful for them.



\section{Discussion}
\label{sec:discussion}

In this paper, we developed a general policy learning framework that
allows for asymmetric counterfactual  utilities, reflecting common ethical principles
including the Hippocratic oath.
The asymmetry of utility functions leads to the unidentifiability of
expected utilities.  We addressed this problem by employing a partial
identification approach and then finding a policy that minimizes the
maximum expected utility loss relative to a particular policy.
We illustrated this framework by reanalyzing the study of Right Heart
Catheterization, finding that the minimax optimal policy varies
substantially with the asymmetry in the utility and choice of
reference policy.

There are several avenues for future research.  First, the
optimization problem in Eqn~\eqref{eq:minimax} is a form of
\emph{distributionally robust optimization} \citep[see][for a
review]{Bertsimas2011}.  Distributionally robust procedures have been
used for risk minimization and policy learning, often by assuming that
the true, unknown underlying distribution is close to some known
reference distribution.  \citep[see e.g.,][]{Duchi2021_dro,
  Kallus2021,Bertsimas2023_dr_causal}.  In contrast,
Eqn~\eqref{eq:minimax} considers all potential joint distributions
between the potential outcomes---i.e., all potential principal
scores---that agree with the point-identifiable marginal
distributions, without making additional assumptions.  A direction for
future work is to explicitly encode distributional assumptions. For
example, we could treat the case where the potential outcomes are
independent as a reference distribution, and assume that the true
joint distribution is close to it.

Second, we have
restricted our attention to binary outcomes and binary
treatments. However, many decision problems involve multiple potential
actions and categorical or continuous outcomes. This leads to more
principal strata and potential asymmetries in the utility
function.
We briefly discuss extensions to, and difficulties with, the continuous outcome
case in Appendix~\ref{sec:continuous}, and leave a more thorough analysis to
future work.
Third, we may consider decision problems with utility functions that
also depend on other post-treatment variables or mediators, leading to
a different principal stratification structure.  Finally, we can
consider further constraints that encode notions of fairness, such as
the concept of principal fairness that is based on on the principal
strata \citep{imai:jian:20}.

\newpage

{\bfseries\Large
Supplement to ``Policy Learning with Asymmetric Counterfactual Utilities''}

\noindent%
\noindent%

\renewcommand\thefigure{\thesection.\arabic{figure}}    
\setcounter{figure}{0}  
\renewcommand\thetable{\thesection.\arabic{table}}    
\setcounter{table}{0}  
\renewcommand\theassumption{\thesection.\arabic{assumption}}    
\setcounter{assumption}{0}  

\numberwithin{equation}{section}

\onehalfspacing

\appendix

\section{Constrained optimization formulation}
\label{sec:constrained}
While we have arrived at the objective defined in
Eqn~\eqref{eq:regret} through a utility-based framework, we can
also characterize this decision problem in the following constrained
form,
\begin{equation}
  \label{eq:constrained_form}
  \begin{aligned}
    \min_\pi \; & \Pr(Y(\pi(X)) < Y(1))\\
    \text{subject to } & \Pr(Y(\pi(X)) < Y(0)) \leq \delta,\\
    & \E[\pi(X)] \leq B,
  \end{aligned}
\end{equation}
where $ \Pr(Y(\pi(X)) < Y(0)) =  \Pr(Y(1) = 0, Y(0) = 1, \pi(X) = 1)$ and
$ \Pr(Y(\pi(X)) < Y(1)) =  \Pr(Y(1) = 1, Y(0) = 0, \pi(X) = 0)$ represent
the probabilities that policy $\pi$ gives a harmful treatment or fails
to give a useful treatment for a randomly selected member of the
population, respectively.

In this formulation, the goal is to find a policy $\pi$ that minimizes
the expected proportion of \emph{false negatives} --- failing to give a useful
treatment --- subject to a constraint on the expected proportion of
\emph{false positives} --- providing a harmful treatment --- and the
treatment \emph{budget} --- the proportion of units treated.
Thus, the decision problem given in Eqn~\eqref{eq:constrained_form} allows
the policy maker to explicitly state their preferences via the
constraint on the number of false positives and the budget,
rather than implicitly through the
utility function in $R_{e_{01}}(\pi, \varpi)$.
It is also possible to swap the constraints and the objective to minimize the
proportion of false positives subject to a constraint on the proportion of
false negatives.
We can also interpret Eqn~\eqref{eq:constrained_form} through the
lens of multiple testing, for each unit $i$ we have a null hypothesis
$H_{0i}: Y_i(1) < Y_i(0)$, i.e. that unit $i$ is harmed by treatment. We
can view the policy $\pi(X_i)$ as determining whether to reject $H_{0i}$ or not.
Then, the constraint on the proportion of false positives in
Eqn~\eqref{eq:constrained_form} is a scaling of the false detection rate,
where the budget constraint limits the number of rejections, and the objective
is a measure of the average power under the alternative
$H_{1i}: Y_i(1) > Y_i(0)$.

However, note that $\Pr(Y(\pi(X)) < Y(0)) = \E[\pi(X) e_{01}(X)]$ and
$\Pr(Y(\pi(X)) < Y(1)) = \E[(1 - \pi(X))(\tau(x) + e_{01}(X))]$.  Thus,
we can view the expected utility loss $R_{e_{01}}(\pi, \varpi)$ for a constant
comparison policy --- either always or never providing treatment ---
as a Lagrangian relaxation of the decision problem defined in
Eqn~\eqref{eq:constrained_form}, where some choice of the
false-positive constraint $\delta$ and budget $B$ will correspond to a
particular value of the utility ratio $\frac{u_g - u_l}{u_g}$ and cost
ratio $\frac{c}{u_g}$.  This is in contrast to the regret relative to
the oracle policy that maximizes the true value, which involves unidentifiable terms in the
relative weights on $\tau(x)$ and $e_{01}(x)$,  so it cannot be written
as a Lagrangian relaxation of Eqn~\eqref{eq:constrained_form}.

\section{Connection to maximin policies}
\label{sec:maximin}


Under the maximin approach, we find a policy $\pi$ that maximizes the worst-case expected utility.
In this appendix we connect the minimax loss policies relative to never
and always treating to maximin policies under particular choices of
the utility.
To do so, we need to specify the utilities under no treatment, $u(0; y_1, y_0)$. We consider two cases.

First, say that $u(0; y_1,y_0) = 0$ for all principal strata $y_1,y_0$.
In that case, the expected utility is
\begin{align*}
  V(\pi) = \E\left[\pi(X) \left\{u_g\tau(X) + (u_g - u_l)e_{01}(X) - c\right\}\right] = -R_e(\pi, \pi_\mathbbm{O}).
\end{align*}
Therefore the maximin policy is equivalent to the minimax loss policy relative to never treating, $\pi^\ast_\mathbbm{O}$.

Alternatively, say that the utility function under no treatment mirrors that under treatment, i.e.,
\[
u(0;0,0)  = u(0; 1, 1) = 0, \qquad u(0;0,1) = u_l, \qquad u(0; 1, 0) = -u_g.
\]
In this case, the expected utility is 
\[
V(\pi) = \E\left[(\pi(X) - 1) \left\{u_g\tau(X) + (u_g - u_l)e_{01}(X) - c\right\}\right]  - c = -R_e(\pi, \pi_\mathbbm{1}) - c.  
\]
So, the maximin policy is equivalent to the minimax loss policy
relative to always treating, $\pi^\ast_\mathbbm{1}$.

\section{Algorithms for learning minimax loss policies when estimating nuisance functions via  empirical risk minimization}
\label{sec:algos}

\begin{algorithm}[h]
  \begin{algorithmic}[1]
    \Require Policy classes $\Pi$ and $\Delta_+$
    \Ensure Estimated minimax policy $\hat{\pi}$ relative to  $\pi^\mathbbm{1}$  or  $\pi^\mathbbm{O}$
    \State Find $\hat{\delta}_+$ by solving 
    \[ \min_{\delta \in \Delta_+} - \frac{1}{n}\sum_{i=1}^n \delta(X_i)\left\{\widehat{\Gamma}_1(X_i, D_i, Y_i) + \widehat{\Gamma}_0(X_i, D_i, Y_i) - 1\right\}.\]
    \State Compute weighting and cost functions 
    \[\hat{c}^{\varpi}_1(x) =  u_g + \hat{\delta}_+(x)(u_l - u_g), \; \hat{c}^{\varpi}_0(x)  = -u_l - \hat{\delta}_+(x)(u_g - u_l) \text{ and } \hat{c}^{\varpi}(x)= \hat{\delta}_+(x)(u_g - u_l).\]
    \State Find a policy $\hat{\pi} \in \underset{\pi \in \Pi}{\argmin} \; \hat{R}_\text{sup}(\pi, \varpi)$.
  \end{algorithmic}
  \caption{Estimated minimax policy $\hat{\pi}$ relative to the always-treat policy $\pi^\mathbbm{1}$ (when $u_g \geq u_l$) and the never-treat policy $\pi^\mathbbm{O}$ (when $u_g < u_l$)}
  \label{algo:minimax_fixed}
    
\end{algorithm}

\begin{algorithm}[h]
  \begin{algorithmic}[1]
    \Require Policy classes $\Pi$, $\Pi'$, $\Delta_+$, and $\Delta_\tau$
    \Ensure Empirical minimax policy $\hat{\pi}$ relative to the oracle $\pi^o$
    \State Find $\hat{\delta}_+$ by solving 
    \[ \min_{\delta \in \Delta_+} - \frac{1}{n}\sum_{i=1}^n \delta(X_i)\left\{\widehat{\Gamma}_1(X_i, D_i, Y_i) + \widehat{\Gamma}_0(X_i, D_i, Y_i) - 1\right\}.\]
    \If{$u_g \geq u_l$}
    \State Find $\hat{\pi}_{\mathbbm{1}}$  via Algorithm~\ref{algo:minimax_fixed} with policy class $\Pi'$.
    \State Find  $\hat{\pi}_{\mathbbm{O}}$ by solving
    \[ \min_{\pi \in \Pi'} - \frac{1}{n}\sum_{i=1}^n \pi(X_i)\left[u_g\left\{\widehat{\Gamma}_1(X_i, D_i, Y_i) - \widehat{\Gamma}_0(X_i, D_i, Y_i)\right\} - c\right].\]
    \Else
    \State  Find $\hat{\pi}_{\mathbbm{O}}$  via Algorithm~\ref{algo:minimax_fixed} with policy class $\Pi'$.
    \State Find  $\hat{\pi}_{\mathbbm{1}}$ by solving
    \[ \min_{\pi \in \Pi'} - \frac{1}{n}\sum_{i=1}^n \pi(X_i)\left[u_g\left\{\widehat{\Gamma}_1(X_i, D_i, Y_i) - \widehat{\Gamma}_0(X_i, D_i, Y_i)\right\} - c\right].\]
    \EndIf
    \State Find $\hat{\delta}_\tau$ by solving 
    \[ \min_{\delta \in \Delta_\tau} - \frac{1}{n}\sum_{i=1}^n \delta(X_i)\left\{\widehat{\Gamma}_1(X_i, D_i, Y_i) - \widehat{\Gamma}_0(X_i, D_i, Y_i)\right\}.\]
    \State Compute weighting and cost functions $\hat{c}^{\pi^o}_1(x), \hat{c}^{\pi^o}_0(x), \hat{c}^{\pi^o}(x)$ via Theorem~\ref{thm:common_form}.
    \State Find the empirical minimax policy $\hat{\pi} \in \underset{\pi \in \Pi}{\argmin} \hat{R}_\text{sup}(\pi, \pi^o)$.
  \end{algorithmic}
  \caption{Empirical minimax policy $\hat{\pi}$ relative to the oracle policy $\pi^o$}
  \label{algo:minimax_oracle}
\end{algorithm}

\newpage 
\section{Asymmetric utilities based on observed outcomes}
\label{app:asymmetric}

Although it is possible to construct asymmetric utilities without
relying on principal strata \citep{Babii2021}, doing so places
additional restrictions on the structure of utilities.  Consider the
following utility function based on observed outcomes alone,
$u(d,Y(d)) = u_{11}dY_i(d)+u_{10}d\{1-Y_i(d)\}+u_{01}(1-d)Y(d)+
u_{00}(1-d)\{1-Y_i(d)\}$.
This utility function includes the
interaction between the decision and the observed outcome.  Indeed, for a
binary decision and outcome, this represents the most general utility
that could be specified using the observed outcome.

\begin{table}[t!]
  \begin{center}
    \doublespacing

  \begin{tabular}{cccc}
    \hline
     & $Y_i(0)=1 $ &  $Y_i(0)=0$ \\ \hline 
    \multirow{2}{*}{$Y_i(1)=1$} & Harmless & Useful \\
    & $u_{11} - u_{01}$ & $u_{11} - u_{00}$ \\ \cdashline{1-3} 
    \multirow{2}{*}{$Y_i(1)=0$} & Harmful & Useless \\
      & $u_{10} - 2u_{01}$ & $u_{10} - u_{01} - u_{00}$ \\   \hline 
      \end{tabular}
\singlespacing
\caption{Asymmetric utilities gain/loss for treating a unit,
  $u(1, Y_i(1)) - u(0, Y_i(0))$ based on the observed outcomes for each of the
  principal strata.  The
  utility function is given by
  $u(d,Y_i(d)) = u_{11}dY_i(d)+u_{10}d\{1-Y_i(d)\}+u_{01}(1-d)Y_i(d)+ u_{00}(1-d)\{1-Y_i(d)\}$.
  Each cell
  corresponds to the principal stratum defined by the values of the
  two potential outcomes, $Y_i(1)$ and $Y_i(0)$.  Each entry
  represents the utility gain/loss of treatment assignment, relative
  to no treatment, for a unit that belongs to the corresponding
  principal stratum. } \label{tab:utilities-obs}
\end{center}
\end{table}

Table~\ref{tab:utilities-obs} summarizes the utility gain/loss for treating a
unit that belongs to each principal stratum under this setting. With
an interaction term, this utility has different utility gains/losses
in principal strata $(Y(1)=1,Y(0)=0)$ and $(Y(1)=0,Y(0)=1)$, allowing
for the asymmetry in the utilities as required by the Hippocratic
principle.  This utility, however, still places restrictions on its
structure. In particular, it requires that the difference between the
utility gains in principal strata $(Y(1)=1,Y(0)=1)$ and
$(Y(1)=0,Y(0)=1)$ is the same as that between the utility losses in
principal strata $(Y(1)=1,Y(0)=0)$ and $(Y(1)=0,Y(0)=0)$. Therefore,
it might be violated if the difference between harmful and harmless
decisions is much greater than that between useful and useless
decisions.  Thus, a fully general construction of asymmetric utilities
requires the use of principal strata, and defining the utility
function based on both potential outcomes, $u(d;Y(1),Y(0))$, with utility
functions like the one above as a special case.

\section{Simulation study}
\label{sec:sims}

As the results in Section~\ref{sec:theory_results} show,
the misclassification rates of the underlying nuisance classifiers are important
in controlling the excess regret due to estimating the weighting and cost
functions that make up the worst-case regret.
Additionally, although the minimax policies we consider are designed to minimize
the worst-case regret, in some cases it may be possible that the true,
unidentifiable regret may also be small. 
To inspect how the misclassification rates and the true regret behave in finite
samples as the sample size increases,
we now conduct a brief simulation study, where we can know the true values of
the principal scores $e_{y_1y_0}(x)$.

We first generate $n$ 1-dimensional i.i.d. covariates 
$X_i \sim N(0, 2)$.  We then construct log-linear principal scores as
\[
  e_{y_1y_0}(x) = \frac{\exp\left(\alpha_{y_1y_0} + x \beta_{y_1y_0}\right)}{\sum_{y_1'=0}^1\sum_{y_0'=0}^1 \exp\left(\alpha_{y_1'y_0'} + x \beta_{y_1'y_0'}\right)},
\]
where $(\alpha_{00}, \alpha_{10}, \alpha_{01}, \alpha_{11}) = (.2, .15, 0, 0)$,
$\beta_{y_1y_0} \sim N(0, 40)$ for $(y_1,y_0) \in \{(0,0), (1,0), (0,1)\}$,
and $\beta_{11} = 0$. We then jointly sample potential outcomes
$\{Y_{i}(1), Y_i(0)\}$ according to the principal scores  at covariate value $X_i$.
In this simulation study, we consider a randomized control trial with 
binary treatment $D_i$ sampled independently as Bernoulli random variables with
probability one half.

\begin{figure}[t!]
        \begin{subfigure}[t]{0.45\textwidth}  
    {\centering \includegraphics[width=\maxwidth]{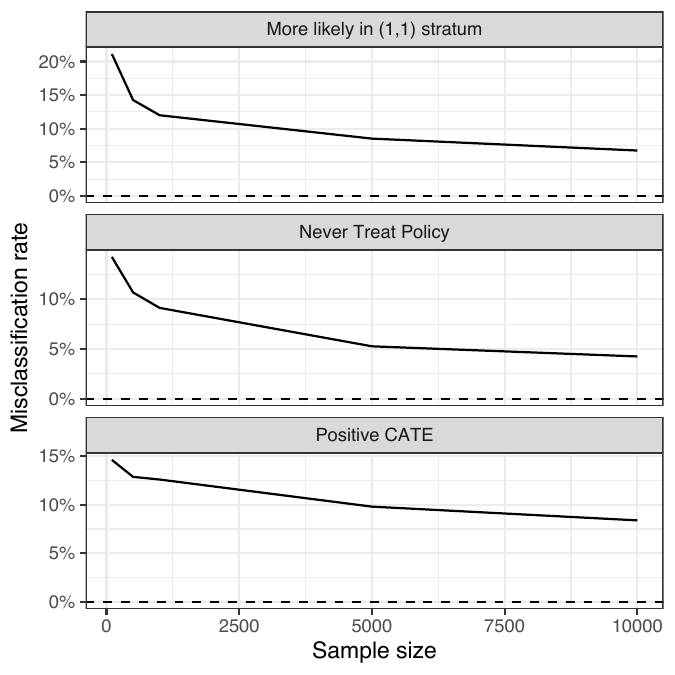}
    }
    \vspace{-.2in}
    \caption{Misclassification rate for nuisance classifiers}
    \label{fig:sim_mis_class}
  \end{subfigure}
    \begin{subfigure}[t]{0.45\textwidth}  
  {\centering \includegraphics[width=\textwidth]{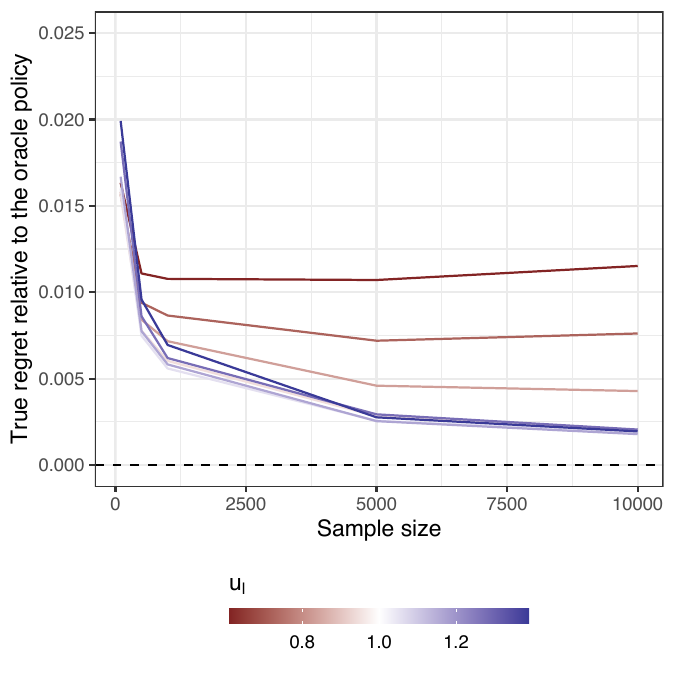}
  }
  \vspace{-.2in}
    \caption{Regret of minimax policy relative to oracle}
    \label{fig:true_regret}
    \end{subfigure}\quad
    \caption{Performance of nuisance classifiers and the minimax optimal
             policy relative to the oracle across simulation runs.
             Panel (a) shows the
             misclassification rate for the nuisance classifiers
             $\hat{\delta}_+$ (``More likely in (1,1) stratum'')
             and $\hat{\delta}_\tau$ (``Positive CATE''), as well as the
             minimax policy relative to the never-treat policy for $u_l = 0.833$.
             Panel (b) shows the true regret of the minimax optimal policy
             relative to the oracle, in the sample, for $u_g = 1$ and as $u_l$
             varies between 0.5 and 1.5.}
  \label{fig:sim_results}
\end{figure}

For each value of sample size $n \in \{100, 500, 1000, 5000, 10000\}$,
we draw 1,000 samples according to the above data generating process.
In each simulation run, we find the minimax optimal policy with
respect to the oracle following Algorithm~\ref{algo:minimax_oracle}
with zero cost $c = 0$, $u_g = 1$, and $u_l$ varying between 0.6 and
1.4, where the value of $u_l$ changes within each simulation run.

We use the IPW scoring function and restrict all policy classes to be
the set of linear thresholds, solving the optimization problem exactly
by direct search.  Figure~\ref{fig:sim_mis_class} shows the average
misclassification rate for the nuisance classifiers $\hat{\delta}_+$
and $\hat{\delta}_\tau$, as well as the misclassification rate for the
the minimax policy relative to always treating for $u_l = 0.833$.
As we expect, we see that these misclassification
rates decrease as the sample size increases.

Figure~\ref{fig:true_regret} shows the true regret of the minimax
policy relative to the oracle as $u_l$ varies. Since the oracle is the
best possible policy, this regret is always positive. The regret does
decrease along with the sample size, reflecting both the decrease in
the nuisance misclassification rate and the decrease in the worst-case
excess regret when the nuisance classifiers are known.  Notice,
however, that the regret does stop decreasing after a certain point,
flattening out at a different level depending on the asymmetry in the
utility function.  In highly asymmetric settings where $u_\ell$ is
small the regret is essentially flat.  This is due to the
fundamental identifiability problem, and even with infinite data we
cannot guarantee that the true regret will be zero.  In contrast, in
the symmetric setting the regret continues to decrease as the sample
size increases.

\section{Implementation details for application to RHC}
\label{sec:implement_details}

\subsection{\edit{Details on cross-fitting procedure}}
\label{sec:crossfit}

In the empirical application to Right Heart Catheterization in
Section~\ref{sec:app}, we use a three-fold cross fitting procedure to
estimate the nuisance functions. We then use the plug-in method to
estimate the nuisance classifiers. Below we present this procedure step-by-step

\begin{enumerate}
  \item Randomly split the data into three folds.
  \item For each fold $k = 1,2,3$, estimate the outcome model
  $\hat{m}^{-k}(\cdot, \cdot)$  and $\hat{d}^{-k}(\cdot)$ on the two other folds via gradient boosted decision stumps.
  \item For each unit $i$, denote $k[i]$ as the fold that it belongs to, then obtain estimates of the outcome model $\hat{m}^{-k[i]}(w, X_i)$, the propensity score $\hat{d}_w^{-k[i]}(X_i)$, and the IP weight $\hat{\gamma}_w^{-k[i]}(D_i, X_i)$.
  \item Use these to construct cross-fit estimates of the DR scoring rule:
  \[
    \widehat{\Gamma}_w^{-k[i]}(X_i, D_i, Y_i)  = \hat{m}^{-k[i]}(w, X_i) +  \{Y_i - \hat{m}^{-k[i]}(w, X_i)\}\hat{\gamma}_w^{-k[i]}(X_i, D_i),
  \]
  and cross-fit plug-in estimates of the classifiers
  \begin{align*}
    \hat{\delta}_{+}^{-k[i]}(X_i) & = \bbone\{\hat{m}^{-k[i]}(1, X_i) + \hat{m}^{-k[i]}(0, X_i) \geq 1\},\\
    \hat{\delta}^{-k[i]}_{\tau}(X_i) & = \bbone\{\hat{m}^{-k[i]}(1, X_i) - \hat{m}^{-k[i]}(0, X_i) \geq 0\},\\
    \hat{\pi}^{-k[i]}_{\mathbbm{O}}(X_i) & = \left\{
        \begin{array}{l r}
          \mathbbm{1}\left\{\hat{m}^{-k[i]}(1, X_i) - \hat{m}^{-k[i]}(0, X_i)  \geq \frac{c}{u_g}\right\} , & u_g \geq u_l,\\
          \bbone\left\{\hat{m}^{-k[i]}(1, X_i) \geq \frac{u_l}{u_g} \hat{m}^{-k[i]}(0, X_i) + \frac{c}{u_g}\right\}, & u_g < u_l \text{ and } \hat{\delta}^{-k[i]}_{+}(X_i) = 0,\\
          \bbone\left\{\hat{m}^{-k[i]}(1, X_i) \geq \frac{u_g}{u_l} \hat{m}^{-k[i]}(0, X_i) + \frac{u_l - u_g + c}{u_l}\right\}, &   u_g < u_l \text{ and } \hat{\delta}^{-k[i]}_{+}(X_i) = 1,
        \end{array}\right. \\
    \hat{\pi}^{-k[i]}_{\mathbbm{1}}(X_i) & = \left\{
      \begin{array}{l r}
        \mathbbm{1}\left\{\hat{m}^{-k[i]}(1, X_i) - \hat{m}^{-k[i]}(0, X_i)  \geq \frac{c}{u_g}\right\} , & u_g < u_l,\\
        \bbone\left\{\hat{m}^{-k[i]}(1, X_i) \geq \frac{u_l}{u_g} \hat{m}^{-k[i]}(0, X_i) + \frac{c}{u_g}\right\}, & u_g \geq u_l \text{ and } \hat{\delta}^{-k[i]}_{+}(X_i) = 0,\\
        \bbone\left\{\hat{m}^{-k[i]}(1, X_i) \geq \frac{u_g}{u_l} \hat{m}^{-k[i]}(0, X_i) + \frac{u_l - u_g + c}{u_l}\right\}, &   u_g \geq u_l \text{ and } \hat{\delta}^{-k[i]}_{+}(X_i) = 1.
      \end{array}\right. 
  \end{align*}
  Then plug in the cross-fit classifiers into the formulas in Appendix~\ref{sec:more_results} to create cross-fit estimates of $\hat{c}^{-k[i] \varpi}(X_i)$.
  \item Solve Eqn~\eqref{eq:opt_pol_emp} with the cross-fit estimates:
  \begin{align}
    \label{eq:opt_pol_emp}
  & \hat{\pi} \in \underset{\pi \in \Pi}{\argmin}
     -\frac{1}{n}\sum_{i=1}^n\pi(X_i)\left\{\hat{c}^{-k[i] \varpi}_1(X_i)\widehat{\Gamma}^{-k[i]}_1(X_i,
     D_i, Y_i) + \hat{c}^{ -k[i] \varpi}_0(X_i)
     \widehat{\Gamma}^{-k[i] }_0(X_i, D_i, Y_i) +
     \hat{c}^{-k[i] \varpi}(X_i)\right\}.\nonumber
\end{align}
\end{enumerate}

\subsection{Minimax loss policies using a subset of covariates}
\label{sec:fewer_covs}
It is often that case that we wish to construct minimax loss decision 
rules that only use a subset of the covariates $\mathcal{V} \subset \mathcal{X}$.
To consider this case, define $m_\mathcal{V}(w,v) \equiv \E[Y(w) \mid V = v]$ to be the expected potential outcome conditioned on the subset of
covariates $v$. Applying Theorem~\ref{thm:common_form} to this setting, 
we get that we can write the worst-case expected utility loss of $\pi$
relative to $\varpi$ as
\[
  R_{\sup}(\pi, \varpi) = C -\E\left[\pi(X)\left\{c_{1\mathcal{V}}^\varpi(V)m_\mathcal{V}(1,V) + c_{0\mathcal{V}}^\varpi(V) m_\mathcal{V}(0, V) + c_\mathcal{V}^\varpi(V)\right\}\right],
\]
where the weighting and cost functions $c_{1\mathcal{V}}^\varpi(\cdot), c_{0\mathcal{V}}^\varpi(\cdot), c_{\mathcal{V}}^\varpi(\cdot)$ depend on the nuisance classifiers given only the subset of the covariates $V$, i.e.
\begin{align*}
  \delta_{+\mathcal{V}}(v) & = \bbone\{m_\mathcal{V}(1,v) + m_\mathcal{V}(0, v) \geq 1\},\\
  \delta_{\tau\mathcal{V}}(v) & = \bbone\{m_\mathcal{V}(1,v) - m_\mathcal{V}(0, v) \geq 0\},\\
  \pi^\ast_{\mathbbm{O}\mathcal{V}}(v) & = \left\{
      \begin{array}{l r}
        \mathbbm{1}\left\{m_\mathcal{V}(1,v) - m_\mathcal{V}(0,v)  \geq \frac{c}{u_g}\right\} , & u_g \geq u_l,\\
        \bbone\left\{m_\mathcal{V}(1,v) \geq \frac{u_l}{u_g} m_\mathcal{V}(0, v) + \frac{c}{u_g}\right\}, & u_g < u_l \text{ and } \delta_{+\mathcal{V}}(v) = 0,\\
        \bbone\left\{m_\mathcal{V}(1,v) \geq \frac{u_g}{u_l} m_\mathcal{V}(0, v) + \frac{u_l - u_g + c}{u_l}\right\}, &   u_g < u_l \text{ and } \delta_{+\mathcal{V}}(v) = 1,
      \end{array}\right. \\
  \pi^\ast_{\mathbbm{1}\mathcal{V}}(v) & = \left\{
    \begin{array}{l r}
      \mathbbm{1}\left\{m_\mathcal{V}(1,v) - m_\mathcal{V}(0,v)  \geq \frac{c}{u_g}\right\} , & u_g < u_l,\\
      \bbone\left\{m_\mathcal{V}(1,v) \geq \frac{u_l}{u_g} m_\mathcal{V}(0, v) + \frac{c}{u_g}\right\}, & u_g \geq u_l \text{ and } \delta_{+\mathcal{V}}(v) = 0,\\
      \bbone\left\{m_\mathcal{V}(1,v) \geq \frac{u_g}{u_l} m_\mathcal{V}(0, v) + \frac{u_l - u_g + c}{u_l}\right\}, &   u_g \geq u_l \text{ and } \delta_{+\mathcal{V}}(v) = 1.
    \end{array}\right. .
\end{align*}
However, note that in order to use observable data, we must account for confounding, since in general $m_\mathcal{V}(w,v) \neq \E(Y \mid V = v, W = w)$ when $\mathcal{V}$ is a subset of $\mathcal{X}$. We can however,
still use the IPW or DR scoring functions since $m_\mathcal{V}(w,v) = \E[\Gamma_w(X, D, Y) \mid V = v]$. So we can write the worst-case
expected utility loss in terms of the scoring functions---where we condition on $X$---and the nuisance classifiers only conditioning on the
subset of covariates $V$:
\[
  R_{\sup}(\pi, \varpi) = C -\E\left[\pi(V)\left\{c_{1\mathcal{V}}^\varpi(V)\Gamma_1(X, D, Y)+ c_{0\mathcal{V}}^\varpi(V) \Gamma_0(X, D, Y) + c\mathcal{V}^\varpi(V)\right\}\right],
\]

Constructing plug-in estimates of the nuisance classifiers involves estimating $m_\mathcal{V}(w,v) = \E[\Gamma_w(X, D, Y) \mid V = v]$, which we can do by regressing the estimated DR scores on the
subset of the covariates $V$, a variant of the DR-learner \citep{Kennedy2022_drlearner}.

Overall, this leads to the following steps:
\begin{enumerate}
  \item Estimate the DR score $\widehat{\Gamma}_w(x, d, y)$  using \emph{all covariates} $X$ to account for confounding.
  \item Estimate the expected potential outcomes given the subset of covariates $V$, $\hat{m}_\mathcal{V}(w,v)$ using the DR-learner and regressing the estimates $\widehat{\Gamma}_w(X_i, D_i, Y_i)$ on $V$.
  \item Form plug in estimates of the nuisance classifiers, e.g. $\hat{\delta}_\tau(v) = \bbone\{\hat{m}_\mathcal{V}(1,v) - \hat{m}_\mathcal{V}(0,v) \}$ and $\hat{\delta}_+(v) = \bbone\{\hat{m}_\mathcal{V}(1,v) + \hat{m}_\mathcal{V}(0,v) - 1 \geq 0\}$.
  \item Get plug-in estimates of the weighting and cost functions $\hat{c}_{1\mathcal{V}}^\varpi(V_i), \hat{c}_{0\mathcal{V}}^\varpi(V_i), \hat{c}_{\mathcal{V}}^\varpi(V_i)$, using the estimates of the nuisance classifiers.
  \item Find the policy $\hat{\pi}:\mathcal{V} \to \{0,1\}$ by solving
  \[
    \min_{\pi \in \Pi} -\frac{1}{n}\sum_{i=1}^n \pi(V)\left\{\hat{c}_{1\mathcal{V}}^\varpi(V)\widehat{\Gamma}_1(X, D, Y)+ \hat{c}_{0\mathcal{V}}^\varpi(V) \widehat{\Gamma}_0(X, D, Y) + \hat{c}\mathcal{V}^\varpi(V)\right\}.
  \]
\end{enumerate}

\edit{Finally, note that as in Section~\ref{sec:crossfit} above, we can use cross-fit estimates here, where for each fold $k$, both $\widehat{\Gamma}^{-k}_w$ and $\hat{m}^{-k}_\mathcal{V}$ are fit on data not in fold $k$. In principle we could do a multi-stage cross-fitting procedure, where for each fold $k$, we further break up the fold into sub-folds and cross-fit $\hat{m}^{-k}_\mathcal{V}$ within the fold $k$. We opt to use a simpler cross-fitting procedure here, noting that it may impact the quality of the DR-learner estimate $\hat{m}^{-k}_\mathcal{V}$.}

\setcounter{figure}{0}  
\section{Additional figures}

\begin{figure}[H]
  \centering \includegraphics[width=0.7\maxwidth]{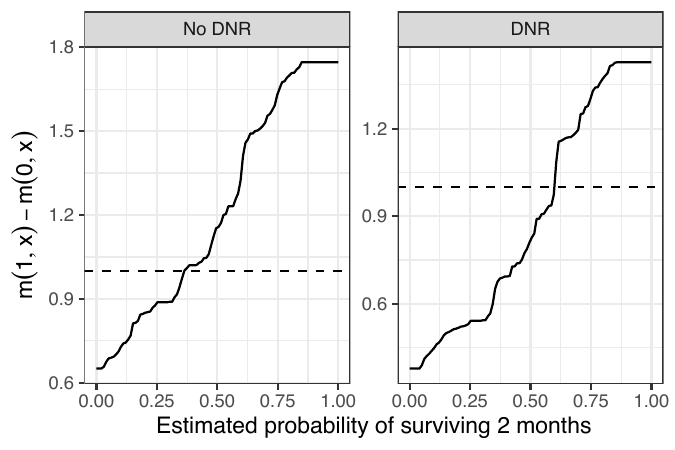}
\caption{Plug-in estimate of the decision rule $\hat{\delta}_+(v)$ to classify whether $m_\mathcal{V}(1,v) + m_\mathcal{V}(0,x) \geq 1$ using the estimated probability of survival and DNR status.}
\label{fig:two_dim_pos_class}
\end{figure}

\begin{figure}[h]
  \centering \includegraphics[width=0.75\maxwidth]{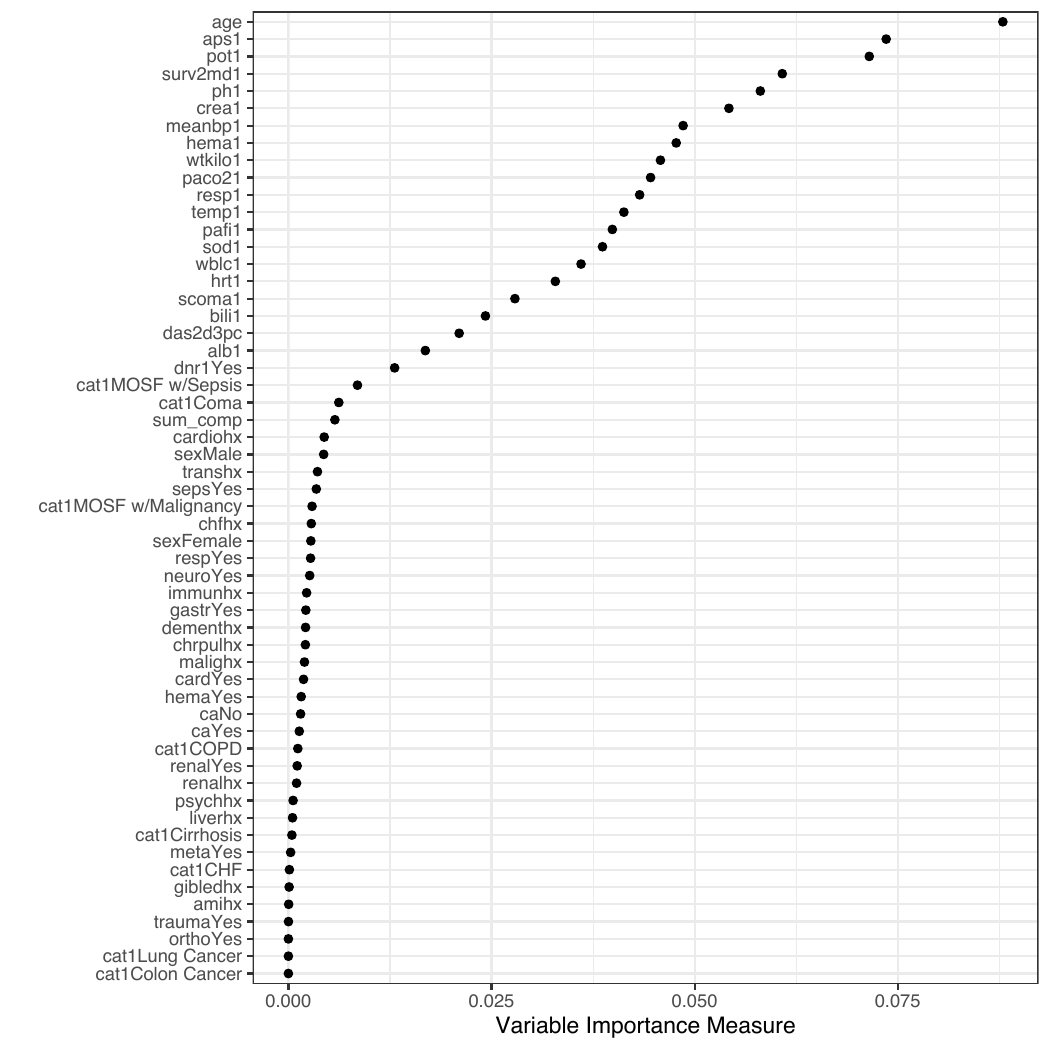}
\caption{Variable importance for estimated CATE.}
\label{fig:importance}
\end{figure}

\begin{figure}[h]
  \centering \includegraphics[width=0.75\maxwidth]{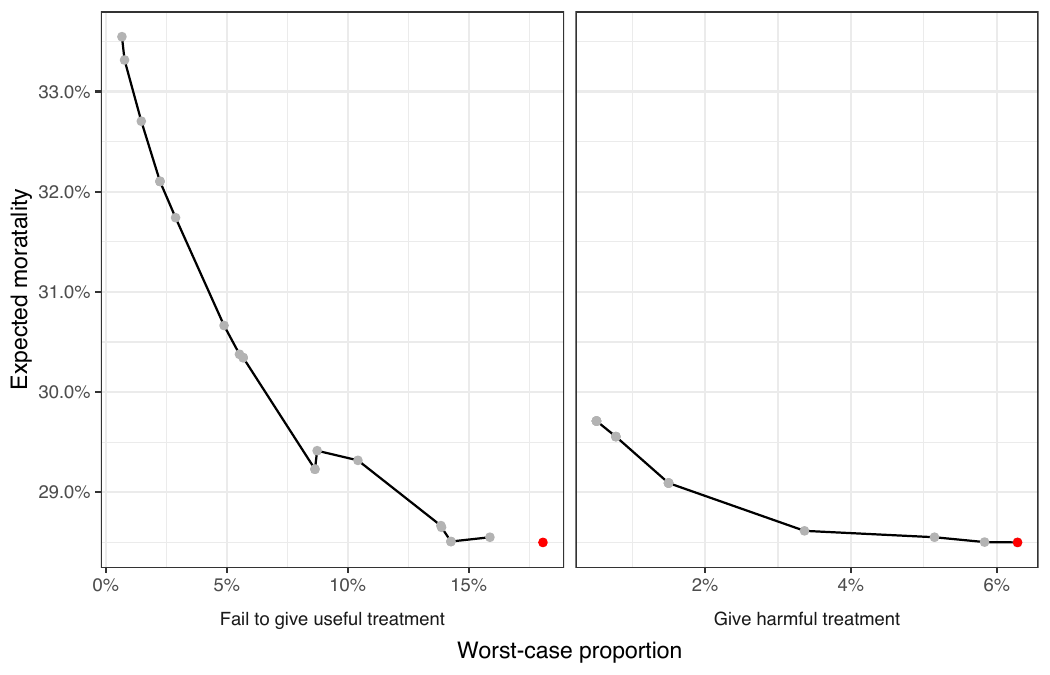}
\caption{Estimated expected mortality versus the worst case proportion of cases
 where the policy fails to give a useful treatment (left panel) and where the
policy gives a harmful treatment (right panel), for linear minimax policies relative to never using RHC, as $u_l$ varies in $[0.65, 1.2]$ and $u_g = 1$.
The red point is the symmetric linear policy.}
\label{fig:mortality_v_pct_harmful}
\end{figure}

\clearpage

\section{Additional results}
\label{sec:more_results}

{
  \paragraph{Full statement of Theorem~\ref{thm:common_form}}
\spacingset{1}

  Let $\pi: \mathcal{X} \to \{0, 1\}$ be a deterministic policy.  For
  comparison policy $\varpi \in \{\pi^\mathbbm{O}, \pi^\mathbbm{1}\}$,
  the worst-case expected utility loss of $\pi$ relative to $\varpi$
  is
  \begin{equation}
    \label{eq:common_worstcase_regret}
    \begin{aligned}
      R_{\sup}(\pi, \varpi) & = C -\E\left[\pi(X)\left\{c_1^\varpi(X)m(1,X) + c_0^\varpi(X) m(0, X) + c^\varpi(X)\right\}\right]\\
      & =  C -\E\left[\pi(X)\left\{c^\varpi_1(X)\Gamma_1(X, D, Y) + c_0^\varpi(X) \Gamma_0(X, D, Y) + c^\varpi(X)\right\}\right],  
    \end{aligned}
  \end{equation}  
  where $C$ is a constant that does not depend on $\pi$. For $u_g \geq u_l$,
  \vspace{-30pt}
  {\singlespacing
  \[
    \begin{aligned}
      c_1^{\pi^\mathbbm{O}}(x) &  = u_l + (u_g - u_l)\delta_\tau(x) &  \qquad
      & c_0^{\pi^\mathbbm{O}}(x) &  = -u_l - (u_g - u_l)\delta_\tau(x) & \qquad
      & c^{\pi^\mathbbm{O}}(x) & = -c,\\
      c^{\pi^\mathbbm{1}}_1(x) & = u_g + \delta_+(x)(u_l - u_g)&  \qquad
      & c^{\pi^\mathbbm{1}}_0(x) & = -u_l - \delta_+(x)(u_g - u_l) &  \qquad
      & c^{\pi^\mathbbm{1}}(x) &= \delta_+(x)(u_g - u_l) - c,
    \end{aligned}
    \]}
    and for $u_g < u_l$,
    {\singlespacing
    \vspace{-30pt}
    \[
      \begin{aligned}
        c^{\pi^\mathbbm{O}}_1(x) & = u_g + \delta_+(x)(u_l - u_g)&  \qquad
        & c^{\pi^\mathbbm{O}}_0(x) & = -u_l - \delta_+(x)(u_g - u_l) &  \qquad
        & c^{\pi^\mathbbm{O}}(x) &= \delta_+(x)(u_g - u_l) - c,\\
        c_1^{\pi^\mathbbm{1}}(x) &  = u_l + (u_g - u_l)\delta_\tau(x) &  \qquad
        & c_0^{\pi^\mathbbm{1}}(x) &  = -u_l - (u_g - u_l)\delta_\tau(x) & \qquad
        & c^{\pi^\mathbbm{1}}(x) & = -c.
      \end{aligned}
      \]
    }Define $\pi^\ast_{\mathbbm{O}} \equiv \argmin_\pi R_{\sup}(\pi,
    \pi^\mathbbm{O})$ and
    $\pi^\ast_{\mathbbm{1}} \equiv \argmin_\pi R_{\sup}(\pi,
    \pi^\mathbbm{1})$ as the minimax expected utility loss solutions
    relative to the never-treat policy and always-treat policy, respectively.
    The worst-case
    regret relative to the oracle policy $\pi^o$ is of the form in
    Eqn~\eqref{eq:common_worstcase_regret} where for $u_g \geq u_l$,

  \vspace{-30pt}
{
  \singlespacing
  \begin{align*}
     \left(\begin{array}{c}
      c^{\pi^o}_1(x)\\  c^{\pi^o}_0(x)\\ c^{\pi^o}(x)
    \end{array}
    \right) = &
    (1 - \pi^\ast_{\mathbbm{1}}(x))\left(\begin{array}{c}
      u_l + (u_g - u_l)\delta_\tau(x)\\ -u_l - (u_g - u_l)\delta_\tau(x)\\ - c
    \end{array}
    \right) + \pi^\ast_{\mathbbm{O}}(x)
    \left(\begin{array}{c}
      u_g - (u_g - u_l)\delta_+(x)\\ -u_l - (u_g - u_l)\delta_+(x)\\ (u_g - u_l)\delta_+(x) - c
    \end{array}
    \right)\\ & +
    (1-\pi^\ast_{\mathbbm{O}}(x))\pi^\ast_{\mathbbm{1}}(x)
    \left(\begin{array}{c}
      u_l + u_g  + (u_g - u_l)(\delta_\tau(x) - \delta_+(x))\\ -2u_l - (u_g - u_l)(\delta_\tau(x) + \delta_+(x))\\ (u_g - u_l)\delta_+(x) - 2c
    \end{array}
    \right),
  \end{align*}
}
\vspace{-30pt}
  and for $u_g < u_l$,
  {\singlespacing
  \begin{align*}
    \left(\begin{array}{c}
     c^{\pi^o}_1(x)\\  c^{\pi^o}_0(x)\\ c^{\pi^o}(x)
   \end{array}
   \right) = &
   (1 - \pi^\ast_{\mathbbm{1}}(x))\left(\begin{array}{c}
    u_g - (u_g - u_l)\delta_+(x)\\ -u_l - (u_g - u_l)\delta_+(x)\\ (u_g - u_l)\delta_+(x) - c
   \end{array}
   \right) + \pi^\ast_{\mathbbm{O}}(x)
   \left(\begin{array}{c}
    u_l + (u_g - u_l)\delta_\tau(x)\\ -u_l - (u_g - u_l)\delta_\tau(x)\\ - c
   \end{array}
   \right)\\ & +
   (1-\pi^\ast_{\mathbbm{O}}(x))\pi^\ast_{\mathbbm{1}}(x)
   \left(\begin{array}{c}
    u_l + u_g  + (u_g - u_l)(\delta_\tau(x) - \delta_+(x))\\ -2u_l - (u_g - u_l)(\delta_\tau(x) + \delta_+(x))\\ (u_g - u_l)\delta_+(x) - 2c
   \end{array}
   \right).
 \end{align*}}
}

\begin{corollary}[Minimax regret relative to the always-treat policy]
  \label{cor:regret_everybody}
  If $u_g \geq u_l$, the minimax regret solution to
  Equation~\eqref{eq:minimax}, $\pi^\ast_{\mathbbm{1}} \equiv
  \argmin_\pi R_{\sup}(\pi, \pi^\mathbbm{1})$, is 
  \[
    \pi^\ast_{\mathbbm{1}}(x) = \left\{
      \begin{array}{l r}
        \bbone\left\{m(1,x) \geq \frac{u_l}{u_g} m(0, x) + \frac{c}{u_g}\right\}, & \delta_+(x) = 0,\\
        \bbone\left\{m(1,x) \geq \frac{u_g}{u_l} m(0, x) + \frac{u_l - u_g + c}{u_l}\right\}, &   \delta_+(x) = 1.
      \end{array}\right. 
  \]
  Otherwise, if $u_g < u_l$, it is given by the symmetric policy,
  \[
    \pi^\ast_\mathbbm{1}(x) = \mathbbm{1}\left\{\tau(x) \geq \frac{c}{u_g}\right\} = \pi^\symm(x).
  \]
\end{corollary}
\begin{assumption}
  \label{a:margin}
  There exists an $\alpha > 0$ and a constant $C$ such that for any $t \geq 0$,
  \begin{enumerate}[label ={(\alph*)}, ref={\theassumption(\alph*)}]
    \item 
       $\Pr(|m(1,X) + m(0,X) - 1| \leq t) \leq Ct^\alpha.$
    \item \label{a:margin_tau} $\Pr(|m(1,X) - m(0,X)| \leq t) \leq Ct^\alpha.$
    \item \label{a:margin_upper} For $u_g > u_l$ and $c$,  
    \[
      \Pr\left(\left|\{u_g - (u_g - u_l)\delta_+(X)\}m(1,X) - \{u_l + (u_g - u_l)\delta_+(X)\}m(0,X) + (u_g - u_l)\delta_+(X) - c\right| \leq t\right) \leq Ct^\alpha.
    \]
    \item \label{a:margin_lower} For $u_g > u_l$ and $c$, 
    
    \[
      \Pr(| \{u_l + (u_g - u_l)\delta_\tau(X)\}\tau(X) - c | \leq t) \leq Ct^\alpha.
    \]
  \end{enumerate}
\end{assumption}

\begin{theorem}
  \label{thm:excess_regret_ones_policy_plugin}
  Let $u_g \geq u_l$, and define 
  \begin{align*}
    \widehat{L}_b(x) & = \{u_l + \hat{\delta}_\tau(x)(u_g - u_l)\} (\hat{m}(1,x) - \hat{m}(0,x)) - c,\\
    \widehat{U}_b(x) & = \{u_g - (u_g - u_l)\hat{\delta}_+(x)\}\hat{m}(1,x) - \{u_l + (u_g - u_l)\hat{\delta}_+(x)\}\hat{m}(0,x) + (u_g - u_l)\hat{\delta}_+(x) - c,
  \end{align*}
  and let $\hat{\pi}_\mathbbm{O}^\text{plug}(x) = \bbone\{\widehat{L}_b(x) \geq 0\}$ and 
  $\hat{\pi}_\mathbbm{1}^\text{plug}(x) = \bbone\{\widehat{U}_b(x) \geq 0\}$ be the plug-in
  estimates of the minimax optimal policies relative to never or always treating.
  Under Assumptions~\ref{a:margin_tau} and \ref{a:margin_lower}, the excess
  worst case regret for $\hat{\pi}_\mathbbm{O}^\text{plug}$ relative to
  $\pi_\mathbbm{O}^\ast$ is
  \[
    R_\text{sup}(\hat{\pi}^\text{plug}_\mathbbm{O}, \pi^\mathbbm{O}) -  R_\text{sup}(\pi^\ast_\mathbbm{O}, \pi^\mathbbm{O}) \leq 2^{1+\alpha} C 
    U \|m - \hat{m}\|_\infty^{1 + \alpha},
  \]
  where $U$ is a constant depending on the utility values, $\alpha$, and $C$.
  Under Assumptions~\ref{a:margin_posclass} and \ref{a:margin_upper}, the excess
  worst case regret for $\hat{\pi}_\mathbbm{1}^\text{plug}$ relative to
  $\pi_\mathbbm{1}^\ast$ is
  \[
    R_\text{sup}(\hat{\pi}^\text{plug}_\mathbbm{1}, \pi^\mathbbm{1}) -  R_\text{sup}(\pi^\ast_\mathbbm{O}, \pi^\mathbbm{O}) \leq 2^{1+\alpha} C 
    U \|m - \hat{m}\|_\infty^{1 + \alpha},
  \]
  where $U$ is a constant depending on the utility values, $\alpha$, and $C$.
\end{theorem}

\begin{corollary}
  \label{cor:excess_regret_oracle_plugin_all}
    Let $u_g \geq u_l$,
     $\hat{\pi}_o$ be a solution to Equation \eqref{eq:opt_pol_emp} with alternative policy $\varpi = \pi^o$ and with nuisance functions
    $\hat{m}$ and $\hat{d}$  fit on a separate sample and nuisance classifiers $\hat{\delta}_+(x) = \bbone\{m(1,x) + m(0,x) - 1 \geq 0\}, \hat{\delta}_\tau(m(1,x) - m(0,x) \geq 0), \hat{\pi}^\text{plug}_\mathbbm{O}$, and $\hat{\pi}^\text{plug}_\mathbbm{1}$, and let $\pi^\ast_o$ be a solution to Equation \eqref{eq:minimax}, with alternative policy $\varpi = \pi^o$. Under the strict overlap condition in  Assumption~\ref{a:ignore},
    the excess worst-case regret of $\hat{\pi}_o$ relative to $\pi^\ast_o$ satisfies
    \begin{align*}
      R_\text{sup}(\hat{\pi}_0, \pi^o) - R_\text{sup}(\pi^\ast_o, \pi^o) & \leq 2U_1 \times \left(\frac{6 + \eta}{\eta} \times \left(2\mathcal{R}_n(\Pi) +  \frac{t}{\sqrt{n}}\right)+ \|\hat{m} - m\|_2 \|\hat{\gamma} - \gamma\|_2\right)\\
      & \qquad +2^{2 + \alpha}C U_2 \|\hat{m} - m\|_\infty^{1 + \alpha} 
       + (u_g - u_l)\frac{t}{2\sqrt{n}},
    \end{align*}
    with probability at least $1 - 2\exp\left(-\frac{t^2}{2}\right)$, where $U_1$ is a constant depending on the utility values, and $U_2$ is a constant depending on the utility values, $\alpha$, and $C$.
  \end{corollary}

\paragraph{Upper bounds on worst-case proportion of units given a harmful treatment or are failed to be given a useful treatment.}
\mbox{}\\
First, note that
\begin{align*}
  \Pr(Y(\pi(X)) < Y(0)) & = \Pr(\pi(X) = 1, Y(0) = 1, Y(1) = 0) = \E\left[\pi(X)e_{01}(X)\right],\\
  \Pr(Y(\pi(X)) < Y(1)) & = \Pr(\pi(X) = 0, Y(0) = 0, Y(1) = 1) = \E\left[(1 - \pi(X))(\tau(X) + e_{01}(X))\right].
\end{align*}
Plugging in the upper and lower bounds on $e_{01}(X)$ in Section~\ref{sec:pop_asymm}, we get the following upper bounds:
\begin{align*}
  \sup_{e_{01}(x) \in [L(x), U(x)]} \Pr(Y(\pi(X)) < Y(0)) & = \E\left(\pi(X)\left[m(0, X) + \delta_+(X)\left\{1 - m(1,X) - m(0,X)\right\} \right]\right),\\
\sup_{e_{01}(x) \in [L(x), U(x)]} \Pr(Y(\pi(X)) < Y(1)) & = \E\left(\left\{1 - \pi(X)\right\}\left[m(1, X) + \delta_+(X)\left\{1 - m(1,X) - m(0,X)\right\} \right]\right).
\end{align*}

\section{Continuous outcomes}
\label{sec:continuous}

Here we briefly consider extending our framework to the case with a binary decision $D \in \{0,1\}$ but continuous potential outcomes $(Y(0), Y(1)) \in \R^{2}$. We define the utility function $u(d;y_1,y_0)$ as before and write the value function as
\[
  V(\pi) = \E\left[u(0; Y(1), Y(0)) + \pi(X) \times (u(1;Y(1),Y(0)) - u(0; Y(1), Y(0)))\right].
\]
Defining $e_{y_1 y_0}(x)$ as the conditional joint density of the potential
outcomes given $X = x$, the expected utility loss relative to $\varpi$ is
\[
  V(\varpi) - V(\pi) = \E\left[\pi(X) \int_{y_1}\int_{y_0} (u(1; y_1, y_0) - u(0; y_1, y_0)) e_{y_1 y_0}(x)dy_0 dy_1\right].
\]
With continuous outcomes, there are many potential ways to choose
the utility function. One  option is a utility function such that 
$u(1; y_1, y_0) - u(0; y_1, y_0) = y_1 - y_0 - u_\ell \bbone\{y_1 < y_0\}$.
This is analogous to the utility function with binary outcomes, with an explicit
utility gain/loss associated with a harmful ($Y(1) < Y(0)$) or useful
($Y(1) > Y(0)$) treatment. Defining the conditional probability of harm as
$h(x) = \Pr(Y(1) < Y(0) \mid X = x)$, we can write the expected utility loss as
\[
  V(\varpi) - V(\pi) = \E\left[\pi(X) \{\tau(x) - u_\ell h(x)\}\right].
\]

As in the binary case, we can use sharp bounds on the distribution of individual
treatment effects \citep{Fan2010}, $h(x) \in [L_h(x), U_h(x)]$, where
\begin{align*}
  L_h(x) &= \max\{\sup_y \{F_1(y \mid x) - F_0(y \mid x)\}, 0\},\\
  U_h(x) &= 1 + \min\{\inf_y \{F_1(y \mid x) - F_0(y \mid x)\}, 0\},
\end{align*}
where $F_1(\cdot \mid x), F_0(\cdot \mid x)$ are the marginal CDFs conditional
on $X = x$ for the potential outcomes under treatment and control, respectively.
Now we can again define the minimax expected utility loss policy as the policy
that solves
\[
  \min_\pi \; \max_{h(x) \in [L(x), U(x)]} \; \E\left[\pi(X) \{\tau(x) - u_l h(x)\}\right].
\]

While this again leads to a point-identifiable objective, we note two ways in
which this problem is more difficult than with binary outcomes.
First, the upper and lower bounds on the probability of harm involves the
conditional CDFs of $Y(1)$ and $Y(0)$. These can be more difficult to estimate
than the conditional expected outcomes. Second, the bounds involve supremums
and infimums over all $y \in \R$. This may require a more careful analysis and
stronger assumptions in order to ensure that the default plug-in approach that
we suggest for the binary outcome case will lead to reasonable guarantees on
the excess expected utility loss.

\section{Proofs and derivations}
\label{sec:proofs}


\subsection{Main results}

\paragraph{Derivation of the expected utility loss}

First, notice that the expected utility of policy $\pi$ is
\begin{align*}
  V(\pi) & = \E\left[\sum_{y_1 = 0}^1\sum_{y_0 = 0}^1  e_{y_1y_0}(X) u(0; y_1, y_0)\right] + \underbrace{\E\left[\sum_{y_1 = 0}^1\sum_{y_0 = 0}^1 \pi(X) e_{y_1y_0}(X) \left\{u(1; y_1, y_0) - u(0; y_1, y_0)\right\}\right]}_{(\ast)}.
\end{align*}
The second term can be written as
\begin{align*}
  (\ast) & = \E\left[\pi(X) \left\{e_{10}(X)(u_g - c) -e_{01}(X)(u_l + c) -e_{00}(X)c -e_{11}(X)c\right\}\right]\\
  & = \E\left[\pi(X)) \left\{e_{10}(X)u_g -e_{01}(X)u_l - c(e_{10}(X) + e_{01}(X) + e_{00}(X) +e_{11}(X))\right\}\right]\\
  &= \E\left[\pi(X) \left\{(\tau(X) + e_{01}(X))u_g -e_{01}(X)u_l - c\right\}\right]\\
  & = \E\left[\pi(X) \left\{u_g\tau(X) + (u_g - u_l)e_{01}(X) - c\right\}\right],
\end{align*}
where we have used the fact that $\tau(x) = e_{10}(x) - e_{01}(x)$.
So the expected utility loss of policy $\pi$ relative to policy $\varpi$ is
\[
\begin{aligned}
  V(\varpi) - V(\pi) & = \E\left[(\varpi(X) - \pi(X)) \left\{u_g\tau(X) + (u_g - u_l)e_{01}(X) - c\right\}\right].
\end{aligned}  
\]

\begin{proof}[Proof of Theorem~\ref{thm:common_form}]
  Define $b(x) = u_g\tau(x) + (u_g - u_l)e_{01}(X) - c$, and
  \[
    \begin{aligned}
      L_b(x) & = \min_{e(x) \in [L(x), U(x)]} \left\{u_g\tau(x) + (u_g - u_l)e_{01}(X) - c\right\},\\
      U_b(x) & = \max_{e(x) \in [L(x), U(x)]} \left\{u_g\tau(x) + (u_g - u_l)e_{01}(X) - c\right\}.
    \end{aligned}  
  \]
  Note that the worst-case regret relative to the always and never treat policies are
  \[
    \begin{aligned}
      R_\text{sup}(\pi, \pi^\mathbbm{O}) & =-\E[\pi(X) L_b(X)],\\
      R_{\text{sup}}(\pi, \pi^\mathbbm{1}) & = \E[\{1 - \pi(X)\} U_b(X)] = \E[U_b(X)] - \E[\pi(X)U_b(X)].
    \end{aligned}
  \]
  From this, we can find the unconstrained minimax regret policies
  \[
    \begin{aligned}
      \pi^\ast_\mathbbm{O} & = \underset{\pi}{\argmin} -\E[\pi(X) L_b(X)] = &  \mathbbm{1}\{L_b(x) \geq 0\},\\
      \pi^\ast_\mathbbm{1} & = \underset{\pi}{\argmin} -\E[\pi(X) U_b(X)] = &  \mathbbm{1}\{U_b(x) \geq 0\}.
    \end{aligned}
  \]
  Now, the oracle policy is $\pi^o(x) = \mathbbm{1}\{b(x) \geq 0\}$.
  So if $L_b(x) \geq 0 \Leftrightarrow \pi^\ast_\mathbbm{O}(x) = 1$ then $\pi^o(x) = 1$ for all possible values of the 
  principal score $e_{01}(x)$. In this case,
  \[
    \max_{e(x) \in [L(x), U(x)]} \{\pi^o(x) - \pi(x)\}b(x) = \{1 - \pi(x)\}U_b(x).
  \]
  Similarly, if $U_b(x) < 0 \Leftrightarrow \pi^\ast_\mathbbm{1}(x) = 0$ then $\pi^o(x) = 0$, and
  \[
    \max_{e(x) \in [L(x), U(x)]} \{\pi^o(x) - \pi(x)\}b(x) = -\pi(x)L_b(x).
  \]
  Finally, if $L_b(x) < 0$ and $U_b(x) \geq 0$ (so $\pi^\ast_\mathbbm{O}(x) = 0$
  and $\pi^\ast_\mathbbm{1}(x) = 1$), then the oracle policy can be
  either 0 or 1, $\pi^o(x) \in \{0,1\}$. Therefore, 
  \[
    \max_{e(x) \in [L(x), U(x)]} \{\pi^o(x) - \pi(x)\}b(x) =
      \max\{(1 - \pi(x)) U_b(x), -\pi(x) L_b(x)\} = U_b(x) - \pi(x)\{U_b(x) + L_b(x)\}.
  \]
  Putting together the pieces, the worst-case regret relative to the oracle is
  \[
    \begin{aligned}
      R_\text{sup}(\pi, \pi^o) & = \E([\pi^\ast_\mathbbm{O}(X)+ \{1 - \pi^\ast_\mathbbm{O}(X)\} \pi^\ast_\mathbbm{1}(X)]U_b(X))\\
      & \hspace{0.5cm}- \E[\pi(X)\left\{\pi^\ast_\mathbbm{O}(X)U_b(X) + (1 - \pi^\ast_\mathbbm{1}(X))L_b(X) + (1 - \pi^\ast_\mathbbm{O}(X))\pi^\ast_\mathbbm{1}(X)(U_b(X) + L_b(X))\right\}],
    \end{aligned}
  \]
  and the unconstrained minimizer is
  \[
    \pi^\ast_o  = \underset{\pi}{\argmin} R_\text{sup}(\pi, \pi^o) =
    \left\{\begin{array}{c c}
      \pi^\ast_\mathbbm{1}(x), & \pi^\ast_\mathbbm{O}(x) = 1,\\
      \pi^\ast_\mathbbm{O}(x), & \pi^\ast_\mathbbm{1}(x) = 0,\\
      \mathbbm{1}\{U_b(x) \geq - L_b(x)\}, & \pi^\ast_\mathbbm{O}(x) = 0,\pi^\ast_\mathbbm{1}(x) = 1.
    \end{array} 
    \right. 
  \]
  Now notice that $\pi^\ast_\mathbbm{O}(x) = 1 \Leftrightarrow L_b(x) \geq 0
  \Rightarrow U_b(x) \geq 0 \Leftrightarrow \pi^\ast_\mathbbm{1}(x) = 1$, and
  $\pi^\ast_\mathbbm{1}(x) = 0 \Leftrightarrow U_b(x) < 0
  \Rightarrow L_b(x) < 0 \Leftrightarrow \pi^\ast_\mathbbm{O}(x) = 0$, so
  we can simplify this to
  \[
    \pi^\ast_o  = \underset{\pi}{\argmin} R_\text{sup}(\pi, \pi^o) =
    \left\{\begin{array}{c c}
      1, & \pi^\ast_\mathbbm{O}(x) = 1,\\
      0, & \pi^\ast_\mathbbm{1}(x) = 0,\\
      \mathbbm{1}\{U_b(x) \geq - L_b(x)\}, & \pi^\ast_\mathbbm{O}(x) = 0,\pi^\ast_\mathbbm{1}(x) = 1.
    \end{array} 
    \right. 
  \]

  To complete the proof, we need to compute $L_b(x)$ and $U_b(x)$. 
  First, we begin with the case where $u_g \geq u_l$. In this case,
  \[
    \begin{aligned}
      L_b(x) & = \{u_l + (u_g - u_l)\delta_\tau(x)\}\tau(x) - c = \{u_l + (u_g - u_l)\delta_\tau(x)\}m(1,x) - \{u_l + (u_g - u_l)\delta_\tau(x)\}m(0,x) - c,\\
      U_b(x) & = \{u_g - (u_g - u_l)\delta_+(x)\}m(1,x) - \{u_l + (u_g - u_l)\delta_+(x)\}m(0,x) + (u_g - u_l)\delta_+(x) - c.
    \end{aligned}  
  \]
  This gives the form of the worst-case regret relative to $\pi^\mathbbm{1}$ and
  $\pi^\mathbbm{O}$.
  For the worst-case regret relative to the oracle, we collect terms to get
  \[
    \hspace{-1.9cm}
    \left(\begin{array}{c}
      c^{\pi^o}_1(x)\\  c^{\pi^o}_0(x)\\ c^{\pi^o}(x)
    \end{array}
    \right) =  \left\{
      \begin{array}{l l}
        (u_l + (u_g - u_l)\delta_\tau(x), -u_l - (u_g - u_l)\delta_\tau(x), - c), & \pi^\ast_{\mathbbm{1}}(x) = 0,\\
        (u_g - (u_g - u_l)\delta_+(x), -u_l - (u_g - u_l)\delta_+(x), (u_g - u_l)\delta_+(x) - c), & \pi^\ast_{\mathbbm{O}}(x) = 1,\\
        (u_l + u_g  + (u_g - u_l)(\delta_\tau(x) - \delta_+(x)), -2u_l - (u_g - u_l)(\delta_\tau(x) + \delta_+(x)), (u_g - u_l)\delta_+(x) - 2c), & \pi^\ast_{\mathbbm{O}}(x) \neq \pi^\ast_{\mathbbm{1}}(x).
      \end{array}\right. 
  \]

  Now for the case where $u_g < u_l$, the lower and upper bounds switch:
  \[
    \begin{aligned}
      L_b(x) & = \{u_g - (u_g - u_l)\delta_+(x)\}m(1,x) - \{u_l + (u_g - u_l)\delta_+(x)\}m(0,x) + (u_g - u_l)\delta_+(x) - c,\\
      U_b(x) & = \{u_l + (u_g - u_l)\delta_\tau(x)\}\tau(x) - c = \{u_l + (u_g - u_l)\delta_\tau(x)\}m(1,x) - \{u_l + (u_g - u_l)\delta_\tau(x)\}m(0,x) - c.
    \end{aligned}  
  \]
  For the worst-case regret relative to the oracle, we collect terms to get
  \[
    \hspace{-2cm}
    \left(\begin{array}{c}
      c^{\pi^o}_1(x)\\  c^{\pi^o}_0(x)\\ c^{\pi^o}(x)
    \end{array}
    \right) =  \left\{
      \begin{array}{l l}
        (u_g - (u_g - u_l)\delta_+(x), -u_l - (u_g - u_l)\delta_+(x), (u_g - u_l)\delta_+(x) - c), & \pi^\ast_{\mathbbm{1}}(x) = 0,\\
        (u_l + (u_g - u_l)\delta_\tau(x), -u_l - (u_g - u_l)\delta_\tau(x), - c), & \pi^\ast_{\mathbbm{O}}(x) = 1,\\
        (u_l + u_g  + (u_g - u_l)(\delta_\tau(x) - \delta_+(x)), -2u_l - (u_g - u_l)(\delta_\tau(x) + \delta_+(x)), (u_g - u_l)\delta_+(x) - 2c), & \pi^\ast_{\mathbbm{O}}(x) \neq \pi^\ast_{\mathbbm{1}}(x).
      \end{array}\right. 
  \]
  
\end{proof}

For the Proofs of Theorems ~\ref{thm:excess_regret_ones_policy}
\ref{thm:excess_regret_ones_policy_plugin} and \ref{thm:excess_regret_oracle},
we prove the result for the case where $u_g \geq u_l$.
The case where $u_g < u_l$ follows in the same way, with $\pi_\mathbbm{O}$ taking the place for $\pi_\mathbbm{1}$ 

\begin{proof}[Proof of Theorem~\ref{thm:excess_regret_ones_policy}]
  This follows directly from combining Lemmas~\ref{lem:regret_decomp} and \ref{lem:regret_posclass} below via the union bound.
\end{proof}

\begin{proof}[Proof of Theorem~\ref{thm:excess_regret_oracle}]
  This follows directly from combining Lemmas~\ref{lem:regret_decomp} and \ref{lem:regret_oracle} below via the union bound.
\end{proof}

\begin{proof}[Proof of Corollary~\ref{cor:excess_regret_ones_policy_plugin_posclass}]
  This follows by combining Theorem~\ref{thm:excess_regret_ones_policy} and Lemma~\ref{lem:regret_posclass_margin} below.
\end{proof}

\begin{proof}[Proof of Theorem~\ref{thm:excess_regret_ones_policy_plugin}]
  This follows from Lemmas~\ref{lem:regret_posclass_margin} and \ref{lem:regret_plugin_margin} below.
\end{proof}

  \begin{proof}[Proof of Corollary~\ref{cor:excess_regret_oracle_plugin_all}]
    This follows by combining Theorem~\ref{thm:excess_regret_oracle}, Theorem~\ref{thm:excess_regret_ones_policy_plugin}, and Lemma~\ref{lem:regret_posclass_margin} below.
  \end{proof}

\subsection{Auxiliary lemmas}

\begin{lemma}
  \label{lem:regret_decomp}
  Let $\hat{\pi}$ be a solution to Equation \eqref{eq:opt_pol_emp} with nuisance functions
  $\hat{m}$ and $\hat{d}$ fit on a separate sample, and let $\pi^\ast$ be a solution to Equation \eqref{eq:minimax}. Under the strict overlap condition in Assumption~\ref{a:ignore},
  the excess worst-case regret between $\hat{\pi}$ and $\pi^\ast$ is bounded by
  \begin{align*}
    R_\text{sup}(\hat{\pi}, \varpi) - R_\text{sup}(\pi^\ast, \varpi) & \leq U \times \left\{\frac{6 + \eta}{\eta} \times \left(2\mathcal{R}_n(\Pi) +  \frac{t}{\sqrt{n}}\right)+   \sum_{w=0,1}\left\|\hat{\gamma}_w - \gamma_w\right\|_2 \left\|\hat{m}(w,\cdot) - m(w,\cdot)\right\|_2\right\}\\
    & \qquad + \sup_{\pi \in \Pi}\left|\tilde{R}(\pi, \hat{c}(\cdot), m(\cdot); \varpi) - \tilde{R}(\pi, c(\cdot), m(\cdot); \varpi)\right|,
  \end{align*}
  with probability at least $1 - \exp\left(-\frac{t^2}{2}\right)$, where
  \[
  \tilde{R}_\text{sup}(\pi, c(\cdot), m(\cdot); \varpi) = \frac{1}{n}\sum_{i=1}^n \pi(X_i) \left\{c_1(X_i) m(1, X_i) + c_0(X_i)m(0, X_i) + c(X_i) \right\}.
\]
and $U$ is a constant depending on the utility values.

\end{lemma}

\begin{proof}[Proof of Lemma~\ref{lem:regret_decomp}]
  First, note that the excess regret can be decomposed into
  \[
    \begin{aligned}
      R_\text{sup}(\hat{\pi}, \varpi) - R_\text{sup}(\pi^\ast, \varpi) & =
      R_\text{sup}(\hat{\pi}, \varpi)  - \hat{R}_\text{sup}(\hat{\pi}, \varpi) + \underbrace{\hat{R}_\text{sup}(\hat{\pi}, \varpi) - \hat{R}_\text{sup}(\pi^\ast, \varpi)}_{\leq 0}
      + \hat{R}_\text{sup}(\pi^\ast, \varpi) - R_\text{sup}(\pi^\ast, \varpi)\\
      & \leq 2 \sup_{\pi \in \Pi}|\hat{R}_\text{sup}(\pi, \varpi) - R_\text{sup}(\pi, \varpi)|,
    \end{aligned}
  \]
  where we have used that $\hat{\pi}$ minimizes  $\hat{R}_\text{sup}(\pi^\ast, \varpi)$.
  
  We further decompose $\hat{R}_\text{sup}(\pi, \varpi) - R_\text{sup}(\pi, \varpi)$ into
  \begin{align*}
    \hat{R}_\text{sup}(\pi, \varpi) - R_\text{sup}(\pi, \varpi) & = \hat{R}_\text{sup}(\pi, \varpi)  - \tilde{R}(\pi, \hat{c}(\cdot), m(\cdot); \varpi) \tag{\text{a}}\\
    & + \tilde{R}(\pi, c(\cdot), m(\cdot); \varpi) - R_\text{sup}(\pi, \varpi) \tag{\text{b}}\\
    & + \tilde{R}(\pi, \hat{c}(\cdot), m(\cdot); \varpi) - \tilde{R}(\pi, c(\cdot), m(\cdot); \varpi)
  \end{align*}

  We will now control terms (a) and (b), following closely the proof of Lemma 4 in \citet{Athey2021}.
  First note that we have the decompositions
  \begin{align*}
    \hat{\Gamma}_1(X, D, Y) - m(1,X) &= \hat{m}(1,X) - m(1,X) + \frac{D}{\hat{d}(X)}\left\{Y - \hat{m}(1,X)\right\}\\
    & = \left\{\hat{m}(1,X) - m(1,X)\right\} \times \left(1 - \frac{D}{d(X)}\right) + \frac{D}{\hat{d}(X)} \left\{Y - m(1,X)\right\}\\
    & \qquad  + \left(\frac{D}{\hat{d}(X)} - \frac{D}{d(X)}\right) \times \left\{m(1,X) - \hat{m}(1,X)\right\}
  \end{align*}
and
\begin{align*}
  \hat{\Gamma}_0(X, D, Y) - m(0,X) &= \hat{m}(0,X) - m(0,X) + \frac{1 - D}{1 - \hat{d}(X)}\left\{Y - \hat{m}(0,X)\right\}\\
  & = \left\{\hat{m}(0,X) - m(0,X)\right\} \times \left(1 - \frac{1 - D}{1 - d(X)}\right) + \frac{1 - D}{1 - \hat{d}(X)} \left\{Y - m(0,X)\right\}\\
  & \qquad  + \left(\frac{1 - D}{1 - \hat{d}(X)} - \frac{1 - D}{1 - d(X)}\right) \times \left\{m(0,X) - \hat{m}(0,X)\right\}.
\end{align*}

With this, we can compute the expectation of term (a):
\begin{align*}
  \E[(a)] &= \E\left[\pi(X) \left(\hat{c}_1(X)\left\{\hat{\Gamma}_1(X, D, Y) - m(1,X)\right\} + \hat{c}_0(X)\left\{\hat{\Gamma}_0(X, D, Y) - m(0,X)\right\}\right)\right]\\
  & = \E\left[\pi(X) \hat{c}_1(X) \left(\frac{D}{\hat{d}(X)} - \frac{D}{d(X)}\right) \times \left(m(1,X) - \hat{m}(1,X)\right)\right]\\
  & \qquad + \E\left[\pi(X) \hat{c}_0(X) \left(\frac{1 - D}{1 - \hat{d}(X)} - \frac{1 - D}{1 - d(X)}\right) \times \left(m(0,X) - \hat{m}(0,X)\right)\right],
\end{align*}
where we have used the fact that
\begin{align*}
  \E\left[\pi(X) \hat{c}_1(X) \left(\hat{m}(1,X) - m(1,X)\right) \times \left(1 - \frac{D}{d(X)}\right)\right] & = 0,\\
  \E\left[\pi(X) \hat{c}_1(X)  \frac{D}{\hat{d}(X)} \left\{Y - m(1,X)\right\} \right] & = 0,\\
  \E\left[\pi(X) \hat{c}_0(X) \left\{\hat{m}(0,X) - m(0,X)\right\} \times \left(1 - \frac{1 - D}{1 - d(X)}\right)\right] & = 0,\\
  \E\left[\pi(X) \hat{c}_0(X)  \frac{1 - D}{1 - \hat{d}(X)} \left\{Y - m(0,X)\right\} \right]& = 0,\\
\end{align*}
because $\hat{c}$, $\hat{m}$, and $\hat{d}$ come from a different sample.
  
The expectation of term (b) is
\begin{align*}
  \E[(b)] & = \E[\pi(X)\left\{c_1(X)m(1,X) + c_0(X) m(0,X) + c(X)\right\}] - R_\text{sup}(\pi, \varpi) = 0.
\end{align*}
Now define a function  $f:\mathcal{X} \to \R$ as
\begin{eqnarray*}
  f_\pi(x,d , y) &\equiv & \pi(x) \left[\hat{c}_1(x)\left\{\hat{\Gamma}_1(x, d, y) - m(1,x)\right\} + \hat{c}_0(x)\left\{\hat{\Gamma}_0(x, d, y) - m(0,x)\right\} \right]\\
  &&  + \pi(x) \left\{c_1(x)m(1,x) + c_0(x) m(0,x) + c(x)\right\}
\end{eqnarray*}
and the function class $\mathcal{F} \equiv \{f_\pi \mid \pi \in \Pi\}$ as the set of all functions $f$ as we vary $\pi$ in $\Pi$.

With this notation, we can write the sum of terms (a) and (b) as

\[
  (a) + (b)  = \frac{1}{n}\sum_{i=1}^n f_\pi(X_i, D_i, Y_i) - R_\text{sup}(\pi, \varpi),
\]
and from above the expectation of $f_\pi(X_i, D_i, Y_i)$ is
\begin{align*}
  \E[f_\pi(X, D, Y)] & = R_\text{sup}(\pi, \varpi) + \E\left[\pi(X) \hat{c}_1(X) \left(\frac{D}{\hat{d}(X)} - \frac{D}{d(X)}\right) \times \left(m(1,X) - \hat{m}(1,X)\right)\right]\\
  & \qquad + \E\left[\pi(X) \hat{c}_0(X) \left(\frac{1 - D}{1 - \hat{d}(X)} - \frac{1 - D}{1 - d(X)}\right) \times \left(m(0,X) - \hat{m}(0,X)\right)\right].
\end{align*}

Putting together the pieces, we can write
\begin{align*}
  |(\text{a}) + (\text{b})| & = \left| \frac{1}{n}\sum_{i=1}^n f_\pi(X_i, D_i, Y_i) - R_\text{sup}(\pi, \varpi) \right|\\
  & = \left| \frac{1}{n}\sum_{i=1}^n f_\pi(X_i, D_i, Y_i) - \E[f_\pi(X, D, Y)] + \E[f_\pi(X, D, Y)] -  R_\text{sup}(\pi, \varpi)\right|\\
  & \leq \left| \frac{1}{n}\sum_{i=1}^n f_\pi(X_i, D_i, Y_i) - \E[f_\pi(X, D, Y)] \right|\\
  & \qquad  +    \left| \E\left[\pi(X) \hat{c}_1(X) \left(\frac{D}{\hat{d}(X)} - \frac{D}{d(X)}\right) \times \left(m(1,X) - \hat{m}(1,X)\right)\right]\right|\\
  & \qquad + \left|\E\left[\pi(X) \hat{c}_0(X) \left(\frac{1 - D}{1 - \hat{d}(X)} - \frac{1 - D}{1 - d(X)}\right) \times \left(m(0,X) - \hat{m}(0,X)\right)\right] \right|.
\end{align*}



Now notice that for $\varpi \in \{\pi^\mathbbm{O}, \pi^\mathbbm{1}, \pi^o\}$, $|c_1(x)m(1,x) + c_0(x) m(0,x) + c(x)|$, $|c_1(x)|$, and $|c_0(x)|$ are bounded by some constant $U$ depending on the utilities.
From the decompositions above, by the strict overlap condition in Assumption~\ref{a:ignore}, and because $Y_i \in \{0,1\}$,
\begin{align*}
  \left|\hat{\Gamma}_1(X_i, D_i, Y_i) - m(1,x)\right| & \leq  \left|\{\hat{m}(1,X_i) - m(1,X_i)\} \times \left(1 - \frac{D_i}{d(X_i)}\right)\right|\\
  & \qquad + \left|\frac{D_i}{\hat{d}(X_i)} \times \{Y_i - m(1, X_i)\}\right|\\
  & \qquad + \left|\left(\frac{D_i}{d(X_i)} - \frac{D_i}{\hat{d}(X_i)}\right)\times \left\{\hat{m}(1,X_i) - m(1,X_i)\right\}\right|\\
  & \leq \frac{1 - \eta}{\eta}\|\hat{m} - m\|_\infty + \frac{1}{\eta} + \left\|\frac{1}{d} - \frac{1}{\hat{d}}\right\|_\infty\|\hat{m} - m\|_\infty\\
  & \leq \frac{1 - \eta}{\eta} + \frac{1}{\eta} + \frac{1}{\eta} - \frac{1}{1-\eta}\\
  & \leq \frac{3}{\eta}.
\end{align*}
Similarly,
\[
  \left|\hat{\Gamma}_0(X_i, D_i, Y_i) - m(0,x) \right| \leq \frac{1 - \eta}{\eta}\|\hat{m} - m\|_\infty + \frac{1}{\eta} + \left\|\frac{1}{1 - d} - \frac{1}{1 - \hat{d}}\right\|_\infty\|\hat{m} - m\|_\infty \leq \frac{3}{\eta}.
\]
This combines to give that for any $x,d,y$, 
\[
  |f_\pi(x,d,y)| \leq U \times \frac{6 + \eta}{\eta}.
\]
This also shows that the Rademacher complexity of $\mathcal{F}$ is:
\[
  \mathcal{R}_n(\mathcal{F}) =  2U \times \frac{6 + \eta}{\eta} \times \mathcal{R}_n(\Pi).
\]
So by \citet{wainwright_2019} Theorem 4.2, for any $n \geq 1$ and $t \geq 0$,
\[
  \sup_{f \in \mathcal{F}} \left|\frac{1}{n}\sum_{i=1}^n f(X_i) - \E[f(X)]\right| \leq 2U \times \frac{6 + \eta}{\eta} \times \left(2\mathcal{R}_n(\Pi) +  \frac{t}{\sqrt{n}}\right),
\]
with probability at least $1 - \exp\left(-\frac{t^2}{2}\right)$.

Finally, notice that by the Cauchy-Schwarz inequality,
\begin{align*}
  & \left| \E\left[\pi(X) \hat{c}_1(X) \left(\frac{D}{\hat{d}(X)} - \frac{D}{d(X)}\right) \times \left(m(1,X) - \hat{m}(1,X)\right)\right]\right|\\
  & \qquad \leq U\sqrt{\E\left[\left(\frac{D}{\hat{d}(X)} - \frac{D}{d(X)}\right)^2\right] \E\left[\left(m(1,X) - \hat{m}(1,X)\right)^2\right] },
\end{align*}

and
\begin{align*}
  & \left| \E\left[\pi(X) \hat{c}_0(X) \left(\frac{1 - D}{1 - \hat{d}(X)} - \frac{1 - D}{1 - d(X)}\right) \times \left(m(0,X) - \hat{m}(0,X)\right)\right]\right|\\
  & \qquad \leq U\sqrt{\E\left[\left(\frac{1 - D}{1 - \hat{d}(X)} - \frac{1 - D}{1 - d(X)}\right)^2\right] \E\left[\left(m(0,X) - \hat{m}(0,X)\right)^2\right] }.
\end{align*}

Combining these two bounds gives the result.
\end{proof}

\begin{lemma}
  \label{lem:regret_posclass}
    For $u_g \geq u_l$,
    \[
      \sup_{\pi \in \Pi}\left|\tilde{R}_\text{sup}(\pi, \hat{c}, m; \pi^\ast_{\mathbbm{1}}) - \tilde{R}_\text{sup}(\pi, c, m, \pi^\ast_{\mathbbm{1}})\right| \leq (u_g - u_l) \times \left(R_+(\hat{\delta}_+) + \frac{t}{2\sqrt{n}}\right),
    \]
    with probability at least $1 - e^{-\frac{t^2}{2}}$.
\end{lemma}

\begin{proof}[Proof of Lemma~\ref{lem:regret_posclass}]

First we have the bound, 
  \begin{align*}
    \tilde{R}_\text{sup}(\pi, \hat{c}, m; \pi^\ast_{\mathbbm{1}}) - \tilde{R}_\text{sup}(\pi, c, m, \pi^\ast_{\mathbbm{1}}) & = \frac{u_g - u_l}{n}\sum_{i=1}^n\pi(X_i)\left\{\hat{\delta}_+(X_i) - \delta_+(X_i)\right\}\left\{m(1, X_i)  + m(0, X_i) - 1\right\}\\
    & \leq \frac{u_g - u_l}{n}\sum_{i=1}^n\bbone\left\{\hat{\delta}_+(X_i) \neq \delta_+(X_i)\right\}\left|m(1, X_i)  + m(0, X_i) - 1\right|.
  \end{align*}
  
  Now note that
  \begin{align*}
     \E\left[\bbone\left\{\hat{\delta}_+(X_i) \neq \delta_+(X_i)\right\}\left|m(1, X_i)  + m(0, X_i) - 1\right|\right] = R_+(\hat{\delta}_+)
  \end{align*}

For each $i$, since $\bbone\left\{\hat{\delta}_+(X_i) \neq \delta_+(X_i)\right\}\left|m(1, X_i)  + m(0, X_i) - 1\right|$ is bounded between 0 and 1, it is sub-Gaussian with scale parameter 1. Furthermore, they are independent across $i=1,\ldots,n$, so by the Hoeffding bound,
  \[
    \Pr\left(\frac{1}{n}\sum_{i=1}^n\bbone\left\{\hat{\delta}_+(X_i) \neq \delta_+(X_i)\right\}\left|m(1, X_i)  + m(0, X_i) - 1\right| \leq R_+(\hat{\delta}_+)+ \frac{t}{\sqrt{n}}\right) \geq 1 - \exp\left(-2t^2\right).
  \]

  Combining this with the deterministic bound above gives the result.

\end{proof}

\begin{lemma}
  \label{lem:regret_oracle}
  For $u_g \geq u_l$,
  \begin{eqnarray*}
   && \sup_{\pi \in \Pi} \left|\tilde{R}_\text{sup}(\pi, \hat{c}, m; \pi^\ast_{o}) - \tilde{R}_\text{sup}(\pi, c, m, \pi^\ast_{o})\right| \\
    & \leq &
    2 \times \left\{R_\text{sup}(\hat{\pi}_\mathbbm{1}, \pi^\mathbbm{1}) -  R_\text{sup}(\pi^\ast_\mathbbm{1}, \pi^\mathbbm{1})\right\} +  2 \times \left\{R_\text{sup}(\hat{\pi}_\mathbbm{O}, \pi^\mathbbm{O}) -  R_\text{sup}(\pi^\ast_\mathbbm{O}, \pi^\mathbbm{O})\right\}\\
    && \quad +  (u_g - u_l) \times \left(R_+(\hat{\delta}_+) + R_\tau(\hat{\delta}_\tau) + \frac{t}{2\sqrt{n}}\right),
  \end{eqnarray*}
  with probability at least $1 - 2e^{-\frac{t^2}{2}}$.
\end{lemma}

\begin{proof}[Proof of Lemma~\ref{lem:regret_oracle}]
  Define
  \begin{align*}
    \check{L}_b(x)& =  \{u_l + (u_g - u_l)\hat{\delta}_\tau(x)\}m(1,x) - \{u_l + (u_g - u_l)\hat{\delta}_\tau(x)\}m(0,x) - c,\\
    \check{U}_b(x) & = \{u_g - (u_g - u_l)\hat{\delta}_+(x)\}m(1,x) - \{u_l + (u_g - u_l)\hat{\delta}_+(x)\}m(0,x) + (u_g - u_l)\hat{\delta}_+(x) - c,\\
    Q(x) & = \pi^\ast_\mathbbm{O}(x)U_b(x) + (1 - \pi^\ast_{\mathbbm{1}}(x)) L_b(x) + (1 - \pi^\ast_\mathbbm{O}(x))\pi^\ast_\mathbbm{1}(x)(U_b(x) + L_b(x)),\\
    \tilde{Q}(x) & = \hat{\pi}_\mathbbm{O}(x)U_b(x) + (1 - \hat{\pi}_{\mathbbm{1}}(x)) L_b(x) + (1 - \hat{\pi}_\mathbbm{O}(x))\hat{\pi}_\mathbbm{1}(x)(U_b(x) + L_b(x)),\\
    \check{Q}(x) & = \hat{\pi}_\mathbbm{O}(x)\check{U}_b(x) + (1 - \hat{\pi}_{\mathbbm{1}}(x)) \check{L}_b(x) + (1 - \hat{\pi}_\mathbbm{O}(x))\hat{\pi}_\mathbbm{1}(x)(\check{U}_b(x) + \check{L}_b(x)).
  \end{align*}

  With these definitions, we can write
  \begin{align*}
    \left|\tilde{R}_\text{sup}(\pi, \hat{c}, m; \pi^\ast_{o}) - \tilde{R}_\text{sup}(\pi, c, m, \pi^\ast_{o})\right| & = \left|\frac{1}{n}\sum_{i=1}^n \pi(X) \{Q(X_i) - \check{Q}(X_i)\}\right|\\
    & = \left|\frac{1}{n}\sum_{i=1}^n \pi(X) \{Q(X_i) - \tilde{Q}(X_i)\} + \frac{1}{n}\sum_{i=1}^n \pi(X) \{\tilde{Q}(X_i) - \check{Q}(X_i)\}\right|\\
    & \leq \frac{1}{n}\sum_{i=1}^n \left|Q(X_i) - \tilde{Q}(X_i)\right| + \frac{1}{n}\sum_{i=1}^n \left|\tilde{Q}(X_i) - \check{Q}(X_i)\right|.
  \end{align*}
  Working with the first term:
  \begin{align*}
     Q(x) - \tilde{Q}(x)  & = (\hat{\pi}_\mathbbm{1}(x) - \pi^\ast_\mathbbm{1}(x)) U_b(x) - (\hat{\pi}_\mathbbm{1}(x) \hat{\pi}_\mathbbm{O}(x) -  \pi^\ast_\mathbbm{1}(x)\pi^\ast_\mathbbm{O}(x) )(U_b(x) + L_b(x)) + (\hat{\pi}_\mathbbm{O}(x) - \pi^\ast_\mathbbm{O}(x)) U_b(x)\\
     & = (\hat{\pi}_\mathbbm{O}(x) - \pi^\ast_\mathbbm{O}(x)) \times (-L_b(x) \pi^\ast_\mathbbm{1}(x) + (1 - \pi^\ast_\mathbbm{1}(x)) U_b(x)) \tag{$\ast$}\\
     & \qquad + (\hat{\pi}_\mathbbm{1}(x) - \pi^\ast_\mathbbm{1}(x)) \times (-L_b(x) \hat{\pi}_\mathbbm{O}(x) + (1 - \hat{\pi}_\mathbbm{O}(x)) U_b(x) \tag{$\ast \ast$}) 
  \end{align*}

  Notice that $\pi^\ast_\mathbbm{1}(x) = 0 \Leftrightarrow U_b(x) \leq 0$, since $L_b(x) \leq U_b(x)$, this implies that when $\pi^\ast_\mathbbm{1}(x) = 0$, $|U_b(x)| \leq |L_b(x)|$. Therefore,
  \[
    |(\ast)| \leq \bbone\{\hat{\pi}_\mathbbm{O}(x)  \neq \pi^\ast_\mathbbm{O}(x) \} |L_b(x)|.
  \]
  Similarly, if $\pi^\ast_\mathbbm{O}(x) = 1$, then $0 \leq L_b(x) \leq U_b(x)$, so $|L_b(x)| \leq |U_b(x)|$. So,
  \begin{align*}
    |(\ast \ast)| & \leq \bbone\{\hat{\pi}_\mathbbm{1}(x)  \neq \pi^\ast_\mathbbm{1}(x) \} \bbone\{\hat{\pi}_\mathbbm{O}(x)  = \hat{\pi}_\mathbbm{O}(x)\}|-L_b(x) \pi^\ast_\mathbbm{O}(x) + (1 - \pi^\ast_\mathbbm{O}(x)) U_b(x)|\\
    & \qquad  + \bbone\{\hat{\pi}_\mathbbm{O}(x)  \neq \pi^\ast_\mathbbm{O}(x) \} \bbone\{\hat{\pi}_\mathbbm{1}(x)  \neq \pi^\ast_\mathbbm{1}(x) \} |-\hat{\pi}_\mathbbm{O}(x)L_b(x) + (1 - \hat{\pi}_\mathbbm{O}(x))U_b(x)|\\
    & \leq  \bbone\{\hat{\pi}_\mathbbm{1}(x)  \neq \pi^\ast_\mathbbm{1}(x) \} \bbone\{\hat{\pi}_\mathbbm{O}(x)  = \hat{\pi}_\mathbbm{O}(x)\}|U_b(x)|\\
    & \qquad + \bbone\{\hat{\pi}_\mathbbm{O}(x)  \neq \pi^\ast_\mathbbm{O}(x) \} \bbone\{\hat{\pi}_\mathbbm{1}(x)  \neq \pi^\ast_\mathbbm{1}(x) \} |L_b(x)| + \bbone\{\hat{\pi}_\mathbbm{O}(x)  \neq \pi^\ast_\mathbbm{O}(x) \} \bbone\{\hat{\pi}_\mathbbm{1}(x)  \neq \pi^\ast_\mathbbm{1}(x) \} |U_b(x)|\\
    & \leq  \bbone\{\hat{\pi}_\mathbbm{1}(x)  \neq \pi^\ast_\mathbbm{1}(x) \} |U_b(x)| + \bbone\{\hat{\pi}_\mathbbm{O}(x)  \neq \pi^\ast_\mathbbm{O}(x) \} |L_b(x)| + \bbone\{\hat{\pi}_\mathbbm{1}(x)  \neq \pi^\ast_\mathbbm{1}(x) \} |U_b(x)|\\
    & \leq 2\bbone\{\hat{\pi}_\mathbbm{1}(x)  \neq \pi^\ast_\mathbbm{1}(x) \} |U_b(x)|  + \bbone\{\hat{\pi}_\mathbbm{O}(x)  \neq \pi^\ast_\mathbbm{O}(x) \} |L_b(x)| .
  \end{align*}

  Putting together the pieces, we get that
  \[
    |Q(x) - \tilde{Q}(x) | \leq 2\bbone\{\hat{\pi}_\mathbbm{O}(x)  \neq \pi^\ast_\mathbbm{O}(x) \} |L_b(x)| + 2\bbone\{\hat{\pi}_\mathbbm{1}(x)  \neq \pi^\ast_\mathbbm{1}(x) \} |U_b(x)|.
  \]
  So the expectation is bounded by two regret terms:
  \begin{align*}
    \E\left[\frac{1}{n}\sum_{i=1}^n \left|Q(X_i) - \tilde{Q}(X_i)\right| \right] & \leq 2 \E\left[\bbone\{\hat{\pi}_\mathbbm{O}(X)  \neq \pi^\ast_\mathbbm{O}(X) \} |L_b(X)| \right] + 2\E\left[\bbone\{\hat{\pi}_\mathbbm{1}(X)  \neq \pi^\ast_\mathbbm{1}(X) \} |U_b(X)|\right]\\
    & = 2 \times \{R_\text{sup}(\hat{\pi}_\mathbbm{1}, \pi^\mathbbm{1}) -  R_\text{sup}(\pi^\ast_\mathbbm{1}, \pi^\mathbbm{1})\} +  2 \times \{R_\text{sup}(\hat{\pi}_\mathbbm{O}, \pi^\mathbbm{O}) -  R_\text{sup}(\pi^\ast_\mathbbm{O}, \pi^\mathbbm{O})\}.
  \end{align*}
  Next, $\left|Q(X_i) - \tilde{Q}(X_i)\right| $ is bounded between 0 and $u_g - u_l$, so by the Hoeffding bound it concentrates around its expectation:
  \begin{eqnarray*}
  &&  \Pr\left( \frac{1}{n}\sum_{i=1}^n \left|Q(X_i) - \tilde{Q}(X_i)\right| \leq   2 \{R_\text{sup}(\hat{\pi}_\mathbbm{1}, \pi^\mathbbm{1}) -  R_\text{sup}(\pi^\ast_\mathbbm{1}, \pi^\mathbbm{1})\}+  2\{R_\text{sup}(\hat{\pi}_\mathbbm{O}, \pi^\mathbbm{O}) -  R_\text{sup}(\pi^\ast_\mathbbm{O}, \pi^\mathbbm{O})\}  + \frac{t}{\sqrt{n}}\right)\\
  &  \geq& 1 - \exp\left(-\frac{2t^2}{(u_g - u_l)^2}\right).    
  \end{eqnarray*}

  Now for the second term:
  \begin{align*}
    |\tilde{Q}(x) - \check{Q}(x)| & =  \left|(L_b(x) - \check{L}_b(x)) (1 - \hat{\pi}_\mathbbm{1} + (1 - \hat{\pi}_\mathbbm{O} )\hat{\pi}_\mathbbm{1} ) +  (U_b(x) - \check{U}_b(x))  (\hat{\pi}_\mathbbm{O} +  (1 - \hat{\pi}_\mathbbm{O})\hat{\pi}_\mathbbm{1}) \right|\\
    & \leq | L_b(x) - \check{L}_b(x)| +  |U_b(x) - \check{U}_b(x)|.
  \end{align*}
  To re-write this, notice that
  \begin{align*}
    | L_b(x) - \check{L}_b(x)| & = (u_g - u_l) \bbone\{\hat{\delta}_\tau(x) \neq \delta_\tau(x)\}|m(1,x) - m(0,x)|,\\
    | U_b(x) - \check{U}_b(x)| & = (u_g - u_l) \bbone\{\hat{\delta}_+(x) \neq \delta_+(x)\}|m(1,x) + m(0,x) - 1|.
  \end{align*}
  So,
  \[
    \frac{|\tilde{Q}(x) - \check{Q}(x)|}{u_g - u_l} \leq \bbone\{\hat{\delta}_\tau(x) \neq \delta_\tau(x)\}|m(1,x) - m(0,x)| + \bbone\{\hat{\delta}_+(x) \neq \delta_+(x)\}|m(1,x) + m(0,x) - 1|.
  \]

  Taking the expectation, we see that it is bounded by:
  \begin{align*}
    \frac{1}{u_g - u_l}\frac{1}{n}\sum_{i=1}^n \left|\tilde{Q}(X_i) - \check{Q}(X_i)\right| & \leq \E\left[ \bbone\{\hat{\delta}_\tau(x) \neq \delta_\tau(X)\}|m(1,X) - m(0,X)| \right]\\
    & \qquad + \E\left[\bbone\{\hat{\delta}_+(X) \neq \delta_+(x)\}|m(1,X) + m(0,X) - 1|\right]\\
    & = R_+(\hat{\delta}_+)  + R_\tau(\hat{\delta}_\tau) .
  \end{align*}
  Again noting that $|\tilde{Q}(X_i) - \check{Q}(X_i)|$ is bounded between 0 and $u_g - u_l$, and applying the Hoeffding inequality gives

  \begin{align*}
    \Pr\left( \frac{1}{n}\sum_{i=1}^n \left|\tilde{Q}(X_i) - \check{Q}(X_i)\right| \leq   (u_g - u_l) \times \left(R_+(\hat{\delta}_+) + R_\tau(\hat{\delta}_\tau)  + \frac{t}{\sqrt{n}}\right)\right)
    \geq 1 - \exp\left(-2t^2\right).    
  \end{align*}
  Combining these two bounds via the union bound gives the result.

\end{proof}

\begin{lemma}
  \label{lem:regret_posclass_margin}
  Let $\hat{\delta}_+(x) = \bbone\{\hat{m}(1,x) + \hat{m}(0,x) - 1 \geq 0\}$ and $\hat{\delta}_\tau(x) = \bbone\{\hat{m}(1,x) - \hat{m}(0,x)\}$.
  Under Assumption~\ref{a:margin_posclass},
  \begin{align*}
    R_+(\hat{\delta}_+) & \leq 2^{1+\alpha}C\|\hat{m} - m\|_\infty^{1 + \alpha},\\
    \Pr(\hat{\delta}_+(X) \neq \delta_+(X)) & \leq 2^{\alpha}C\|\hat{m} - m\|_\infty^\alpha.
  \end{align*}
  Under Assumption~\ref{a:margin_tau},
  \begin{align*}
    R_\tau(\hat{\delta}_\tau)  & \leq 2^{1+\alpha}C\|\hat{m} - m\|_\infty^{1 + \alpha},\\
    \Pr(\hat{\delta}_\tau(X) \neq \delta_\tau(X)) & \leq 2^{\alpha}C\|\hat{m} - m\|_\infty^\alpha.
  \end{align*}

\end{lemma}

\begin{proof}[Proof of Lemma~\ref{lem:regret_posclass_margin}]
  This Lemma directly follows Lemma 5.1 in \citet{Audibert2007}.
  Note that if $\hat{\delta}_+(x) \neq \delta_+(x)$, then the error 
  is greater than the margin, i.e., 
  \[|\hat{m}(1,x) - m(1,x) + \hat{m}(0,x) - m(0,x)| \geq |m(1,X) + m(0,X) - 1|\]
  So,
  \begin{align*}
    \Pr(\hat{\delta}_+(X) \neq \delta_+(X))  & \leq \Pr(|\hat{m}(1,X) - m(1,X) + \hat{m}(0,X) - m(0,X)| \geq |m(1,X) + m(0,X) - 1|)\\
    & \leq C(\|\hat{m}(1,\cdot) - m(1,\cdot)\|_\infty + \|\hat{m}(0,\cdot) - m(0,\cdot)\|_\infty)^\alpha.
  \end{align*}
  By a similar argument,
  \begin{align*}
    R_+(\hat{\delta}_+) - R_+(\delta_+)  & = \E\left[\bbone\left\{\hat{\delta}_+(X) \neq \delta_+(X)\right\}\left|m(1, X)  + m(0, X) - 1\right|\right]\\
    & \leq \E\left[\bbone\left\{|\hat{m}(1,X) - m(1,X) + \hat{m}(0,X) - m(0,X)| \geq |m(1,X) + m(0,X) - 1|\right\}\right.\\
    & \qquad \times \left. \left|m(1, X)  + m(0, X) - 1\right|\right]\\
    & \leq \E\left[\bbone\left\{|\hat{m}(1,X) - m(1,X) + \hat{m}(0,X) - m(0,X)| \geq |m(1,X) + m(0,X) - 1|\right\}\right.\\
    & \qquad \times \left. \left|m(1, X) -\hat{m}(1,X) + m(0, X) - \hat{m}(0, X)\right|\right]\\
    & \leq (\|\hat{m}(1,\cdot) - m(1,\cdot)\|_\infty + \|\hat{m}(0, \cdot) - m(0,\cdot)\|_\infty)\\
    & \qquad \times  \Pr(|\hat{m}(1,X) - m(1,X) + \hat{m}(0,X) - m(0,X)| \geq |m(1,X) + m(0,X) - 1|)\\
    & \leq C(\|\hat{m}(1,\cdot) - m(1,\cdot)\|_\infty + \|\hat{m}(0,\cdot) - m(0,\cdot)\|_\infty)^{1 + \alpha}.
  \end{align*}

  Similarly, if $\hat{\delta}_\tau(x) \neq \delta_\tau(x)$, then 
  \[
    |m(1,x) - \hat{m}(1,x) - m(0,x) + \hat{m}(0,x)| \geq |m(1,x) - m(0,x)|.
  \]
  By the same argument as above,
  \[
    \Pr(\hat{\delta}_\tau(X) \neq \delta_\tau(X)) \leq C(\|\hat{m}(1,\cdot) - m(1,\cdot)\|_\infty + \|\hat{m}(1,\cdot) - m(1,\cdot)\|_\infty)^\alpha,
  \]
  and
  \begin{align*}
    R_\tau(\hat{\delta}_\tau) - R_\tau(\delta_\tau)  & = \E\left[\bbone\left\{\hat{\delta}_\tau(X_i) \neq \delta_\tau(X_i)\right\}\left|m(1, X_i)  - m(0, X_i)\right|\right]\\
    & \leq \E\left[\bbone\left\{|m(1,X) - \hat{m}(1,X) - m(0,X) +  \hat{m}(0,X)| \geq |m(1,X) - m(0,X)|\right\}\right.\\
    & \qquad \times \left. \left|m(1, X)  - m(0, X) \right|\right]\\
    & \leq \E\left[\bbone\left\{|m(1,X) - \hat{m}(1,X) - m(0,X) +  \hat{m}(0,X)| \geq |m(1,X) - m(0,X)|\right\}\right.\\
    & \qquad \times \left. \left|m(1,X) - \hat{m}(1,X) - m(0,X) +  \hat{m}(0,X)\right|\right]\\
& \leq (\|\hat{m}(1,\cdot) - m(1,\cdot)\|_\infty + \|\hat{m}(0, \cdot) - m(0,\cdot)\|_\infty)\\
    & \qquad \times  \Pr(|m(1,X) - \hat{m}(1,X) - m(0,X) +  \hat{m}(0,X)| \geq |m(1,X) - m(0,X)|)\\
    & \leq C(\|\hat{m}(1,\cdot) - m(1,\cdot)\|_\infty + \|\hat{m}(0,\cdot) - m(0,\cdot)\|_\infty)^{1 + \alpha}.
  \end{align*}

\end{proof}

\begin{lemma}
  \label{lem:regret_plugin_margin}
  Let $u_g \geq u_l$. Define 
  \begin{align*}
    \widehat{L}_b(x) & = \{u_l + \hat{\delta}_\tau(x)(u_g - u_l)\} \{\hat{m}(1,x) - \hat{m}(0,x)\} - c,\\
    \widehat{U}_b(x) & = \{u_g - (u_g - u_l)\hat{\delta}_+(x)\}\hat{m}(1,x) - \{u_l + (u_g - u_l)\hat{\delta}_+(x)\}\hat{m}(0,x) + (u_g - u_l)\hat{\delta}_+(x) - c.
  \end{align*}
  and let $\hat{\pi}_\mathbbm{O}^\text{plug}(x) = \bbone\{\widehat{L}_b(x)\geq 0\}$ and 
  $\hat{\pi}_\mathbbm{1}^\text{plug}(x) = \bbone\{\widehat{U}_b(x) \geq 0\}$ be the plug-in
  estimates of the minimax optimal policies relative to never or always treating.
  Under Assumption~\ref{a:margin_lower}, the excess
  worst case regret for $\hat{\pi}_\mathbbm{O}^\text{plug}$ relative to
  $\pi_\mathbbm{O}^\ast$ is
  \[
    R_\text{sup}(\hat{\pi}^\text{plug}_\mathbbm{O}, \pi^\mathbbm{O}) -  R_\text{sup}(\pi^\ast_\mathbbm{O}, \pi^\mathbbm{O}) \leq u_g^\alpha C(2\|m - \hat{m}\|_\infty)^{1 + \alpha} + 2u_g C\|m - \hat{m}\|_\infty \Pr\left(\hat{\delta}_\tau(X) \neq \delta_\tau(X)\right) + (u_g - u_l) R_\tau(\hat{\delta}_\tau).
  \]
  Under Assumption~\ref{a:margin_upper}, the excess
  worst case regret for $\hat{\pi}_\mathbbm{1}^\text{plug}$ relative to
  $\pi_\mathbbm{1}^\ast$ is
  \[
    R_\text{sup}(\hat{\pi}^\text{plug}_\mathbbm{1}, \pi^\mathbbm{1}) -  R_\text{sup}(\pi^\ast_\mathbbm{O}, \pi^\mathbbm{O}) \leq u_g^\alpha C(2\|m - \hat{m}\|_\infty)^{1 + \alpha} + 2u_g C\|m - \hat{m}\|_\infty \Pr\left(\hat{\delta}_+(X) \neq \delta_+(X)\right) + (u_g - u_l) R_+(\hat{\delta}_+).
  \]
\end{lemma}

\begin{proof}[Proof of Lemma~\ref{lem:regret_plugin_margin}]

  First, as in the proof of Lemma~\ref{lem:regret_posclass_margin}, note that
  $\hat{\pi}_\mathbbm{O}^\text{plug}(x) \neq \pi_\mathbbm{O}^\ast(x)$ implies
  that $|L_b(x) - \widehat{L}_b(x)| \geq |L_b(x)|$. Now, if $\hat{\delta}_\tau(x) = \delta_\tau(x)$,
  then 
  \begin{align*}
    |L_b(x) - \widehat{L}_b(x)| & = |((1 - \delta_\tau(x))u_l + \delta_\tau(x) u_g)(m(1,x) - \hat{m}(1,x) - m(0,x) + \hat{m}(0,x))|\\
    & \leq u_g |m(1,x) - \hat{m}(1,x) - m(0,x) + \hat{m}(0,x)|,
  \end{align*}
  because $|(1 - \delta_\tau(x)) u_l + \delta_\tau(x)u_g| = |u_l + (u_g - u_l) \delta_\tau(X)| \leq \max\{u_g, u_l\} \leq u_g$ in the case where $u_g \geq u_l$.
  If $\hat{\delta}_\tau(x) \neq \delta_\tau(x)$ and $\delta_\tau(x) = 1$, we have that
  \begin{align*}
    |L_b(x) - \widehat{L}_b(x)| & = |u_l(m(1,x) - \hat{m}(1,x) - m(0,x) + \hat{m}(0,x)) + (u_g - u_l)(m(1,x) - m(0,x))|\\
    & \leq u_l |m(1,x) - \hat{m}(1,x) - m(0,x) + \hat{m}(0,x)| + (u_g - u_l)|m(1,x) - m(0,x)|\\
    & \leq u_g |m(1,x) - \hat{m}(1,x) - m(0,x) + \hat{m}(0,x)| + (u_g - u_l)|m(1,x) - m(0,x)|.
  \end{align*}
  Similarly, if  $\hat{\delta}_\tau(x) \neq \delta_\tau(x)$ and $\delta_\tau(x) = 0$,
  \begin{align*}
    |L_b(x) - \widehat{L}_b(x)| & = |u_g(m(1,x) - \hat{m}(1,x) - m(0,x) + \hat{m}(0,x)) - (u_g - u_l)(m(1,x) - m(0,x))|\\
    & \leq u_g |m(1,x) - \hat{m}(1,x) - m(0,x) + \hat{m}(0,x)| + (u_g - u_l)|m(1,x) - m(0,x)|.
  \end{align*}

  Putting together the pieces, we get that
  \begin{align*}
    R_\text{sup}(\hat{\pi}^\text{plug}_\mathbbm{O}, \pi^\mathbbm{O}) -  R_\text{sup}(\pi^\ast_\mathbbm{O}, \pi^\mathbbm{O}) & = \E\left[ \bbone\{\hat{\pi}^\text{plug}_\mathbbm{O} \neq \pi^\ast_\mathbbm{O}\}|L_b(x)|\right]\\
    & \leq \E\left[\bbone\{|L_b(X) - \widehat{L}_b(X)| \geq |L_b(X)|\}|L(X)|\right]\\
    & \leq \E\left[\bbone\{|L_b(X) - \widehat{L}_b(X)| \geq |L_b(X)|\}|L_b(X) - \widehat{L}_b(X)|\right]\\
    & = \E\left[\bbone\{|L_b(X) - \widehat{L}_b(X)| \geq |L_b(X)|\}|L_b(X) - \widehat{L}_b(X)| \bbone\{\hat{\delta}_\tau(X) = \delta_\tau(X)\}\right] \tag{$\ast$}\\
    & \qquad + \E\left[\bbone\{|L_b(X) - \widehat{L}_b(X)| \geq |L_b(X)|\}|L_b(X) - \widehat{L}_b(X)| \bbone\{\hat{\delta}_\tau(X) \neq \delta_\tau(X)\}\right]. \tag{$\ast \ast$}
  \end{align*}
  By H\"{o}lder's inequality and the margin condition (Assumption~\ref{a:margin_lower}), the first term is
  \begin{align*}
    (\ast) & \leq \E\left[\bbone\{|((1 - \delta_\tau(x))u_l + \delta_\tau(X) u_g)(m(1,x) - \hat{m}(1,x) - m(0,x) + \hat{m}(0,x))|\geq |L(X)|\} \right.\\
    & \qquad \times \left.|m(1,X) - \hat{m}(1,X) - m(0,X) + \hat{m}(0,X)| \right]\\
    & \leq \E\left[\bbone\{u_g|m(1,x) - \hat{m}(1,x) - m(0,x) + \hat{m}(0,x)|\geq |L(X)|\} \right.\\
    & \qquad \times \left.|m(1,X) - \hat{m}(1,X) - m(0,X) + \hat{m}(0,X)| \right]\\
    & \leq \E\left[\bbone\{u_g|m(1,x) - \hat{m}(1,x) - m(0,x) + \hat{m}(0,x)|\geq |L(X)|\} \right] \times 2\|m - \hat{m}\|_\infty\\
    & \leq Cu_g^\alpha (2\|m - \hat{m}\|_\infty)^{1 + \alpha}.
  \end{align*}
  
  Similarly, we can bound the second term as 
  \begin{align*}
    (\ast \ast) & \leq \E\left[|L_b(X) - \widehat{L}_b(X)| \bbone\{\hat{\delta}_\tau(X) \neq \delta_\tau(X)\}\right]\\
    & \leq  \E\left[u_g|m(1,X) - \hat{m}(1,X) - m(0,X) + \hat{m}(0,X)|\bbone\{\hat{\delta}_\tau(X) \neq \delta_\tau(X)\}\right]\\
    & \qquad + (u_g - u_l)\E\left[|m(1,X) - m(0,X)|\bbone\{\hat{\delta}_\tau(X) \neq \delta_\tau(X)\}\right]\\
    & \leq u_g 2C\|m - \hat{m}\|_\infty \Pr\left(\hat{\delta}_\tau(X) \neq \delta_\tau(X)\right) + (u_g - u_l) R_\tau(\hat{\delta}_\tau).
  \end{align*}
  Combining these two terms gives the first result.

  Now, also note that
  $\hat{\pi}_\mathbbm{1}^\text{plug}(x) \neq \pi_\mathbbm{1}^\ast(x)$ implies
  that $|U_b(x) - \widehat{U}_b(x)| \geq |U_b(x)|$. We again break this error term
  into cases depending on $\hat{\delta}_+(x)$ and $\delta_+(x)$.
  First, if $\hat{\delta}_+(x) = \delta_+(x)$, then
  \begin{align*}
    |U_b(x) - \widehat{U}_b(x)| & = \left\{
      \begin{array}{c c}
        |u_g(m(1,x) - \hat{m}(1,x)) - u_l (m(0,x) - \hat{m}(0,x))|, & \delta_+(x) = 0\\
        |u_l(m(1,x) - \hat{m}(1,x)) - u_g (m(0,x) - \hat{m}(0,x))|, & \delta_+(x) = 1\\
      \end{array}
    \right.\\
    & \leq u_g |m(1,x) - \hat{m}(1,x)| + u_g|m(0,x) - \hat{m}(0,x)|.
  \end{align*}
  If $\hat{\delta}_+(x) \neq \delta_+(x)$
  \begin{align*}
    |U_b(x) - \widehat{U}_b(x)| & = \left\{
      \begin{array}{c c}
        |u_g(m(1,x) - \hat{m}(1,x)) - u_l (m(0,x) - \hat{m}(0,x)) + (u_g - u_l) (m(1,x) + m(0,x) - 1)|, & \delta_+(x) = 0\\
        |u_l(m(1,x) - \hat{m}(1,x)) - u_g (m(0,x) - \hat{m}(0,x))| - (u_g - u_l) (m(1,x) + m(0,x) - 1), & \delta_+(x) = 1\\
      \end{array}
    \right.\\
    & \leq u_g |m(1,x) - \hat{m}(1,x)| + u_g|m(0,x) - \hat{m}(0,x)| + (u_g - u_l)|m(1,x) + m(0,x) - 1| .
  \end{align*}
  Mirroring the decomposition above, we have that
    
  \begin{align*}
    R_\text{sup}(\hat{\pi}^\text{plug}_\mathbbm{1}, \pi^\mathbbm{1}) -  R_\text{sup}(\pi^\ast_\mathbbm{1}, \pi^\mathbbm{1}) & = \E\left[ \bbone\{\hat{\pi}^\text{plug}_\mathbbm{1} \neq \pi^\ast_\mathbbm{1}\}|U_b(x)|\right]\\
    & \leq \E\left[\bbone\{|U_b(X) - \widehat{U}_b(X)| \geq |U_b(X)|\}|U_b(X)|\right]\\
    & \leq \E\left[\bbone\{|U_b(X) - \widehat{U}_b(X)| \geq |U_b(X)|\}|U_b(X) - \widehat{U}_b(X)|\right]\\
    & = \E\left[\bbone\{|U_b(X) - \widehat{U}_b(X)| \geq |U_b(X)|\}|U_b(X) - \widehat{U}_b(X)| \bbone\{\hat{\delta}_+(X) = \delta_+(X)\}\right]\\
    & \qquad + \E\left[\bbone\{|U_b(X) - \widehat{U}_b(X)| \geq |U_b(X)|\}|U_b(X) - \widehat{U}_b(X)| \bbone\{\hat{\delta}_+(X) \neq \delta_+(X)\}\right]\\
    & \leq \E\left[\bbone\{u_g |m(1,X) - \hat{m}(1,X)| + u_g|m(0,X) - \hat{m}(0,X)| \geq |U_b(X)|\} \right.\\
    & \qquad \qquad \left.  \times \left(u_g|m(1,X) - \hat{m}(1,X)| + u_g|m(0,X) - \hat{m}(0,X)|\right) \right]\\
    & \qquad + \E\left[u_g |m(1,x) - \hat{m}(1,x)| \bbone\{\hat{\delta}_+(X) \neq \delta_+(X)\}\right]\\
    & \qquad  + \E\left[u_g|m(0,x) - \hat{m}(0,x)| \bbone\{\hat{\delta}_+(X) \neq \delta_+(X)\}\right]\\
    & \qquad + \E\left[(u_g - u_l)|m(1,x) + m(0,x) - 1| \bbone\{\hat{\delta}_+(X) \neq \delta_+(X)\}\right]\\
    & \leq u_g^\alpha C (2\|m - \hat{m}\|_\infty)^{1 + \alpha} + u_g C 2\|m - \hat{m}\|_\infty P(\hat{\delta}_+(X) \neq \delta_+(X))\\
    & \qquad  + (u_g - u_l) R_+(\hat{\delta}_+).
  \end{align*}
\end{proof}

\pdfbookmark[1]{References}{References}
\spacingset{1.56}
\bibliographystyle{apalike}
\bibliography{citations}

\end{document}